\DeclareMathOperator*{\argmin}{arg\,min}
\Crefname{assumption}{Assumption}{Assumptions}
\theoremstyle{plain}
\newtheorem{theorem}{Theorem}
\newtheorem{lemma}{Lemma}
\newtheorem{corollary}{Corollary}
\newtheorem{assumption}{Assumption}
\newtheorem{proposition}{Proposition}
\newtheorem{definition}{Definition}
\newtheorem{remark}{Remark}
\newcommand{\op}{\mathrm{o}_{p}}
\newcommand{\pa}{\mathrm{\pa}}
\renewcommand{\eqref}[1]{(\ref{#1})}
\newcommand{\RN}[1]{%
  \textup{\uppercase\expandafter{\romannumeral#1}}%
}
\def\boxit#1{\vbox{\hrule\hbox{\vrule\kern6pt\vbox{\kern6pt#1\kern6pt}\kern6pt\vrule}\hrule}}
\title{Efficient Adaptive Experimental Design\\
for Average Treatment Effect Estimation}
\author[1,2]{Masahiro Kato\thanks{Email: \texttt{mkato-csecon@g.ecc.u-tokyo.ac.jp}}$\,$}
\author[3]{Takuya Ishihara}
\author[4]{Junya Honda}
\author[5]{Yusuke Narita}
\affil[1]{The University of Tokyo}
\affil[2]{Mizuho-DL Financial Technology Co., Ltd.}
\affil[3]{Tohoku University}
\affil[4]{Kyoto University}
\affil[5]{Yale University}
\date{First version:  Feb 2020,  This version is of  \today.
 \\ \indent JEL Classification: C9, C14, C44.}
\begin{document}

\maketitle

\begin{abstract}
We study how to efficiently estimate average treatment effects (ATEs) using adaptive experiments. In adaptive experiments, experimenters sequentially assign treatments to experimental units while updating treatment assignment probabilities based on past data. We start by defining the efficient treatment-assignment probability, which minimizes the semiparametric efficiency bound for ATE estimation. 
Our proposed experimental design estimates and uses the efficient treatment-assignment probability 
to assign treatments. At the end of the proposed design, the experimenter estimates the ATE using a newly proposed Adaptive Augmented Inverse Probability Weighting (A2IPW) estimator. We show that the asymptotic variance of the A2IPW estimator using data from the proposed design achieves the minimized semiparametric efficiency bound. 
We also analyze the estimator's finite-sample properties and develop nonparametric and nonasymptotic confidence intervals that are valid at any round of the proposed design. These anytime valid confidence intervals allow us to conduct rate-optimal sequential hypothesis testing, allowing for early stopping and reducing necessary sample size. 
\footnote{This paper was first made public in February 13, 2020 at \url{https://arxiv.org/abs/2002.05308v1}, and was  presented at the NeurIPS 2020 Workshop on Causal Discovery \& Causality-Inspired Machine Learning, as well as workshops at the University of Tokyo and Keio University. We thank the seminar participants for their valuable comments and feedback. We would also like to express our deep gratitude to Professor Hidehiko Ichimura for his insightful guidance and encouragement.}
\end{abstract}

\clearpage

\section{Introduction}
Adaptive experiments are increasingly common in the social sciences, the tech industry, and medicine. In adaptive experiments, experimenters sequentially assign treatments to experimental units while updating treatment assignment probabilities based on past data. 
Compared to the non-adaptive randomized control trial (RCT), adaptive designs often allow experimenters to more efficiently or quickly detect causal effects, thus exposing fewer experimental units to costly or harmful treatments. This merit has led organizations such as the US Food and Drug Administration to recommend adaptive designs \citep{fda}. Adaptive experiments also produce social and economic applications and spark theoretical interest. 

This paper studies how to design an adaptive experiment for efficient estimation of the average effects of treatment (ATE) and hypothesis testing. 
Let $Y(1), Y(0) \in \mathcal{Y}$ be potential outcomes of treatment $1$ and control $0$, respectively, where $\mathcal{Y} \subset \mathbb{R}$ is a bounded outcome space (see Assumption~\ref{asm:boundedness}). Let $X \in \mathcal{X}$ be covariates, where $\mathcal{X}$ represents a space of covariates. The random variables $(X, Y(1), Y(0))$ jointly follow an unknown distribution $P_0 \in \mathcal{P}$, where $\mathcal{P}$ is the set of the distributions over $(X, Y(1), Y(0))$.
We are interested in the estimation of \emph{average treatment effect} (ATE), defined as 
\begin{align*}
    \theta_0 \coloneqq \mathbb{E}[Y(1)] - \mathbb{E}[Y(0)],
\end{align*}
where $\mathbb{E}[Y(a)]$ denotes the mean potential outcome for each treatment $a\in\{1, 0\}$. 
The experiment involves $T \in \mathbb{N}$ experimental units, who are assigned to the treatment ($1$) or the control ($0$). For each $t \in [T]$, let $(X_t, Y_t(1), Y_t(0))$ be an i.i.d. draw of $(X, Y(1), Y(0))$ following the distribution $P_0$.

We propose the following adaptive experiment consisting of (1) a treatment-assignment phase and (2) an ATE-estimation phase using a novel estimator. :
\begin{itemize} 
\item Step 1. Treatment-assignment phase: 
\begin{itemize}
    \item In each round $t \in [T] = {1, 2, \dots, T}$, an experimental unit with covariate $X_t \in \mathcal{X}$ visits the experimenter;
    \item The experimenter assigns treatment $A_t \in \{1, 0\}$ with probability $\pi_t(a \mid X_t, \mathcal{H}_{t-1})$, based on the covariate $X_t$ and past observations \[\mathcal{H}_{t-1} \coloneqq \{X_1, A_1, Y_1, X_2, \dots, Y_{t-2}, X_{t-1}, A_{t-1}, Y_{t-1}\},\] where $Y_t = \mathbbm{1}[A_t = 1]Y_t(1) + \mathbbm{1}[A_t = 0]Y_t(0)$ is the observed outcome; 
    \item After treatment assignment, the experimenter observes the outcome $Y_t \in \mathbb{R}$; 
\end{itemize}
\item Step 2. ATE-estimation phase: 
\begin{itemize}
    \item We estimate ATE $\theta_0$ using observations
\[\mathcal{H}_{T} = \{(X_i, A_i, Y_i)\}^T_{i=1}.\] 
\end{itemize}
\end{itemize}

The treatment-assignment probability can be updated after each round based on the observations collected up to that point. Our method is also applicable to batch settings, where updates occur only in specified rounds. The treatment-assignment probability $\pi_t$ is usually called a propensity score in observational studies. 

Note that from our assumption that $(X_t, Y_t(1), Y_t(0))$ is i.i.d. over $t\in[T]$, the Stable Unit Treatment Value Assumption (SUTVA) holds \citep{imbens_rubin_2015}. Furthermore, unconfoundedness also holds from the construction of the treatment assignment probability $\pi_t(a\mid X_t, \mathcal{H}_{t-1})$; that is, outcomes $(Y_t(1), Y_t(0))$ and treatment $A_t$ are conditionally independent given $X_t$ and $\mathcal{H}_{t-1}$. 

In addition to ATE estimation, we also analyze hypothesis testing about $\theta_0$ with null and alternative hypotheses defined for some $\mu \in \mathbb{R}$ as 
\begin{align}
\label{eq:hypothesis}
H_0: \theta_0 = \mu, \ \ \ H_1: \theta_0 \neq \mu.
\end{align}

We begin by investigating the semiparametric efficiency bound for ATE estimators. Following the approach of \citet{Hahn2011}, we minimize the semiparametric efficiency bound with respect to treatment-assignment probabilities and define the minimizer as the efficient treatment-assignment probability. 
This efficient treatment-assignment probability is expressed as the ratio of the covariate-conditional standard deviations of the potential outcomes. This treatment-assignment probability is a variant of the one proposed in \citet{Neyman1934OnTT}, which recently has been called the Neyman allocation.

Step (1) of our adaptive experiment sequentially estimates these conditional standard deviations, calculates the efficient treatment-assignment probability, and assigns treatment based on this estimate. 
To implement Step (2) of efficient ATE estimation, we introduce and use an ATE estimator, which we call the Adaptive Augmented Inverse Probability Weighting (A2IPW) estimator, which is a variant of the  Augmented Inverse Probability Weighting (AIPW) estimator designed for adaptive experiments \citep{BangRobins2005}.


We analyze both the infinite-sample and finite-sample properties of the A2IPW estimator. In the infinite-sample analysis, we demonstrate its consistency and asymptotic normality, showing that its asymptotic variance reaches the minimized semiparametric efficiency bound. 

We then study hypothesis testing under two frameworks: single-stage testing and sequential testing. In the single-stage approach, we perform standard hypothesis testing by constructing confidence intervals with a fixed sample size to decide whether to reject the null hypothesis. In the sequential testing approach, the sample size is not fixed; instead, we continue collecting data until a decision can be made with a predetermined Type~I error probability. Sequential testing has the potential to reduce the sample size by stopping the adaptive experiment early. 

We propose a sequential testing procedure based on the finite-sample analysis of our estimator. Specifically, we derive a confidence interval that is nonparametric and non-asymptotic; it does not rely on a distributional assumption and an asymptotic approximation. 
We derive our confidence interval based on the Law of the Iterated Logarithm \citep[LIL,][]{Balsubramani2016,Howard2020TimeuniformNN}. In addition, our confidence intervals are Bernstein-type and use information about the variance of potential outcomes. 
As a result, our sequential testing with LIL-type anytime valid confidence intervals is rate-optimal for stopping time and effectively reduces the sample size \citep{Jamieson2014}. 
In particular, our confidence intervals are narrower than other confidence intervals, such as those based on Hoeffding's inequality, which rely solely on the boundedness of outcomes. 

\color{black}

\subsection{Related Work}
This study contributes to the growing work on adaptive experimental design for efficient estimation and inference of treatment effects. Important problems include how to design treatment assignment probabilities \citep{Hahn2011} and how to make statistical decisions \citep{Manski2000}. This paper addresses those problems by designing an adaptive experiment for efficiently estimating the ATE with associated hypothesis testing and decision-making methods. 


Compared to existing studies such as \citet{Hahn2011}, our adaptive experiment offers the following advantages: 
\begin{description}
    \item[Flexible sample size computation:] Our proposed design does not require dividing experimental units into discrete prespecified batches (though our design can also be used in such batch settings). Without prefixing the sample size for batches, our approach allows for the sequential construction of the optimal treatment assignment. 
    \item[Semiparametric inference without the Donsker condition:] Our experiment does not require the Donsker condition for the estimators of the nuisance parameters (i.e., the conditional expected outcome and the efficient treatment-assignment probability). Instead, we impose convergence rate conditions for the estimators, similar to double machine learning in \citet{ChernozhukovVictor2018Dmlf}. This flexibility allows us to use a variety of machine learning estimators for estimating nuisance parameters.
    \item[Weaker assumptions on the covariate distribution:] We do not require specific assumptions (such as discrete support) on the covariate distribution as long as the convergence rate conditions are satisfied.
\end{description}
Furthermore, our study examines the finite sample properties of ATE estimation and the sequential testing method.


\citet{Kato2021adr} complements this work by highlighting that the proposed A2IPW estimator is a variant of double machine learning. They generalize the A2IPW estimator into the Adaptive Doubly Robust (ADR) estimator, which enables the estimation of the treatment-assignment probability. Their findings indicate that empirical performance can be improved by replacing the treatment-assignment probability with its estimator, even when the true value of the treatment-assignment probability is known. For a detailed discussion of double machine learning in adaptive experiments, see their paper. \citet{kato2021adaptivedoublyrobustestimator} further extends the ADR estimator for the case where the average of the treatment-assignment probability converges to a constant, even if the probability itself does not converge.

Our method and the framework for adaptive experimental design for ATE estimation have  been extended in various directions. Some works have relaxed our assumptions \citep{cook2023semiparametric, Waudby-Smith2024}, and others have adapted our proposed estimator for cases with unknown treatment-assignment probabilities \citep{Kato2021adr, li2023double}. \citet{Vikas2024} refines the asymptotic optimality in this problem. \citet{gupta2021efficient} and \citet{chandak2024adaptive} address endogeneity problems with instrumental variables, while \citet{li2024privacy} explore privacy-preserving aspects. \citet{Simchi-Levi2023} investigates the setting under nonstationarity. \citet{zrnic2024active}, \citet{kato2024active} and \citet{ao2024predictionguidedactiveexperiments} introduce the idea of active learning for this problem setting. 

The framework of \citet{Hahn2011} is called a stratified experiment, where experimental units are divided into several strata based on their covariates \citep{Bugni2018,Bugni2019}. 
Concurrently with our work, \citet{Meehan2022} proposes a stratification method based on a tree-based algorithm within a two-stage experimental framework to relax the assumption of discrete support in \citet{Hahn2011}. In contrast, our algorithm does not depend on specific models or algorithms for determining treatment-assignment probabilities or for estimating the ATE. Instead, our method incorporates double machine learning techniques into our experimental design \citep{ChernozhukovVictor2018Dmlf}, allowing for a wide range of traditional and modern machine learning estimators. Furthermore, our method is applicable to various settings of adaptive experimental design, including two-stage, multi-stage, and sequential experiments. 

Furthermore, after the initial public draft of this paper \citep{Kato2020adaptive}, several related studies have emerged. \citet{Kallus2021}, \citet{bai2025efficiencyfinelystratifiedexperiments}, and \citet{rafi2023efficientsemiparametricestimationaverage} discuss efficiency bounds or efficient experiments under the stratification setting. \citet{Armstrong2022} and \citet{hirano2023asymptotic} investigate asymptotically optimal treatment rules in adaptive experiments. Furthermore, \citet{Cai2024} and \citet{zhao2023adaptive} investigate the Neyman allocation from perspectives different from ours.

We derive the asymptotic distribution of our A2IPW estimator using martingale theory. Notably, our asymptotic normality result does not require the Donsker condition for the nuisance parameter estimator. This approach is similar in spirit to sample-splitting methods used in the semiparametric analysis, such as double machine learning \citep{klaassen1987,ZhengWenjing2011CTME,ChernozhukovVictor2018Dmlf}. \citet{hadad2019} also independently proposes a closely related estimator, including ATE estimation, for bandit problems, focusing on cases where the treatment-assignment probability approaches zero at a certain rate with respect to $t$.

Efficient estimation with adaptive experiments is closely related to the Best Arm Identification (BAI) problem in multi-armed bandit (MAB) settings \citep{Bubeck2009, Kasy2021}. Neyman allocation is known to be optimal in BAI problems under certain conditions, such as Gaussian outcomes when variances are known \citep{chen2000, glynn2004large, Kaufman2016complexity}. When variances are unknown, our proposed A2IPW strategy is still optimal in BAI as the ATE approaches zero \citep{adusumilli2022minimax,kato2025generalizedneymanallocationlocally}. \citet{adusumilli2022minimax} proves that the Neyman allocation is minimax optimal for the BAI problem. \citet{Armstrong2022} and \citet{Adusumilli2021risk} study asymptotic treatment rules in adaptive experiments. In the setting of BAI, \citet{kato2025generalizedneymanallocationlocally} generalizes the Neyman allocation for the multi-armed case. In BAI problems with covariates, researchers investigate identifying the best treatment arm based on expected outcomes marginalized over the covariate distribution or the conditional on covariates \citep{Russac2021, kato2021role, simchi2024experimentation, kato2024adaptivepolicylearning}. \citet{SimchiLevi2023} and \citet{Caria2023} integrate the statistical inference problem with the regret minimization problem in MAB.

This study investigates the finite-sample property of the AIPW estimator in adaptive experiments. 
Our non-asymptotic error analysis is based on the law of the iterated logarithm \citep[LIL,][]{Darling1967,Howard2020TimeuniformNN}. The LIL plays an important role in finite-sample analysis and sequential testing since it is known to return tighter confidence intervals. \citet{Balsubramani2016} propose nonparametric sequential testing using the LIL, and we apply their results to adaptive ATE estimation with the A2IPW estimator. \citet{Smith2024} and \citet{Cai2024} also address the finite-sample analysis.

\subsection{Organization}
This study is organized as follows. In Section~\ref{sec:problem}, we introduce the data-generating process and discuss the semiparametric efficiency bound. In Section~\ref{sec:main_exp_design}, we design an adaptive experiment for efficient ATE estimation and powerful hypothesis testing, and we also propose the A2IPW estimator. Next, in Section~\ref{sec:estimator_prop}, we present the theoretical properties of our A2IPW estimator, focusing on its asymptotic normality, efficiency, and non-asymptotic results. Notably, its asymptotic variance aligns with the semiparametric efficiency bound. In Section~\ref{sec:analysis}, we examine hypothesis testing under our adaptive experimental framework. Finally, in Section~\ref{sec:exp}, we assess the empirical performance of the proposed method using both synthetic and semi-synthetic data. All Appendices are provided in the online supplementary materials.

\section{Semiparametric Efficiency Bound and Efficient Assignment Probability}
\label{sec:problem}

\subsection{Semiparametric Efficiency Bound in Adaptive Experimental Design}
This section provides a lower bound for the asymptotic variance of regular estimators of the ATE in adaptive experiments, following the arguments in \citet{Hahn2011}. Specifically, we focus on the semiparametric lower bound, which establishes a theoretical limit for the asymptotic variances of regular ATE estimators under semiparametric models.\footnote{The asymptotic variance can also be interpreted as the asymptotic mean squared error when the ATE estimator is asymptotically normal. Consequently, the semiparametric lower bound serves as a lower bound for the estimation error.}

Consider i.i.d. observations $\{(X_i, A_i, Y_i)\}_{i=1}^n$ generated from a distribution $P_0$ with a treatment-assignment probability $\pi_0(a \mid X_i)$. This treatment-assignment probability can be optimized for ATE estimation; thus, we refer to an algorithm with such a treatment-assignment probability as an \emph{oracle} algorithm. In this case, from Theorem~1 of \citet{Hahn1998}, the semiparametric efficiency bound is given as follows:

\begin{proposition}[Semiparametric efficiency bound of ATE estimators. Based on Theorem~1 of \citet{Hahn1998}.]
Suppose that the same regularity conditions assumed in Theorem~1 of \citet{Hahn1998} hold. Under an oracle algorithm with treatment-assignment probability $\pi_0$, the asymptotic variance of regular ATE estimators is lower bounded by 
\begin{align}
\label{eq:semipara_bound}
V(\pi_0) \coloneqq \mathbb{E}_{P_0}\left[\frac{\sigma^2_0\big(1, X\big)}{\pi_0(1 \mid X)} + \frac{\sigma^2_0\big(0, X\big)}{\pi_0(0 \mid X)} + \Big(\theta_0(X) - \theta_0\Big)^2\right],
\end{align}
where $\sigma^2_0\big(a, X\big)$ is the conditional variance of $Y(a)$ given $X$ for $a\in\{1, 0\}$. 
\end{proposition}

This proposition corresponds to the case where the oracle treatment-assignment probability $\pi_0$ is known in advance, eliminating the need for estimation during the adaptive experiment. In such a scenario, treatments are assigned directly using the oracle treatment-assignment probability. This static oracle algorithm serves as a benchmark in the study of adaptive experimental designs \citep{Hahn2011}.

If we restrict the algorithm to those where $\pi_t \xrightarrow{\mathrm{p}} \pi_0$ as $t \to \infty$, the result can be extended to non-i.i.d. observations using the martingale central limit theorem, as demonstrated in the derivation of asymptotic normality. Various extensions of lower bounds also have been proposed \citep{li2023double,rafi2023efficientsemiparametricestimationaverage}. 


\subsection{Efficient Treatment-assignment Probability}
In the semiparametric efficiency bound \eqref{eq:semipara_bound}, decision-makers can select $\pi_0$ to minimize the asymptotic variance. Denote the efficient treatment-assignment probability by 
\[
\pi^* \coloneqq \argmin_{\pi_0 \in \Pi} V(\pi_0).
\]
The minimization problem has a closed-form solution, as shown below:

\begin{proposition}[Efficient treatment-assignment probability]
\label{optprob}
The efficient treatment-assignment probability $\pi^*$ is:
\begin{align*}
\pi^*(a \mid x) = \frac{\sqrt{\sigma^2_0(a, x)}}{\sqrt{\sigma^2_0(1, x)} + \sqrt{\sigma^2_0(0, x)}}, \quad \forall a \in \{1, 0\},\ \forall x \in \mathcal{X}.
\end{align*}
\end{proposition}

The proof is presented in Appendix~\ref{appdx:prp:opt_prob}. 

Intuitively, conditional on $x$, the asymptotic variance can be minimized by assigning the treatment with a higher variance of the potential outcome. This treatment-assignment probability is recently referred to as the Neyman allocation \citep{Neyman1934OnTT} and has been investigated in various studies on experimental design \citep{chen2000,glynn2004large,Laan2008TheCA,Hahn2011,Kaufman2016complexity,Meehan2022}.

\section{Semiparametric Efficient Adaptive Experiment}
\label{sec:main_exp_design}
In this section, we design an adaptive experiment that minimizes the semiparametric efficiency bound and an ATE estimator whose asymptotic variance hits the minimized semiparametric efficiency bound.
As explained in the Introduction, our experiment consists of two steps: 
\begin{itemize}
    \item \textbf{Step (1). Treatment-assignment phase:} In each round $t\in[T]$, we estimate the efficient treatment-assignment probability $\pi^*(a\mid x)$ and assign a treatment based on the estimated efficient treatment-assignment probability.
    \item \textbf{Step (2). ATE-estimation phase:} At the end of the experiment, we estimate the ATE using our proposed A2IPW estimator.
\end{itemize}

The pseudo-code is provided in Algorithm~\ref{alg}. In the following subsections, we explain the details of our experimental design.

\subsection{Step (1): Treatment-Assignment Phase}
We assign treatments in each round $t \in [T]$ to gather data. Although assigning treatments with probability $\pi^*$ minimizes the semiparametric efficiency bound, it is infeasible since we do not know the conditional variance $\sigma^2_0(a, x)$. To overcome this challenge, in each round $t$, we estimate the conditional variance $\sigma^2_0(a, x)$, estimate the efficient treatment-assignment probability $\pi^*$ using the estimator of $\sigma^2_0(a, x)$, and assign a treatment based on the estimated efficient treatment-assignment probability.

Let $T_0$ ($2 \leq T_0 \leq T$) be the number of initialization rounds, which is a constant independent of $T$. In the initialization rounds $t = 1, 2, \dots, T_0$, we assign treatment $A_t = 1$ if $t$ is odd and $A_t = 0$ if $t$ is even; for example, if $T_0 = 6$, $(A_1, A_2, A_3, A_4, A_5, A_6) = (1, 0, 1, 0, 1, 0)$. We set $\pi_t(1\mid X_t, \mathcal{H}_{t-1}) = 1/2$ for all $t = 1, 2, \dots, T_0$. 

In each round $t \in \{T_0 + 1, T_0 + 2, \dots, T\}$, we construct a consistent estimator $\widehat{\sigma}^2_t(a, x)$ of $\sigma^2_0(a, x)$ such that $\widehat{\sigma}^2_t(a, x) \in (0, \infty)$ for all $a \in \{1, 0\}$ and $x \in \mathcal{X}$, and $\widehat{\sigma}^2_t(a, x)$ is constructed only by using $\mathcal{H}_{t-1}$. 
The reason we use only $\mathcal{H}_{t-1}$ is to construct an ATE estimator whose scores consist of a martingale difference sequence, as shown in the next subsection. Under this property, we can apply the martingale central limit theorem and martingale concentration inequality to analyze the asymptotic and non-asymptotic behaviors of the ATE estimator. 

To estimate $\sigma^2_0(a, X_t)$, we propose estimating $f_0(a, X_t) = \mathbb{E}[Y_t(a) \mid X_t]$ and $e_0(a, X_t) = \mathbb{E}[Y_t^2(a) \mid X_t]$ using nonparametric models based on observations $\mathcal{H}_{t-1}$ up to round $t$. Let $\widehat{f}_t(a, X_t)$ and $\widehat{e}_t(a, X_t)$ denote such estimators. In MAB problems, several nonparametric estimators, such as $K$-nearest neighbor regression and Nadaraya–Watson kernel regression, have been shown to be consistent \citep{yang2002,Qian2016}. For example, given a bandwidth $h_T > 0$ and a kernel function $K:\mathcal{X} \to \mathbb{R}$, a Nadaraya-Watson estimator of $f_0(a, X_t)$ is defined as $\widehat{f}_t(a, X_t) = \frac{1}{\frac{1}{t - 1}\sum^{t-1}_{s=1}\mathbbm{1}[A_s = a]K((X_s - X_t)/h_t)}\frac{1}{t - 1}\sum^{t-1}_{s=1}Y_s\mathbbm{1}[A_s = a]K((X_s - X_t)/h_t)$. We can also estimate $e_0(a, X_t)$. By appropriately obtaining samples, we can also employ random forests \citep{WagerAthey2018} and neural networks as nonparametric estimators \citep{SchmidtHieber2020,Farrell2021}.

We then estimate $\sigma^2_0(a, X_t)$ as follows:
\[
\widehat{\sigma}^2_t = 
\begin{cases}
    \widehat{e}_t(a, X_t) - \widehat{f}^2_t(a, X_t) & \text{if } \widehat{e}_t(a, X_t) - \widehat{f}^2_t(a, X_t) > 0, \\
    \varepsilon & \text{otherwise},
\end{cases}
\]
where $\varepsilon > 0$ is a small positive constant introduced to ensure that $\widehat{\sigma}^2_t$ remains non-negative. Note that when $\sigma^2_0(a, X_t) > 0$, the term $\varepsilon$ becomes unnecessary as $t$ grows large.

We assign treatment $A_t$ with probability $\pi_t(A_t \mid X_t, \mathcal{H}_{t-1})$, defined as
\[
\pi_t(a \mid X_t, \mathcal{H}_{t-1}) = \frac{\sqrt{\widehat{\sigma}^2_t(a, X_t)}}{\sqrt{\widehat{\sigma}^2_t(1, X_t)} + \sqrt{\widehat{\sigma}^2_t(0, X_t)}} \quad \forall a \in \{1, 0\},
\]
Note that our experiment can be used in a batch setting, where we update $\pi_t(a \mid x, \mathcal{H}_{t-1})$ only at certain rounds $T_1, T_2, \dots \in \{1, \dots, T\}$. 
We require that $\pi_t(a \mid x, \mathcal{H}_{t-1}) \to \pi^*(a \mid x)$ for each $x \in \mathcal{X}$ as $t \to \infty$.\footnote{As long as this condition is satisfied, we do not need to sequentially update $\pi_t(a \mid x, \mathcal{H}_{t-1})$. This implies that we can keep $\pi_t(a \mid x, \mathcal{H}_{t-1})$ constant for several rounds and update $\pi_t(a \mid x, \mathcal{H}_{t-1})$ in specific rounds. For example, we can consider a two-stage design similar to \citet{Hahn2011}. In this case, we update $\pi_t(a \mid x, \mathcal{H}_{t-1})$ only at $T_1$. Assume that $T_1 = r T$, where $r \in (0, 1)$ is a constant independent of $T$. In rounds $1, 2, \dots, T_1$, we assign treatment $a \in \{1, 0\}$ with probability $1/2$, where $\pi_t(a \mid x, \mathcal{H}_{t-1}) = 1/2$ for all $x \in \mathcal{X}$. Afterward, we update $\pi_t(a \mid x, \mathcal{H}_{t-1})$ by estimating $\pi^*(a \mid x)$. If $\pi_t(a \mid x, \mathcal{H}_{t-1}) \to \pi^*(a \mid x)$ as $t \to \infty$ holds for all $x \in \mathcal{X}$, we can prove the same asymptotic optimality of our experimental design. To verify that $\pi_t(a \mid x, \mathcal{H}_{t-1}) \to \pi^*(a \mid x)$ as $t \to \infty$, it is sufficient to check $\pi_{T_1}(a \mid x, \mathcal{H}_{t-1}) \to \pi^*(a \mid x)$ as $T_1 \to \infty$ ($T \to \infty$).}

\subsection{Step (2): ATE-Estimation Phase}
At the end of the experiment, we construct an ATE estimator that is asymptotically normal with an asymptotic variance, achieving the semiparametric lower bound \eqref{eq:semipara_bound}. 
In adaptive experiments, due to the changing assignment probabilities, dependencies among samples can complicate the estimation process. To address this dependency problem, we propose the A2IPW estimator:
\[
\widehat{\theta}^{\mathrm{A2IPW}}_T = \frac{1}{T}\sum^T_{t=1} \Psi_t,
\]
\begin{align*}
\text{where}\ \ \Psi_t &= \Bigg(\frac{\mathbbm{1}[A_t=1]\big(Y_t - \widehat{f}_{t-1}(1, X_t)\big)}{\pi_t(1\mid X_t, \mathcal{H}_{t-1})}  - \frac{\mathbbm{1}[A_t=0]\big(Y_t - \widehat{f}_{t-1}(0, X_t)\big)}{\pi_t(0\mid X_t, \mathcal{H}_{t-1})} \\
&\ \ \ \ \ \ \ \ \ \ \ \ \ \ \ \ \ \ \ \ \ \ \ \ \ \ \ \ \ \ \ \ \ \ \ \ \ \ \ \ \ \ \ \ \ \ \ \ \ \ \ \ \ + \widehat{f}_{t-1}(1, X_t) - \widehat{f}_{t-1}(0, X_t)\Bigg).
\end{align*}

\noindent and $\widehat{f}_t(a, x)$ is an estimator of $f_0(a, x)$, constructed from $\mathcal{H}_t$. As stated in Theorem~\ref{thm:asymp_dist_A2IPW}, the asymptotic optimality of our proposed ATE estimator holds with any consistent estimator for $f_0(a, x)$, due to the unbiasedness of $\widehat{\theta}^{\mathrm{A2IPW}}_T$ for $\theta_0$. This point is also discussed in Section~4 of \citet{Kato2021adr}, our follow-up study. Additionally, consistency holds even if the estimator of $f_0(a, x)$ is inconsistent, as stated in Corollary~\ref{cor:consistency}.
Here, $\Psi_t$ is the semiparametric efficient score for ATE estimators. Regular estimators with scores $\Psi_t$ achieve the smallest asymptotic variance within the class of such estimators.

For $z_t = \Psi_t - \theta_0$, the sequence $\{z_t\}_{t=1}^T$ forms a martingale difference sequence, which means that $\mathbb{E}[z_t \mid \mathcal{H}_{t-1}] = 0$. Using this property, we will derive the theoretical results for $\widehat{\theta}^{\mathrm{A2IPW}}_T$. This construction shares a similar motivation to sample-splitting techniques in semiparametric inference \citep{klaassen1987}, including double machine learning \citep{ChernozhukovVictor2018Dmlf}.



\subsection{Stabilizations and Extensions}
While not required to obtain the asymptotic properties, here we introduce stabilization techniques that contribute to the finite-sample stabilization of the designed experiment. The above sections show that our designed experiment is asymptotically efficient in the sense that the asymptotic variance of the ATE estimator aligns with the semiparametric efficiency bound. However, such asymptotic optimality does not necessarily guarantee accurate ATE estimation in finite samples.

\textbf{The ADR estimator.}
\citet{Kato2021adr} reports that replacing the true $\pi_t$ with its estimate can paradoxically improve performance. This is because the original $\pi_t$ may take values close to zero, causing the inverse of $\pi_t$ to become large and making the A2IPW estimator unstable. By replacing $\pi_t$ with its estimate, even when the true value of $\pi_t$ is known, the A2IPW estimator can be stabilized.\footnote{Note that, unlike the classical problem regarding the use of an estimated propensity score in the IPW estimators, the asymptotic properties remain unchanged between the cases where the true $\pi_t$ is used and where $\pi_t$ is estimated when we use the AIPW estimator \citep{hirano03,Henmi2004paradox}. 
}

The ADR estimator is defined as follows:
\begin{align*}
\widehat{\theta}^{\mathrm{ADR}}_T &= \frac{1}{T}\sum_{t=1}^T \Bigg(\frac{\mathbbm{1}[A_t=1]\big(Y_t - \widehat{f}_{t-1}(1, X_t)\big)}{\widehat{g}_t(1\mid X_t)}  
- \frac{\mathbbm{1}[A_t=0]\big(Y_t - \widehat{f}_{t-1}(0, X_t)\big)}{\widehat{g}_t(0\mid X_t)} \\
&\qquad\qquad\qquad\quad \qquad\qquad\qquad\quad \qquad\qquad\qquad\quad + \widehat{f}_{t-1}(1, X_t) - \widehat{f}_{t-1}(0, X_t)\Bigg),
\end{align*}
where $\widehat{g}_t(a \mid X_t)$ is an estimator of $\pi_t(a \mid X_t, \mathcal{H}_{t-1})$ constructed from the past observations $\{(X_s, A_s, Y_s)\}_{s=1}^{t-1}$. Although this estimator is no longer unbiased, asymptotic normality holds under convergence rate conditions for $\widehat{f}_t$ and $\widehat{\pi}_t$, as well as double machine learning techniques. The theorem regarding its asymptotic normality is introduced in Proposition~\ref{prp:adr}.

\citet{Kato2021adr} refer to the sample splitting used in both the A2IPW and ADR estimators as adaptive fitting, where only past observations up to time $t$ are used to obtain the plug-in estimators for each $t$. Figure~\ref{fig:concept} illustrates the difference between cross-fitting as described in \citet{ChernozhukovVictor2018Dmlf} and our adaptive fitting approach.

\begin{figure}[t]
\begin{center}
 \includegraphics[width=100mm]{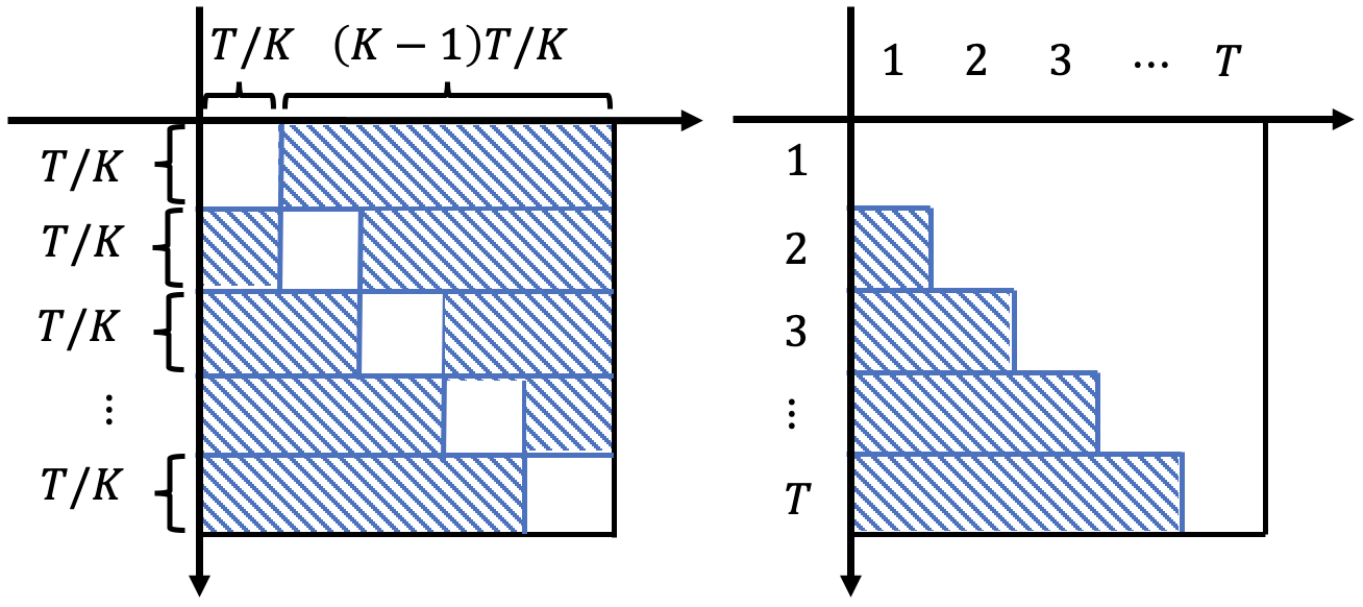}
\end{center}
\caption{The difference between $K$-fold cross-fitting (left) and adaptive-fitting (right) from Figure~1 in \citet{Kato2021adr}. The shaded block indicates the subset of observations used for estimating the nuisance parameters.}
\label{fig:concept}
\end{figure} 

\textbf{Stabilization techniques.}
To stabilize the finite-sample behavior, we can further introduce certain elements into our experiment. These elements are designed not to affect the asymptotic behavior, meaning their influence vanishes as $t \to \infty$.
\begin{description}

\item[(a)] We define the treatment-assignment probability as
\begin{align*}
    \pi_t(1 \mid x, \mathcal{H}_{t-1}) &= \gamma_t\frac{1}{2} + (1-\gamma_t)\frac{\sqrt{\widehat{\sigma}^2_{t-1}(1, x)}}{\sqrt{\widehat{\sigma}^2_{t-1}(1, x)} + \sqrt{\widehat{\sigma}^2_{t-1}(0, x)}},\\
    \pi_t(0 \mid x, \mathcal{H}_{t-1}) &= 1 - \pi_t(1 \mid x, \mathcal{H}_{t-1}),
\end{align*}
where $\gamma_t = O(1/\sqrt{t})$;

\item[(b)] As an alternative estimator, we propose the mixed A2IPW (MA2IPW) estimator, defined as $\widehat{\theta}^{\mathrm{MA2IPW}}_T = \zeta_T \widehat{\theta}^{\mathrm{IPW}}_T + (1-\zeta_T)\widehat{\theta}^{\mathrm{A2IPW}}_T$, where $\widehat{\theta}^{\mathrm{IPW}}_T$ is the IPW estimator defined as $
\widehat{\theta}^{\mathrm{IPW}}_T = \frac{1}{T}\sum^T_{t=1} \left(\frac{\mathbbm{1}[A_t=1]Y_t}{\pi_t(1\mid X_t, \mathcal{H}_{t-1})}  - \frac{\mathbbm{1}[A_t=0]Y_t}{\pi_t(0\mid X_t, \mathcal{H}_{t-1})}\right)$ and 
$\zeta_T = o(1/\sqrt{T})$. 
\end{description}
Note that the IPW estimator is a special case of the A2IPW estimator with $\widehat{f}_{t- 1}(x) = 0$. 

Technique (a) aims to stabilize the treatment assignment probability. If $\pi_t$ fluctuates significantly, it may unstabilize of the A2IPW estimator for the ATE. Furthermore, when the value of $\pi_t$ is too close to zero, its inverse is included in the elements averaged by A2IPW, potentially causing those elements to become extremely large. Technique (a) prevents such cases.\footnote{Performance can be further improved by replacing $\pi_t$ with its estimator constructed from past observations $\{(X_s, A_s, Y_s)\}_{s=1}^{t-1}$ and $X_t$, as noted by \citet{Kato2021adr}, a subsequent study to our study. \citet{Kato2021adr} observes that the A2IPW estimator with the true $\pi_t$ incurs a larger mean squared error than when using an estimated $\pi_t$. This is because when the true $\pi_t$ fluctuates significantly, the A2IPW estimator also becomes unstable. However, \citet{Kato2021adr} finds that replacing the volatile $\pi_t$ with a more stable estimator helps stabilize the behavior of the A2IPW estimator. The A2IPW estimator with an estimated $\pi_t$ is referred to as the Adaptive Doubly Robust (ADR) estimator. 
Although we do not focus on this type of stabilization in this study, it is a promising approach. We compare our estimator with the ADR estimator in our simulation studies. \citet{cook2023semiparametric} also develops stabilization techniques based on \citet{Waudby-Smith2024}.}

Technique (b) controls the estimator's behavior by avoiding situations where $\widehat{f}_{t-1}$ takes unpredictable values in the early stages. Since the nonparametric convergence rate is generally slower than $1/\sqrt{t}$, the convergence rate of $\pi_t$ to $\pi^*$ does not exceed $O(1/\sqrt{t})$. Therefore, $\gamma_t = O(1/\sqrt{t})$ does not asymptotically affect the convergence rate of the treatment-assignment probability. Similarly, the asymptotic distribution of $\widehat{\theta}^{\mathrm{MA2IPW}}_T$ is asymptotically equivalent to $\widehat{\theta}^{\mathrm{A2IPW}}_T$ because it holds that
$
\sqrt{T} \widehat{\theta}^{\mathrm{MA2IPW}}_T = \sqrt{T}\big(\zeta_T \widehat{\theta}^{\mathrm{IPW}}_T + (1-\zeta_T)\widehat{\theta}^{\mathrm{A2IPW}}_T\big)=\sqrt{T} \widehat{\theta}^{\mathrm{A2IPW}}_T + o(1)
$
as $T \to \infty$. 

There are additional stabilization techniques. For example, 
\citet{cook2023semiparametric} also develops stabilization techniques based on \citet{Waudby-Smith2024}.

\begin{algorithm}[tb]
   \caption{Adaptive experiment for efficient ATE estimation.}
   \label{alg}
\begin{algorithmic}
   \STATE {\bfseries Parameter:} The number of initialization rounds, $T_0$. The lower bound of the variance $\nu$, $\underline{\nu}> 0$. The stabilization parameter $\gamma_t, \zeta_T \in (0, 1)$, such that $\gamma_t = O(1/\sqrt{t})$ and $\zeta_T = o(1/\sqrt{T})$.
   \STATE {\bfseries Initialization:} 
   \STATE At $t=1, 2, \dots, T_0$, assign treatment $A_t= 1$ if $t$ is odd and assign treatment $A_t = 2$ if $t$ is even. Set $\pi_t(a \mid X_t, \mathcal{H}_{t-1})=1/2$ for all $a\in\{1, 0\}$.
   \FOR{$t=T_0 + 1$ to $T$}
   \IF{$t < \rho$}
   \STATE Set $\pi_t(1 \mid X_t, \Omega_{t-1}) = 0.5$.
   \ELSE
   \STATE Construct estimators $\widehat{f}_{t-1}$ and $\widehat{e}_{t-1}$ using a nonparametric method.
   \STATE Construct $\widehat{\nu}_{t-1}$ from $\widehat{f}_{t-1}$ and $\widehat{e}_{t-1}$.
   \STATE Using $\widehat{\nu}_{t-1}$, construct an estimator of $\pi^*(k \mid X_t)$ and set it as $\pi_t(k \mid X_t, \Omega_{t-1})$.
   \ENDIF
   \STATE Draw $\xi_t$ from the uniform distribution on $[0,1]$. 
   \STATE $A_t = \mathbbm{1}[\xi_t \leq \pi_t(1 \mid X_t, \Omega_{t-1})]$. 
   \ENDFOR
   \STATE Estimate the ATE by using the A2IPW estimator $\widehat{\theta}^{\mathrm{A2IPW}}_T$.
\end{algorithmic}
\end{algorithm}

\section{Theoretical Results about Treatment Effect Estimation}
\label{sec:estimator_prop}
This section provides theoretical results on the A2IPW estimator. We present its asymptotic distribution, a regret bound, and a non-asymptotic confidence bound for the A2IPW estimator.

\subsection{Consistency and Asymptotic Normality of the A2IPW Estimator}
We first show the asymptotic normality of the A2IPW estimator $\widehat{\theta}^{\mathrm{A2IPW}}_T$. Before showing the asymptotic normality, we make the following assumption. 

\begin{assumption}[Boundedness]
\label{asm:boundedness}
There exists an absolute constant $C$ such that $|Y_t(a)| \leq C$ holds for $a\in\{1, 0\}$. 
\end{assumption}

\noindent The following theorem states the asymptotic normality.
\begin{theorem}[Asymptotic distribution of the A2IPW estimator]
\label{thm:asymp_dist_A2IPW}
Suppose that Assumption~\ref{asm:boundedness} holds, and 
\begin{description}
\item[(i)] point convergence in probability of $\widehat{f}_{t-1}$ and $\pi_t$, i.e., for all $x\in\mathcal{X}$ and $a\in\{0,1\}$, 
\begin{align*}
&\widehat{f}_{t-1}(a, x)-f_0(a, x)\xrightarrow{\mathrm{p}}0\quad \mathrm{and}\quad \pi_t(a\mid  x, \mathcal{H}_{t-1})-\widetilde{\pi}(a\mid  x)\xrightarrow{\mathrm{p}}0,
\end{align*}
where $\widetilde{\pi} \in \Pi$;
\item[(ii)] there exists a constant $C_f$ such that $|\widehat{f}_{t-1}| \leq C_f$.
\end{description}
Then, the A2IPW estimator is asymptotically normal:  $$\sqrt{T}\left(\widehat{\theta}^{\mathrm{A2IPW}}_T-\theta_0\right)\xrightarrow{d}\mathcal{N}\left(0, V\right),$$
where 
$$V \coloneqq \mathbb{E}\left[\frac{\widetilde{\sigma}^2\big(1, X_t\big)}{\widetilde{\pi}(1\mid  X_t)} + \frac{\widetilde{\sigma}^2\big(0, X_t\big)}{\widetilde{\pi}(0\mid  X_t)} + \Big(f_0(1, X_t) -  f_0(0, X_t) - \theta_0\Big)^2\right].$$
\end{theorem}

The asymptotic variance aligns with the semiparametric efficiency bound derived under the treatment-assignment probability $\widetilde{\pi}$. 
Note that we do not have to impose the Donsker condition, similar to cross-fitting \citep{klaassen1987,ZhengWenjing2011CTME,ChernozhukovVictor2018Dmlf}. 
Here, we do not impose the convergence rate of $\widehat{f}_{t-1}$ owing to the unbiasedness of the A2IPW estimator $\widehat{\theta}^{\mathrm{A2IPW}}_T$ for the ATE $\theta_0$. 


Consistency holds under a weaker assumption, i.e., even if the treatment-assignment probability $\pi_t$ does not converge. 
We omit the proof because it follows from the boundedness of $z_t$ and the weak law of large numbers for a martingale difference sequence (Proposition~\ref{prp:mrtgl_WLLN} in Appendix~\ref{appdx:prelim}).

\begin{corollary}
    [Consistency of the A2IPW estimator]
\label{cor:consistency}
Suppose that there exists a constant $C_f$ such that $|\widehat{f}_{t-1}| \leq C_f$. Then, under Assumption~\ref{asm:boundedness}, $\widehat{\theta}^{\mathrm{A2IPW}}_T\xrightarrow{\mathrm{p}} \theta_0$ holds as $T\to\infty$.
\end{corollary}

Note that Corollary~\ref{cor:consistency} holds even if $\widehat{f}_t$ is inconsistent. Therefore, compared to Theorem~\ref{thm:asymp_dist_A2IPW}, Corollary~\ref{cor:consistency} holds with a weaker assumption.

We also present the theorem about the asymptotic normality of the ADR estimator from \citet{Kato2021adr}, which is a follow-up study that investigates the A2IPW estimator and generalizes it as the ADR estimator. 

\begin{proposition}[Asymptotic distribution of the ADR estimator. From Theorem~1 in \citet{Kato2021adr}.]
\label{prp:adr}
Suppose that Assumption~\ref{asm:boundedness} holds, and 
\begin{description}
\item[(i)] For all $x\in\mathcal{X}$ and $a\in\{0,1\}$, there exist $p, q > 0$ such that $p+q = 1/2$, it holds that $|\widehat{g}_{t-1}(a\mid x) -\widetilde{\pi}(a\mid  x)|=\op(t^{-p})$, and $|\widehat{f}_{t-1}(a, x)-f_0(a, x)|=\op(t^{-q})$
where $\widetilde{\pi} \in \Pi$;
\item[(ii)] There exists a constant $C_f$ such that $|\widehat{f}_{t-1}| \leq C_f$.
\end{description}
Then, the ADR estimator is consitent and asymptotically normal:  $$\sqrt{T}\left(\widehat{\theta}^{\mathrm{ADR}}_T-\theta_0\right)\xrightarrow{d}\mathcal{N}\left(0, V\right).$$
\end{proposition}

\subsection{Regret Bound of the A2IPW Estimator} 
In addition to the above asymptotic analysis, we introduce the finite-sample regret framework often used in the literature on the MAB problem. We define regret based on the MSE. We define the optimal experiment $\Pi^{\mathrm{OPT}}$ as an experiment that chooses a treatment with the probability $\pi^*$ defined in Proposition \ref{optprob}, and an estimator $\widehat{\theta}^\mathrm{OPT}_T$ with oracle $f_0$ as 
\begin{multline*}
\widehat{\theta}^\mathrm{OPT}_T=\frac{1}{T}\sum^T_{t=1}\Bigg(\frac{\mathbbm{1}[A_t=1]\big(Y_t - f_0(1, X_t)\big)}{\pi^*(1\mid  X_t)} - \frac{\mathbbm{1}[A_t=0]\big(Y_t - f_0(0, X_t)\big)}{1-\pi^*(1\mid  X_t)}\\
+  f_0(1, X_t) - f_0(0, X_t)\Bigg).
\end{multline*}
For any experiment $\Pi$ adapted by the experimenter, we define the regret of $\Pi$ as 
\[{\tt regret} = \mathbb{E}_{\Pi}\left[\left(\theta_0 - \widehat{\theta}^{\mathrm{A2IPW}}_T \right)^2\right] -\mathbb{E}_{\Pi^{\mathrm{OPT}}}\left[\left(\theta_0-\widehat{\theta}^{\mathrm{OPT}}_T\right)^2\right],\]
where the expectations are taken over each experiment. The following theorem provides an upper bound on the regret. 
\begin{theorem}[Regret Bound of A2IPW]
\label{thm:regret}
Suppose that there exists a constant $C_f$ such that $|\widehat{f}_{t-1}| \leq C_f$. Then, under Assumption~\ref{asm:boundedness}, there exist constants $C > 0$ and $T_0$ such that for all $T > T_0$, it holds that
\begin{align*}
{\tt regret} &\leq \frac{C}{T^2}\sum_{a\in\{1,0\}}\sum^T_{t=1}\Bigg(\mathbb{E}\left[\Big| \sqrt{\pi^*(a\mid  X_t)} - \sqrt{\pi_t(a\mid  X_t, \mathcal{H}_{t-1})} \Big|\right]\\
&\ \ \ \ \ \ \ \ \ \ \ \ \ \ \ \ \ \ \ \ \ \ \ \ \ \ \ \ \ \ \ \ \ \ \ \ \ \ \ \ \ \ \ \ \ \ \ \ + \mathbb{E}\left[\Big| f_0(a, X_t) - \widehat{f}_{t-1}(a, X_t) \Big|\right]\Bigg),
\end{align*}
where the expectation is taken over the random variables including $\mathcal{H}_{t-1}$. 
\end{theorem}
The proof is shown in Appendix~\ref{appdx:thm:regret}. This result tells us that regret is bounded by $o(1/T)$ under the consistencies of $\pi_t$ and $\hat f_t$, since under the consistencies and uniform integrability, as $T \to \infty$, it holds that
\begin{align*}
&\sum_{a\in\{1,0\}}\sum^T_{t=1}\Bigg(\mathbb{E}\left[\Big| \sqrt{\pi^*(a\mid  X_t)} - \sqrt{\pi_t(a\mid  X_t, \mathcal{H}_{t-1})} \Big|\right]\\
&\ \ \ \ \ \ \ \ \ \ \ \ \ \ \ \ \ \ \ \ \ \ \ \ \ \ \ \ \ \ \ \ \ \ \ \ \ \ \ \ \ \ \ \ \ \ \ \ + \mathbb{E}\left[\Big| f_0(a, X_t) - \widehat{f}_{t-1}(a, X_t) \Big|\right]\Bigg) = o(T).
\end{align*}
By contrast, if we use a constant value for $\pi_t$, regret is $O(1/T)$.
The regret bound for finite samples can also be obtained by substituting the finite sample bounds of 
\[\mathbb{E}\left[\Big| \sqrt{\pi^*(a\mid  X_t)} - \sqrt{\pi_t(a\mid  X_t, \mathcal{H}_{t-1})} \Big|\right]\]
and $\mathbb{E}\left[\Big| f_0(a, X_t) - \widehat{f}_{t-1}(a, X_t) \Big|\right]$. We can bound $\widehat{f}_{t-1}(a, X_t)$ and $\sqrt{\pi_t(a\mid  X_t, \mathcal{H}_{t-1})}$ by the same argument as existing work on the MAB problem such as \citet{yang2002}.

\subsection{Any Time Confidence Interval}
\label{sec:confidence}

The asymptotic normality shown in the previous section holds for large fixed $T$. In this section, we consider a confidence interval that is valid for any $t \in [T]$. This type of anytime confidence interval guarantees a finite sample estimation error and plays an important role in sequential hypothesis testing. 

Among the various candidates for constructing confidence intervals, we employ concentration inequalities based on the LIL. The LIL is originally derived as an asymptotic property of independent random variables by \citet{Khintchine1924} and \citet{Kolmogoroff1929}. Following their methods, several works have derived an asymptotic LIL for a martingale difference sequence under some regularity conditions \citep{Stout1970,Fisher1992}. \citet{Balsubramani2016} derived a non-asymptotic LIL-based concentration inequality for sequential testing. The reason for using the LIL-based concentration inequality is that sequential testing with the LIL-based confidence sequence requires a smaller sample size needed to identify the parameter of interest since the confidence intervals depend on the distributional information and are said to be tight \citep{Jamieson2014}, as explained later. Due to the tightness of the inequality, LIL-based concentration inequalities have been widely accepted in sequential testing \citep{Balsubramani2016} and in the best arm identification in the Multi-Armed Bandit (MAB) problem \citep{Jamieson2014,NIPS2018_7624}.

We construct the confidence sequence $\big\{q_t\big\}_{t\in\mathbb{N}}$ based on the LIL-based concentration inequality for the A2IPW estimator as follows.

\begin{theorem}[Concentration Inequality of the A2IPW Estimator]
\label{thm:conc_A2IPW} Suppose that the null hypothesis is correct; that is, $\mu = \theta_0$ and $z_t = \Psi_t - \theta_0$. Let $C > 0$ and $C_z > 0$ be constants independent of $t$ and $T$ such that $|z_t| \leq C$ and $|(z_t - z_{t-1})^2 - \mathbb{E}[(z_t - z_{t-1})^2|  \mathcal{H}_{t-1}]| \leq C_z$ hold. For any $\delta$, with probability $\geq 1-\delta$, for all $t\geq T_0$ simultaneously, 
\begin{align*}
&\left|\sum^t_{i=1}z_i\right| = t\left|\widehat{\theta}^{\mathrm{A2IPW}}_t-\theta_0\right| \leq \frac{2C}{e^2}\left(C_0(\delta) + \sqrt{2C_1\widehat{V}^*_t\left(\log\log\widehat{V}^*_t + \log\left(\frac{4}{\delta}\right)\right)}\right),
\end{align*}
where $\widehat{V}^*_t = C_f\left(\frac{e^4}{4C^2}\sum^t_{i=1}z^2_i + \frac{2C_0(\delta)C_z}{e^2}\right)$, $C_0(\delta)=3(e-2) + 2\sqrt{\frac{173}{2(e-2)}} \log\left(\frac{4}{\delta}\right)$, $C_1 = 6(e-2)$ and $C_f$ is an absolute constant.
\end{theorem}
The proof is provided in Appendix~\ref{appdx:thm:conc_A2IPW}. This result, derived by applying the findings of \citet{Balsubramani2014SharpFI}, not only shows an anytime confidence interval but also establishes a finite-sample estimation error bound in estimating $\theta_0$. 

We obtain confidence sequences, $\big\{q_t\big\}^T_{t=1}$, with the Type I error at $\alpha$ from the results of Theorem~\ref{thm:conc_A2IPW} and \citet{Balsubramani2016} as
$$
q_t \propto \log \left(\frac{1}{\alpha}\right) + \sqrt{2\sum^t_{i=1}z^2_i\left(\log\frac{\log \sum^t_{i=1}z^2_i}{\alpha}\right)}.
$$
\citet{Balsubramani2016} proposes using constant $1.1$ to specify $q_t$, namely,
$$
q_t = 1.1\left(\log \left(\frac{1}{\alpha}\right) + \sqrt{2\sum^t_{i=1}z^2_i\left(\log\frac{\log \sum^t_{i=1}z^2_i}{\alpha}\right)}\right).
$$
This choice is motivated by the asymptotic property of the LIL such that
$$
\limsup_{t\to\infty}\frac{\left|t\widehat{\theta}^{\mathrm{A2IPW}}_t - t\theta_0\right|}{\sqrt{2\widetilde{V}^*_t\left(\log\log\widetilde{V}^*_t\right)}} = 1
$$
with probability $1$ for sufficiently large samples \citep{Stout1970,Balsubramani2016}, where $\tilde{V}^2_t = \sum^t_{i=1}\mathbb{E}[z^2_i\mid \mathcal{H}_{i-1}]$, as well as the empirical results of \citet{Balsubramani2016}. 

The confidence interval tightly depends on the underlying distribution through the variances, in contrast to confidence intervals that rely on less information, such as those based on Hoeffding's inequality, which only uses the boundedness of the outcomes. Additionally, the tightness of the confidence intervals is also guaranteed by the lower bounds for sequential testing, as discussed in \citet{Jamieson2014}. It is known that the $O(\sqrt{t^{-1}\log\log t})$ asymptotic rate of the confidence intervals aligns with the lower bound implied by the LIL \citep{Farrell1964}. Non-asymptotic bounds of this form are referred to as finite LIL bounds \citep{Howard2020TimeuniformNN}.

\section{Hypothesis Testing}
\label{sec:analysis}
This section studies hypothesis testing about the ATE. 
We begin by formulating the hypothesis testing framework, introduce the testing procedures, and conclude by presenting the theoretical properties. We demonstrate how to compute the required sample size for hypothesis testing. The pseudo-code for our experimental design incorporating hypothesis testing is provided in Algorithm~\ref{alg2}, which encompasses Algorithm~\ref{alg}.

\begin{algorithm}[tb]
   \caption{Adaptive experiment for efficient ATE estimation with hypothesis testing.}
   \label{alg2}
\begin{algorithmic}
   \STATE {\bfseries Parameter:} The number of initialization rounds, $T_0$. The lower bound of the variance $\nu$, $\underline{\nu}> 0$. The stabilization parameter $\gamma_t, \zeta_t \in (0, 1)$, such that $\gamma_t = O(1/\sqrt{t})$ and $\zeta_t = o(1/\sqrt{t})$. Type~I error $\alpha$. Set $\rho\geq 0$, which is the number of samples that we assign treatments with equal probability.
   \STATE {\bfseries Initialization:} 
   \STATE At $t=1, 2$, select $A_t=t-1$. Set $\pi_t(1 \mid X_t, \Omega_{t-1})=1/2$.
   \FOR{$t=3$ to $T$}
   \IF{$t < \rho$}
   \STATE Set $\pi_t(1 \mid X_t, \Omega_{t-1}) = 0.5$.
   \ELSE
   \STATE Construct estimators $\widehat{f}_{t-1}$ and $\widehat{e}_{t-1}$ using a nonparametric method.
   \STATE Construct $\widehat{\nu}_{t-1}$ from $\widehat{f}_{t-1}$ and $\widehat{e}_{t-1}$.
   \STATE Using $\widehat{\nu}_{t-1}$, construct an estimator of $\pi^*(k \mid X_t)$ and set it as $\pi_t(k \mid X_t, \Omega_{t-1})$.
   \ENDIF
   \STATE Draw $\xi_t$ from the uniform distribution on $[0,1]$. 
   \STATE $A_t = \mathbbm{1}[\xi_t \leq \pi_t(1 \mid X_t, \Omega_{t-1})]$. 
   \IF{Sequential testing based on LIL}
   \STATE Construct $\widehat{\theta}^{\mathrm{A2IPW}}_t$.
   \STATE Construct $q_t$ based on \eqref{sec:confidence} with $\alpha$.
   \IF{$t\widehat{\theta}^{\mathrm{A2IPW}}_t > q_t$}
   \STATE Reject the null hypothesis. 
   \ENDIF
   \ENDIF
   \IF{Sequential testing based on BF correction}
   \STATE Construct $\widehat{\theta}^{\mathrm{A2IPW}}_t$.
   \STATE Construct p-value from $\widehat{\theta}^{\mathrm{A2IPW}}_t$ under BF correction.
   \IF{If the p-value is less than $\alpha$}
   \STATE Reject the null hypothesis.
   \ENDIF
   \ENDIF
   \ENDFOR
   \IF{Standard hypothesis testing}
   \STATE Construct $\widehat{\theta}^{\mathrm{A2IPW}}_T$.
   \STATE Construct p-value from $\widehat{\theta}^{\mathrm{A2IPW}}_T$.
   \IF{If the p-value is less than $\alpha$}
   \STATE Reject the null hypothesis.
   \ENDIF
   \ENDIF
\end{algorithmic}
\end{algorithm} 

\subsection{Hypothesis Testing in Adaptive Experiments}
\label{appdx:standard}
The experimenter aims to decide whether to reject the null hypothesis $H_0$ in \eqref{eq:hypothesis} while maximizing the power and controlling the Type~I error. In adaptive experiments, hypothesis testing can be framed in two ways: single-stage testing and sequential hypothesis testing. In single-stage testing, the test is performed only at the end of the experiment ($t = T$). In sequential hypothesis testing, the test is conducted sequentially at each stage of the experiment, where the sample size is treated as a stopping time (a random variable). Sequential testing is expected to reduce the required sample size by allowing the experiment to stop earlier. 

For each $t \in [T]$, let $\widehat{\theta}_T$ be an ATE estimator constructed using $\mathcal{H}_T$. In single-stage testing, we fix a threshold $p_T \in \mathbb{R}^+$ before gathering data via the experiment. At the end of the experiment, we reject the null hypothesis if:
\[
T\left|\widehat{\theta}_T - \mu\right| > p_T.
\]
\noindent We can conduct the most powerful test in single-stage testing by using the $t$-test with our efficient ATE estimator, which is asymptotically normal. 

In sequential testing, we define thresholds $q_t \in \mathbb{R}^+$ for each $t \in [T]$. At each time $t \in [T]$, we reject the null hypothesis if:
\[
t\left|\widehat{\theta}_t - \mu\right| \geq q_t.
\]

\noindent The difference between single-stage and sequential testing is illustrated in Figure~\ref{fig:gentests}.
\begin{figure*}[ht!]
    \begin{minipage}[t]{.5\linewidth}
    \vspace{0pt}
\begin{algorithmic}[1]
   \STATE Fix $T$ and compute $p_T$.
   \IF{ $T\left|\widehat{\theta}^{\mathrm{A2IPW}}_T-\mu\right|  > p_T$}
   \STATE Reject $H_0$.
   \ELSE
   \STATE Fail to reject $H_0$.
   \ENDIF
\end{algorithmic}
    \end{minipage}
    \hfill
    \begin{minipage}[t]{.5\linewidth}
\begin{algorithmic}[1]
   \STATE Fix $T$.
   \FOR{$t = 1$ {\bfseries to} $T$}
   \STATE Compute $q_t$.
   \IF{$t\left|\widehat{\theta}_t - \mu\right| > q_t$}
   \STATE Reject $H_0$.
   \ENDIF
   \ENDFOR
   \STATE Fail to reject $H_0$.
\end{algorithmic}
    \end{minipage}
\caption{Single-stage (left) and sequential (right) hypothesis testing (from Figure~1 in \citet{Balsubramani2016}).}
\label{fig:gentests}
\end{figure*}

In single-stage testing, it is natural to analyze the power of the test. In contrast, in sequential testing, we focus on the expected sample size (stopping time). Both are related to the minimum required sample size under Type I error control.

\textbf{Controlling the Type~I and Type~II errors.}  
Recall that our null and alternative hypotheses are $H_0: \theta_0 = \mu$ and $H_1: \theta_0 \neq \mu$, respectively. 
Let $\mathbb{P}_{H_0}$ and $\mathbb{P}_{H_1}$ represent the probabilities when the null and alternative hypotheses are correct, respectively. When $\mathbb{P}_{H_0}\big(\text{reject}\ H_0\big)\leq \alpha$, we say that we control the Type I error at level $\alpha$. Similarly, when $\mathbb{P}_{H_1}\big(\text{reject}\ H_0\big)\geq 1-\beta$, we say that we control the Type II error, where $\beta$ is also referred to as the power of the test.

We set $p_T$ and $q_t$ to control both the Type~I and Type~II errors. We use asymptotic normality to construct $p_T$ and the LIL-based concentration inequality to construct $\{q_t\}_{t=1}^T$. 
Controlling errors in sequential testing is more complex than in single-stage testing. If we naively apply standard single-stage testing at each $t$ sequentially, the probability of a Type I error increases due to the multiple testing problem \citep{Balsubramani2016}.\footnote{By contrast, the probability of a Type II error does not increase in sequential testing \citep{Balsubramani2016}, though there are methods to control the Type II error more precisely \citep{NIPS2018_7624}.} A common approach to this problem is to apply multiple testing corrections, such as the Bonferroni (BF) or Benjamini–Hochberg procedures. However, these methods tend to be overly conservative, resulting in suboptimal outcomes when conducting many tests. 
To avoid this issue in sequential hypothesis testing, we employ the non-asymptotic anytime confidence interval derived in Section~\ref{sec:confidence}, which holds for any time $t$ \citep{1512.04922,Howard2020TimeuniformNN}.

\textbf{Sample size and stopping time.}
We are interested in determining the sample size required to reject the null hypothesis while controlling the Type II error at level $\beta$, assuming the alternative hypothesis $H_1$ is true.

To control the Type II error, we introduce a parameter $\Delta > 0$, commonly referred to as the \emph{effect size} in hypothesis testing literature. We redefine the alternative hypothesis as $H_1(\Delta): |\theta_0 - \mu| > \Delta$, where $\mathbb{P}_{H_1(\Delta)}$ represents the probability when the alternative hypothesis $H_1(\Delta)$ is correct. Let $R_n$ denote the rejection region for controlling the Type II error at level $\beta$ given $n$ observations. In other words, when $\widehat{\theta}^{\mathrm{A2IPW}}_n \in R_n$ and the alternative hypothesis $H_1$ is true, the null hypothesis is rejected with a probability of at least $1 - \beta$. For $\Delta$ and $\beta$, the minimum sample size required to control the Type II error at $\beta$ is defined as:
\[
n^*_{\beta}(\Delta) = \min\left\{n: \mathbb{P}_{H_1(\Delta)}\left(\widehat{\theta}^{\mathrm{A2IPW}}_n \in R_n\right) \geq 1 - \beta\right\}.
\]

In single-stage testing, we can compute the sample size $n^*_{\beta}$ by using the asymptotic distribution of $\widehat{\theta}^{\mathrm{A2IPW}}_T$. See Section~\ref{sec:minimum_sample}. Note that to compute $n^*_{\beta}(\Delta)$, we need to know the conditional variance $\sigma^2_0(a, x)$ to calculate $V$ in Theorem~\ref{thm:asymp_dist_A2IPW}. In practice, conjectured values or upper bounds of the conditional variance or $V$ can be used. It is important to note that as the conditional variance or $V$ increases, the required sample size also increases.

In sequential testing, the sample size corresponds to the stopping time when the algorithm stops after rejecting the null hypothesis. Letting $\tau$ denote the stopping time, we evaluate the expected value of $\tau$, which is also referred to as the \emph{sample complexity}. In Theorem~\ref{thm:exp_sample}, we show that the sequential test is essentially as powerful as a batch test with a sample size of $T$.

\subsection{Implementation}
Let $\alpha \in (0, 1)$ be the target Type-I error, and the experimenter aims to perform hypothesis testing without a Type-I error exceeding $\alpha$.

\textbf{Single-stage testing.}  
When our interest lies in single-stage testing (at the end of the experiment), we utilize the asymptotic normality of $\widehat{\theta}^{\mathrm{A2IPW}}_T$ in Theorem~\ref{thm:asymp_dist_A2IPW}:
\[
\sqrt{T}\left(\widehat{\theta}^{\mathrm{A2IPW}}_T - \theta_0\right) \xrightarrow{\mathrm{d}} \mathcal{N}\big(0, V\big).
\]
In this case, we apply the (asymptotic) \emph{Student's $t$-test} using the $t$-statistic $\frac{\widehat{\theta}^{\mathrm{A2IPW}}_T - \mu}{\sqrt{\widehat{V}/T}}$, where $\widehat{V}$ is a consistent estimator of $V$. 

If the null hypothesis (i.e., $\theta_0=0$) is true, the $t$-statistic asymptotically follows the standard normal distribution. Based on these results, the test rejects the null hypothesis when
\[
T\left|\widehat{\theta}^{\mathrm{A2IPW}}_T - \mu\right| > \sqrt{T\widehat{V}}z_{1-\alpha/2} := p_T,
\]
where $z_\alpha$ is the $\alpha$ quantile of the standard normal distribution. When the sample size $T$ is large, the Type I error is controlled as 
\[
\mathbb{P}_{H_0}\left(T\left|\widehat{\theta}^{\mathrm{A2IPW}}_T - \mu \right| > p_T\right) \leq \alpha.
\]

\textbf{Sequential testing.}  
In sequential testing, we construct a confidence interval using a LIL-based concentration inequality, as shown in Theorem~\ref{thm:conc_A2IPW}. Based on this result, we define the confidence sequences $\{q_t\}_{t \in [T]}$ as
\[
q_t \coloneqq 1.1\left(\log \left(\frac{1}{\alpha}\right) + \sqrt{2\sum^t_{i=1}z^2_i\left(\log\frac{\log \sum^t_{i=1}z^2_i}{\alpha}\right)}\right),
\]
where $z_t \coloneqq z_t(\mu) \coloneqq \Psi_t - \mu$.

\subsection{Sample Size Computation}
In this subsection, we compute the sample size needed to control the Type~I error at $\alpha$ while achieving power $\beta$. In single-stage testing, we calculate the required minimum sample size $T$. In sequential testing, we compute the expected stopping time $\mathbb{E}[\tau]$, where $\tau$ is the stopping time when the null hypothesis is rejected.

\subsubsection{Minimum Sample Size under the Optimal Experiment}
\label{sec:minimum_sample}
First, we derive the required minimum sample size for single-stage testing. Theorem~\ref{thm:asymp_dist_A2IPW} shows that
$
\sqrt{T}\left(\widehat{\theta}_T - \theta_0\right) \xrightarrow{\mathrm{d}} \mathcal{N}(0, V)$. 
When the null hypothesis is true ($\theta_0 = \mu$),
\[
\frac{\sqrt{T}\left(\widehat{\theta}^{\mathrm{A2IPW}}_T - \mu\right)}{\sqrt{V}} \xrightarrow{\mathrm{d}}_{H_0} \mathcal{N}(0, 1).
\]
Based on these results, with sufficient samples and knowledge of $\sigma^2_0$, we reject the null hypothesis when
\[
\left| \sqrt{T}\left(\widehat{\theta}^{\mathrm{A2IPW}}_T - \mu\right) \right| > \sqrt{V} z_{1-\alpha/2},
\]
where $z_{1-\alpha/2}$ is the $1 - \alpha/2$ quantile of the standard normal distribution. As explained in Section~\ref{appdx:standard}, the Type~I error is controlled at $\alpha$. 

We now compute the smallest sample size $n^{\mathrm{OPT}*}_{\beta}(\Delta)$ required to achieve power $\beta$. The asymptotic power is given as
\begin{align*}
    &\mathbb{P}_{H_1}\left(\left|\sqrt{T}\left(\widehat{\theta}^{\mathrm{A2IPW}}_T - \mu\right)\right| > \sqrt{V} z_{1-\alpha/2}\right)\\
    &= \mathbb{P}_{H_1}\left(\sqrt{T}\left(\widehat{\theta}^{\mathrm{A2IPW}}_T - \mu\right) > \sqrt{V} z_{1-\alpha/2}\right) + \mathbb{P}_{H_1}\left(\sqrt{T}\left(\widehat{\theta}^{\mathrm{A2IPW}}_T - \mu\right) < - \sqrt{V} z_{1-\alpha/2}\right)\\
    &\approx 1 - \Phi\left(z_{1-\alpha/2} - \frac{\sqrt{T} \Delta}{\sqrt{V}}\right) + \Phi\left(-\frac{\sqrt{T} \Delta}{\sqrt{V}} - z_{1-\alpha/2}\right) \geq 1 - \Phi\left(z_{1-\alpha/2} - \frac{\sqrt{T} \Delta}{\sqrt{V}}\right).
\end{align*}
For $T \geq \frac{\sigma^2}{\Delta^2}\big(z_{1-\alpha/2} - z_{\beta}\big)^2$, the asymptotic power becomes at least $\beta$. Therefore, to achieve power $\beta$, the required sample size is:
\[
n^{\mathrm{OPT}*}_{\beta}(\Delta) = \frac{\mathbb{E}\left[\frac{\sigma^2_0(1, X_t)}{\pi^*(1 \mid X_t)} + \frac{\sigma^2_0(0, X_t)}{\pi^*(0 \mid X_t)} + \left(f_0(1, X_t) - f_0(0, X_t) - \theta_0\right)^2\right]}{\Delta^2}\big(z_{1-\alpha/2} - z_{\beta}\big)^2.
\]

\subsubsection{Expected Sample Size in Sequential Testing}
\label{sec:expect}
In this section, we calculate the upper bound of the expected stopping time $\tau$. In sequential testing using an LIL-based concentration inequality, we propose an algorithm that rejects the null hypothesis when
\[
\left|t \widehat{\theta}^{\mathrm{A2IPW}}_t - t \mu\right| > 1.1\left(\log \left(\frac{1}{\alpha}\right) + \sqrt{2\sum^t_{i=1}z^2_i\left(\log\frac{\log \sum^t_{i=1}z^2_i}{\alpha}\right)}\right) = q_t.
\]
Let $\tau$ be the stopping time of the sequential test, i.e., $\tau = \min \left\{t: \left|t \widehat{\theta}^{\mathrm{A2IPW}}_t - t \mu\right| > q_t\right\}$. When $t = \tau$, the null hypothesis is rejected.

We show that as time progresses, the probability that the sequential test does not reject the hypothesis becomes small. We bound $\mathbb{P}_{H_1}(\tau > \widetilde{t})$ for sufficiently large $\widetilde{t}$ such that $\widetilde{t} \Delta \gg 1.1\left(\log \left(\frac{1}{\alpha}\right) + \sqrt{2C^2 \widetilde{t}\left(\log\frac{\log C^2 \widetilde{t}}{\alpha}\right)}\right)$. First, we consider the probability of $\tau \geq \widetilde{t}$ for a stopping time $\tau$. The proof is shown in Appendix~\ref{appdx:proof_prob}.
\begin{lemma}
\label{lem:proof}
When the alternative hypothesis is true, $\tau > \widetilde{t}$ occurs with probability  
\begin{align*}
&\mathbb{P}_{H_1}(\tau > \widetilde{t})=O\left( \exp\left(-\frac{\widetilde{t}\Delta^2}{8C^2}\right)\right).
\end{align*}
\end{lemma}

\noindent With this lemma, we prove the following theorem. The proof is in Appendix~\ref{appdx:exp_sample}.
\begin{theorem}[Expected sample size in sequential testing]
\label{thm:exp_sample}
When the alternative hypothesis is true, if 
\[n^{\mathrm{OPT}*}_{\beta}(\Delta)\Delta \gg 1.1\left(\log \left(\frac{1}{\alpha}\right) + \sqrt{2C^2 n^{\mathrm{OPT}*}_{\beta}(\Delta)\left(\log\frac{\log C^2 n^{\mathrm{OPT}*}_{\beta}(\Delta)}{\alpha}\right)}\right),\]
then
the expected sample size in sequential testing satisfies
\begin{align*}
\mathbb{E}_{H_1}[\tau]=\left(1+\frac{8C^2}{V\big(z_{1-\alpha/2}-z_{\beta}\big)^2}\mathbb{P}_{H_1}(\tau > n^{\mathrm{OPT}*}_{\beta}(\Delta))\right)n^{\mathrm{OPT}*}_{\beta}(\Delta).
\end{align*}
\end{theorem} 
This result implies that given $\alpha$ and $\beta$, $\mathbb{E}_{H_1}[\tau]$ is approximately equal to $n^{\mathrm{OPT}*}_{\beta}(\Delta)$, multiplied by a constant term independent of $\Delta$. As $\Delta$ approaches zero, both $\mathbb{E}_{H_1}[\tau]$ and $n^{\mathrm{OPT}*}_{\beta}(\Delta)$ approach infinity. From this result, we find that the expected stopping time $\mathbb{E}_{H_1}[\tau]$ in sequential testing grows proportionally to the sample size $n^{\mathrm{OPT}*}_{\beta}(\Delta)$ in single-stage testing (\(\mathbb{E}_{H_1}[\tau] = (1 + O(1))n^{\mathrm{OPT}*}_{\beta}(\Delta)\) as \(\Delta \to 0\)). Hence, for sufficiently small $\Delta$, we can consider that $\mathbb{E}_{H_1}[\tau]$ becomes close to $n^{\mathrm{OPT}*}_{\beta}(\Delta)$. 

This result suggests that sequential testing has the potential to stop an experiment earlier than single-stage testing since the expected sample size is nearly identical to the (non-random) oracle sample size of single-stage testing, even though we do not know $n^{\mathrm{OPT}*}_{\beta}(\Delta)$ in advance of the experiment. That is, our sequential testing only uses the (unknown) minimum sample size in expectation.

Here, we emphasize that the oracle sample size $n^{\mathrm{OPT}*}_{\beta}(\Delta)$ is unknown because computing it requires the efficiency bound, which depends on the true expected conditional outcomes and the conditional variances. In single-stage testing, we cannot change the sample size during an experiment, as doing so is considered a violation of standard experimental design principles. Sequential testing, on the other hand, allows us to conduct a nearly optimal adaptive experiment without knowing $n^{\mathrm{OPT}*}_{\beta}(\Delta)$. Thus, sequential testing effectively reduces the sample size.


Note that the condition 
\[n^{\mathrm{OPT}*}_{\beta}(\Delta)\Delta \gg 1.1\left(\log \left(\frac{1}{\alpha}\right) + \sqrt{2C^2 n^{\mathrm{OPT}*}_{\beta}(\Delta)\left(\log\frac{\log C^2 n^{\mathrm{OPT}*}_{\beta}(\Delta)}{\alpha}\right)}\right)\] 
holds when $\beta$ is sufficiently close to 0. This theorem leads to the following corollary.
\begin{corollary}
Suppose that 
\[n^{\mathrm{OPT}*}_{\beta}(\Delta)\Delta \gg 1.1\left(\log \left(\frac{1}{\alpha}\right) + \sqrt{2C^2 n^{\mathrm{OPT}*}_{\beta}(\Delta)\left(\log\frac{\log C^2 n^{\mathrm{OPT}*}_{\beta}(\Delta)}{\alpha}\right)}\right)\] 
and $\pi_t = \pi^*$. Under $H_1$, for a sufficiently large sample size, the expected stopping time for the sequential test using $q_t$ is proportional to $n^{\mathrm{OPT}*}_{\beta}(\Delta)$.
\end{corollary}

\subsubsection{Minimum Sample Size and Early Stopping}
For a user-defined treatment assignment probability $\pi_t$, if $\pi_t(a\mid x)\xrightarrow{\mathrm{p}}\pi^*(a\mid x)$ holds for all $a, x$, the asymptotic variance is the same as $\widetilde{\sigma}^2$ from Theorem~\ref{thm:asymp_dist_A2IPW}. Therefore, when $\pi_t(a\mid x)\xrightarrow{\mathrm{p}}\pi^*(a\mid x)$ holds for all $a, x$, the minimum sample size required for hypothesis testing is also $n^{\mathrm{OPT}*}_{\beta}(\Delta)$. By applying the same method as in the previous section, we can verify that the expected stopping time for sequential testing under a user-defined treatment assignment probability $\pi_t$ using $q_t$ is proportional to $n^{\mathrm{OPT}*}_{\beta}(\Delta)$.

\subsection{Summary}
We introduced two approaches for hypothesis testing: single-stage testing and sequential testing. Single-stage testing employs a fixed, non-random sample size determined prior to the experiment, while sequential testing continues the experiment until a predefined stopping criterion is satisfied. Single-stage testing is justified by the asymptotic normality of the A2IPW estimator, which allows us to compute both the statistical power and the required sample size. Sequential testing, in contrast, is designed for finite-sample analysis and allows early stopping.

In contrast to single-stage testing with a fixed sample size, sequential testing has the potential to reduce sample size by terminating the experiment early. For example, if the null hypothesis assumes zero ATE but the actual ATE is significantly large, sequential testing may be able to reject the null and finish the experiment in an early round. On the other hand, in cases where the null hypothesis is not easily rejected, sequential testing may require larger sample sizes. Even in such cases, if the true ATE is sufficiently small and the sample size required for single-stage testing is large, the expected sample size for sequential testing is approximately equal to the fixed sample size used in single-stage testing.
\color{black}

\begin{table}[t]
\begin{center}
\caption{Experimental results using Dataset~1. } 
\medskip
\label{tbl:synthetic_exp_table}
\scalebox{0.9}{
\begin{tabular}{l|rrr|rrr|rr}
\toprule
 \multirow{2}{*}{} & \multicolumn{3}{c}{$T=150$} & \multicolumn{3}{c}{$T=300$} & \multicolumn{2}{c}{ST} \\
{} &     MSE &      STD &      Testing &      MSE &      STD &    Testing   &      LIL &      BF \\
\midrule
RCT &  0.145 &  0.178 &   25.0\% &  0.073 &  0.100 &   46.0\% &  455.4 &  370.4 \\
A2IPW (K-nn)&   0.085 &  0.116 &   38.4\% &  0.038 &  0.054 &   67.9\% &  389.5 &  302.8 \\
A2IPW (NW) &   0.064 &  0.092 &   51.4\% &  0.025 &  0.035 &   88.1\% &  303.8 &  239.8 \\
MA2IPW (K-nn)&  0.092 &  0.126 &   38.5\% &  0.044 &  0.058 &   66.2\% &  387.5 &  303.4 \\
MA2IPW (NW)&   { 0.062} &  0.085 &   52.7\% &  { 0.023} &  0.033 &   90.2\% &  303.3 &  236.6 \\
IPW (K-nn)&   0.151 &  0.208 &   26.1\% &  0.075 &  0.103 &   43.6\% &  446.3 &  367.0 \\
IPW (NW)&   0.161 &  0.232 &   23.4\% &  0.081 &  0.115 &   41.1\% &  446.6 &  375.0 \\
DM (K-nn)&   0.175 &  0.252 &   88.7\% &  0.086 &  0.126 &   96.1\% & 59.9 &  164.6 \\
DM (NW)&   0.111 &  0.167 &   82.1\% &  0.045 &  0.066 &   95.6\% &  119.6 &  176.2 \\
\midrule
OPT &  0.008 &  0.011 &  100.0\% &  0.004 &  0.005 &  100.0\% &   63.9 &  150.0 \\
\bottomrule
\end{tabular}
} 
\vspace{5mm}
\end{center}
\begin{center}
\caption{Experimental results using Dataset~2. } 
\medskip
\label{tbl:synthetic_exp_table_2}
\scalebox{0.9}{
\begin{tabular}{l|rrr|rrr|rr}
\toprule
 \multirow{2}{*}{} & \multicolumn{3}{c}{$T=150$} & \multicolumn{3}{c}{$T=300$} & \multicolumn{2}{c}{ST} \\
{} &     MSE &      STD &      Testing &      MSE &      STD &    Testing   &     LIL &      BF \\
\midrule
RCT &  0.084 &  0.129 &   4.7\% &  0.044 &  0.062 &   4.9\% &  497.2 &  481.8 \\
A2IPW (K-nn)&   0.050 &  0.071 &   5.6\% &  0.026 &  0.037 &   5.6\% &  497.2 &  477.3 \\
A2IPW (NW) &   0.029 &  0.045 &   4.4\% &  { 0.012} &  0.018 &   4.7\% &  496.2 &  480.6 \\
MA2IPW (K-nn)&  0.052 &  0.073 &   5.4\% &  0.025 &  0.034 &   4.7\% &  497.9 &  477.0 \\
MA2IPW (NW)&   0.032 &  0.047 &   6.3\% &  { 0.012} &  0.018 &   4.4\% &  496.6 &  475.3 \\
IPW (K-nn)&   0.088 &  0.126 &   5.6\% &  0.043 &  0.062 &   5.2\% &  495.8 &  478.1 \\
IPW (NW)&   0.094 &  0.140 &   5.8\% &  0.045 &  0.064 &   5.3\% &  495.6 &  471.6 \\
DM (K-nn)&   0.096 &  0.129 &  85.3\% &  0.046 &  0.063 &  89.5\% &   97.3 &  188.3 \\
DM (NW)&   0.054 &  0.075 &  53.7\% &  0.023 &  0.032 &  55.4\% &  312.8 &  305.3 \\
\midrule
OPT &   0.005 &  0.007 &   4.4\% &  0.002 &  0.003 &   4.4\% &  498.4 &  483.0 \\
\bottomrule
\end{tabular}
} 
\end{center}
\end{table}

\begin{table}
\begin{center}
\caption{Experimental results using Datasets~3. } 
\medskip
\label{tbl:synthetic3}
\scalebox{0.9}{
\begin{tabular}{lrrrrrrrr}
\toprule
 & \multicolumn{3}{c}{$T=150$} & \multicolumn{3}{c}{$T=300$} & \multicolumn{2}{c}{ST} \\
{} &     MSE &      STD &      Testing &      MSE &      STD &    Testing   &      LIL &      BF \\
\midrule
RCT  &  0.139 &  0.191 &   24.2\% &  0.069 &  0.102 &   44.8\% &  450.1 &  371.7 \\
A2IPW (K-nn)  &  0.089 &  0.127 &   39.0\% &  0.042 &  0.064 &   69.8\% &  385.8 &  296.6 \\
A2IPW (NW)  &  0.061 &  0.089 &   53.8\% &  { 0.024} &  0.033 &   90.3\% &  290.5 &  230.4 \\
MA2IPW (K-nn)  &  0.087 &  0.121 &   42.6\% &  0.040 &  0.054 &   70.2\% &  378.1 &  291.4 \\
MA2IPW (NW)  &  { 0.060} &  0.083 &   53.1\% &  0.025 &  0.035 &   90.8\% &  292.6 &  233.6 \\
IPW (K-nn)  &  0.158 &  0.214 &   26.3\% &  0.076 &  0.110 &   46.0\% &  443.2 &  365.6 \\
IPW (NW)  &  0.147 &  0.202 &   25.1\% &  0.080 &  0.112 &   46.1\% &  440.0 &  367.6 \\
DM (K-nn)  &  0.167 &  0.237 &   { 90.3\%} &  0.084 &  0.120 &   96.0\% &   { 57.3} &  { 162.6} \\
DM (NW)  &  0.109 &  0.156 &   83.2\% &  0.044 &  0.065 &   { 96.8\%} &  116.8 &  173.0 \\
\hline
OPT &  0.007 &  0.010 &  100.0\% &  0.003 &  0.005 &  100.0\% &   55.8 &  150.0 \\
\bottomrule
\end{tabular}
}
\end{center}
\vspace{5mm}
\begin{center}
\caption{Experimental results using Datasets~4. } 
\medskip
\label{tbl:synthetic4}
\scalebox{0.9}{
\begin{tabular}{lrrrrrrrr}
\toprule
 & \multicolumn{3}{c}{$T=150$} & \multicolumn{3}{c}{$T=300$} & \multicolumn{2}{c}{ST} \\
{} &     MSE &      STD &      Testing &      MSE &      STD &    Testing   &      LIL &      BF \\
\midrule
RCT  &  0.081 &  0.117 &   4.5\% &  0.041 &  0.056 &   { 3.5\%} &  496.3 &  484.0 \\
A2IPW (K-nn)  &  0.053 &  0.073 &   6.2\% &  0.024 &  0.035 &   5.1\% &  496.8 &  474.1 \\
A2IPW (NW) &  { 0.031} &  0.044 &   5.2\% &  0.012 &  0.017 &   6.1\% &  495.6 &  477.0 \\
MA2IPW (K-nn)  &  0.048 &  0.065 &   5.1\% &  0.024 &  0.035 &   4.9\% &  495.8 &  477.5 \\
MA2IPW (NW)  &  0.029 &  0.042 &   4.3\% &  { 0.011} &  0.015 &   4.4\% &  { 498.1} &  477.6 \\
IPW (K-nn) &  0.091 &  0.120 &   4.7\% &  0.048 &  0.067 &   6.1\% &  496.0 &  475.2 \\
IPW (NW) &  0.098 &  0.132 &   5.1\% &  0.049 &  0.066 &   5.9\% &  497.2 &  474.6 \\
DM (K-nn)  &  0.101 &  0.155 &  84.1\% &  0.049 &  0.075 &  87.2\% &  102.9 &  190.4 \\
DM (NW)  &  0.057 &  0.086 &  53.6\% &  0.023 &  0.034 &  57.6\% &  299.9 &  306.1 \\
\hline
OPT &  0.004 &  0.005 &   4.5\% &  0.002 &  0.003 &   4.5\% &  497.4 &  482.3 \\
\bottomrule
\end{tabular}
}
\end{center}
\end{table}

\begin{table*}[t]
\begin{center}
\caption{Experimental results using IHDP dataset with surface A. } 
\medskip
\label{tbl:A_exp_table}
\scalebox{0.9}{
\begin{tabular}{lrrrrrrrr}
\toprule
\multirow{3}{*}{} & \multicolumn{8}{c}{IHDP dataset with surface A, $\theta_0=4\neq 0$} \\
 & \multicolumn{3}{c}{$T=150$} & \multicolumn{3}{c}{$T=300$} & \multicolumn{2}{c}{ST} \\
{} &     MSE &      STD &      Testing &      MSE &      STD &    Testing   &      LIL &      BF \\
\midrule
RCT  &  0.674 &  1.066 &   60.4\% &  0.333 &  0.562 &   93.4\% &  355.4 &  228.0 \\
A2IPW (K-nn)  &  0.606 &  0.891 &   99.6\% &  0.310 &  0.500 &  { 100.0\%} &   86.3 &  150.5 \\
A2IPW (NW)  &  0.485 &  0.740 &   99.8\% &  { 0.202} &  0.311 &  { 100.0\%} &   76.2 &  150.2 \\
DM (K-nn)  &  1.138 &  1.745 &   99.9\% &  0.578 &  0.892 &  { 100.0\%} &   15.1 &  { 150.1} \\
DM (NW) &  0.999 &  1.427 &  { 100.0\%} &  0.454 &  0.623 &  { 100.0\%} &   26.4 &  150.0 \\
\bottomrule
\end{tabular}
}
\end{center}
\vspace{5mm}
\begin{center}
\caption{Experimental results using IHDP dataset with surface B. } 
\medskip
\label{tbl:B_exp_table}
\scalebox{0.9}{
\begin{tabular}{lrrrrrrrr}
\toprule
\multirow{3}{*}{} & \multicolumn{8}{c}{IHDP dataset with surface B, $\theta_0\neq 0$} \\
 & \multicolumn{3}{c}{$T=150$} & \multicolumn{3}{c}{$T=300$} & \multicolumn{2}{c}{ST} \\
{} &     MSE &      STD &      Testing &      MSE &      STD &    Testing   &      LIL &      BF \\
\midrule
RCT  &  4.522 &  19.635 &  53.9\% &  2.492 &   9.903 &  72.7\% &  355.3 &  274.4 \\
A2IPW (K-nn)  &  5.153 &  33.698 &  84.5\% &  2.683 &  13.545 &  90.6\% &  147.7 &  186.2 \\
A2IPW (NW)  &  4.379 &  23.713 &  84.3\% &  { 2.198} &  11.874 &  91.0\% &  142.9 &  185.0 \\
DM (K-nn)  &  7.065 &  23.954 &  { 98.1\%} &  3.892 &  14.737 &  { 98.8\%} &   { 18.7} &  { 152.1} \\
DM (NW)  &  7.410 &  30.313 &  94.1\% &  3.821 &  16.227 &  96.5\% &   53.0 &  162.6 \\
\bottomrule
\end{tabular}
}
\end{center}
\end{table*}

\section{Simulation Studies}
\label{sec:exp}
In this section, we evaluate the effectiveness of the proposed algorithm through experimental comparisons. The proposed method using the A2IPW estimator is compared against several alternative approaches, including the MA2IPW estimator, the IPW estimator, a randomized controlled trial (RCT) with a fixed treatment assignment probability of \( \pi_t(1\mid X_t, \mathcal{H}_{t-1}) = \pi_t(0\mid X_t, \mathcal{H}_{t-1}) = 0.5 \) for all \( t \), an oracle estimator \( \hat{\theta}^\mathrm{OPT}_T \) that operates under the optimal treatment-assignment probability, and a direct method (DM) estimator defined as $\frac{1}{T} \sum^T_{t=1} \left( \widehat{f}_t(1, X_t) - \widehat{f}_t(0, X_t) \right)$. 

To estimate the treatment-assignment probability and expected outcomes, we consider two cases using different nonparametric estimators: the Nadaraya–Watson (NW) estimator and the \( K \)-nearest neighbor (K-nn) estimator. For the MA2IPW estimator, we set the parameter as \( \zeta = t^{-1/1.5} \).

In Appendix~\ref{appdx:additional_exp_result}, we show simulation studies in which we compare our method using the A2IPW and the ADR estimator with the stratification tree method proposed in \citet{Meehan2022}. 

\subsection{Setting}
We conduct simulation studies using synthetic and semi-synthetic datasets. In each dataset, we perform the following three types of hypothesis testing:
\begin{itemize}
    \item Single-stage testing using a $T$-test.
    \item Sequential testing with Bonferroni (BF) correction.
    \item Sequential testing based on an adaptive confidence sequence derived from the LIL-based concentration inequality.
\end{itemize}

For all settings, the null and alternative hypotheses are given by
\[
\mathcal{H}_0: \theta_0 = 0, \quad \mathcal{H}_1: \theta_0 \neq 0.
\]
For the standard hypothesis testing, we construct confidence intervals using $T$-statistics derived from the asymptotic distribution in Theorem~\ref{thm:asymp_dist_A2IPW}. The sequential testing with BF correction is conducted at $t=150, 250, 350, 450$. For the LIL-based sequential testing, confidence intervals are constructed using $q_t$ as shown in Theorem~\ref{thm:conc_A2IPW}.

\subsection{Simulation Studies with Synthetic Dataset}
We first conduct experiments using synthetic datasets to evaluate the proposed method. At each round $t$, a covariate vector $X_t \in \mathbb{R}^5$ is generated as
\[
X_t = (X_{t1}, X_{t2}, X_{t3}, X_{t4}, X_{t5})^\top, \quad X_{tk} \sim \mathcal{N}(0, 1) \text{ for } k=1,2,3,4,5.
\]
The potential outcome model is given by
\[
Y_t(d) = \mu_d + \sum^5_{k=1} X_{tk} + e_{td},
\]
where $\mu_d$ is a constant and $e_{td}$ follows a normal distribution with standard deviation $\sigma_d$. The expectation of the potential outcome is $\mathbb{E}[Y_t(d)] = \mu_d$.

We generate four datasets, each containing $500$ units, under different settings for $\mu_d$ and $\sigma_d$:
\begin{itemize}
    \item Dataset~1: $\mu_1=0.8$, $\mu_0=0.3$, $\sigma_1=0.8$, $\sigma_0=0.3$.
    \item Dataset~2: $\mu_1=0.5$, $\mu_0=0.5$, $\sigma_1=0.8$, $\sigma_0=0.3$.
    \item Dataset~3: $\mu_1=0.8$, $\mu_0=0.3$, $\sigma_1=0.6$, $\sigma_0=0.4$.
    \item Dataset~4: $\mu_1=0.5$, $\mu_0=0.5$, $\sigma_1=0.6$, $\sigma_0=0.4$.
\end{itemize}

For each setting, we conduct $1000$ independent trials. The results are summarized in Tables~\ref{tbl:synthetic_exp_table}, \ref{tbl:synthetic3}, and \ref{tbl:synthetic4}. We report the mean squared error (MSE) between $\theta$ and $\hat{\theta}$, the standard deviation of the MSE (STD), and the rejection rates of hypothesis testing based on $T$-statistics at the $150$th (mid) and $300$th (final) rounds. Additionally, we present the stopping times for the LIL-based algorithm and the multiple testing with BF correction. If the null hypothesis is not rejected in sequential testing, the stopping time is set to $500$.

Across various datasets, the proposed algorithm achieved lower MSE compared to other methods. The DM estimator tends to reject the null hypothesis with small samples in Dataset~1 but exhibited high Type~II error in Dataset~2.

\subsection{Simulation Studies with Semi-Synthetic Data}
We also evaluated the proposed algorithm using semi-synthetic datasets constructed from the Infant Health and Development Program (IHDP). The IHDP dataset consists of simulated outcomes and covariates based on a real study, following the simulation setting proposed by \citet{doi:10.1198/jcgs.2010.08162}. The dataset contains $747$ units with $6$ continuous and $19$ binary covariates, with outcomes generated artificially.

\citet{doi:10.1198/jcgs.2010.08162} considers two response surfaces:
\begin{itemize}
    \item Response Surface~A:
    \begin{align*}
    &Y_t(0) \sim \mathcal{N}(X^\top_{t}\bm{\beta}_A, 1),\\
    &Y_t(1) \sim \mathcal{N}(X^\top_{t}\bm{\beta}_A+4, 1),
    \end{align*}
    where elements of $\bm{\beta}_A \in \mathbb{R}^{25}$ are randomly sampled from $\{0, 1, 2, 3, 4\}$ with probabilities $(0.5, 0.2, 0.15, 0.1, 0.05)$.
    \item Response Surface B:
    \begin{align*}
    &Y_t(0) \sim \mathcal{N}(\exp((X_{t} + W)^\top\bm{\beta}_B), 1),\\
    &Y_t(1) \sim \mathcal{N}(X^\top_{t}\bm{\beta}_B - q, 1),
    \end{align*}
    where $\bm{W}$ is an offset matrix with all elements equal to $0.5$, $q$ is a constant ensuring an average treatment effect of $4$, and elements of $\bm{\beta}_B$ are randomly sampled from $\{0, 0.1, 0.2, 0.3, 0.4\}$ with probabilities $(0.6, 0.1, 0.1, 0.1, 0.1)$.
\end{itemize}

For experiments, we randomly select $500$ units from the dataset. The results, summarized in Tables~\ref{tbl:A_exp_table} and \ref{tbl:B_exp_table}, include MSE, the standard deviation of MSE (STD), rejection rates at the $150$th and $300$th periods, and stopping times for the LIL-based and BF correction-based sequential testing. If the hypothesis is not rejected in sequential testing, the stopping time is set to $500$.

\subsection{Results}
The experimental results in Tables~\ref{tbl:synthetic_exp_table}--\ref{tbl:synthetic4} (synthetic data) and Tables~\ref{tbl:A_exp_table}--\ref{tbl:B_exp_table} (semi-synthetic IHDP data) reveal several notable patterns. In all datasets, the proposed adaptive algorithm using the A2IPW estimator, particularly with the Nadaraya--Watson kernel, consistently yields lower mean squared error than the baseline RCT and DM estimators. This performance gap becomes more pronounced at larger sample sizes, indicating that adaptively refining treatment-assignment probabilities based on accumulated data improves estimation accuracy.

Another observation concerns the DM estimator, which sometimes rejects the null hypothesis more readily when the true effect is clearly different from zero (as in Dataset~1). However, in scenarios where the true effect is close to zero (Dataset~2), it can fail to reject the null and thus exhibit higher Type II error. This pattern underscores that DM methods are sensitive to both sample size and the true effect magnitude and may be less robust when the treatment effect is marginal.

Standard RCT designs with constant assignment probabilities maintain unbiasedness but often show higher mean squared error relative to A2IPW. The adaptive nature of A2IPW allows it to focus allocations more efficiently, leading to more precise estimates of the treatment effect. The oracle (optimal) estimator, which knows the true assignment probabilities in advance, outperforms all other methods in terms of mean squared error and is included only to demonstrate the theoretical upper bound of performance.

The sequential testing procedures exhibit distinct behaviors. The LIL-based approach is typically conservative in practice and requires larger sample sizes before rejecting the null hypothesis, while the Bonferroni-based correction often stops earlier but can inflate Type I error. For instance, Tables~\ref{tbl:synthetic_exp_table} and~\ref{tbl:synthetic4} show cases where the Bonferroni-based method rejects the null more frequently, even when the underlying effect is subtle. Standard hypothesis testing (using a fixed sample size and T-statistics) avoids the complexity of sequential testing but does not allow the possibility of early termination.

Overall, the results suggest that the A2IPW-based adaptive design achieves lower estimation error and maintains favorable operating characteristics in both standard and sequential testing frameworks. Whether to use a sequential testing procedure depends on factors such as how rapidly decisions must be reached, the acceptable risk of false positives, and whether the total sample size can be determined in advance.

\section{Conclusion}
In this study, we designed an adaptive experimental framework to efficiently estimate the ATE and conduct hypothesis testing. We began by reviewing the semiparametric efficiency bound, which characterizes the fundamental limits of estimation efficiency as a function of the treatment-assignment probability. We then defined the efficient treatment-assignment probability as the minimizer of the semiparametric efficiency bound and leveraged this result to develop an optimal adaptive experimental design.

Our proposed method consists of two key phases: the treatment-assignment phase and the ATE-estimation phase. In the treatment-assignment phase, treatments are adaptively assigned based on an estimate of the efficient treatment-assignment probability. In the ATE estimation phase, we estimate the ATE using the proposed A2IPW estimator, which is constructed from the data collected in the treatment-assignment phase. We demonstrated that this estimator achieves asymptotic optimality by proving that its asymptotic variance matches the semiparametric efficiency bound. This optimality also ensures smaller sample sizes in hypothesis testing, improving the efficiency of experimental design.

In addition to establishing asymptotic optimality, we derived both asymptotic and non-asymptotic confidence intervals for the A2IPW estimator. The non-asymptotic bounds provide finite-sample guarantees, which are particularly useful in practical applications where sample sizes are limited. These confidence intervals enable rigorous inference while maintaining a tight dependence on the underlying data distribution.

Furthermore, we developed a hypothesis-testing framework tailored to our adaptive experimental design. We introduced two approaches: single-stage testing, which relies on a fixed sample size and asymptotic normality, and sequential testing, which dynamically determines sample size based on intermediate test results. We analyzed the theoretical properties of both approaches and highlighted scenarios where sequential testing can substantially reduce the required sample size while maintaining statistical rigor.

Our study contributes to the broader literature on adaptive experimental design by providing a theoretically grounded and practically implementable methodology for efficient ATE estimation and inference. Future research directions include extending our framework to accommodate more complex settings, such as network interference \citep{viviano2022experimentaldesignnetworkinterference}, clustered experimental designs \citep{viviano2025causalclusteringdesigncluster}, and heterogeneous treatment effects \citep{kato2024adaptivepolicylearning}. Further exploration of optimality guarantees in finite-sample regimes and their connections to best-arm identification remains an important avenue for research \citep{Kasy2021,KOCK2023624}. Additionally, incorporating reinforcement learning techniques into the treatment-assignment phase may enhance adaptability and extend the applicability of our approach to more dynamic experimental settings \citep{NathanUehara2019,adusumilli2024dynamicallyoptimaltreatmentallocation,sakaguchi2024policylearningoptimaldynamic}.

In summary, this study provides a comprehensive methodological framework for designing and analyzing adaptive experiments, ensuring both statistical efficiency and practical applicability in treatment-effect estimation and hypothesis testing.

\bibliographystyle{asa}
\bibliography{arXiv2.bbl}

\clearpage
\onecolumn

\appendix

\begin{center}
    {\bf Online Appendix}
\end{center}

\section{Estimation of \texorpdfstring{$\mathbb{E}\big[Y_t(a)\mid  x\big]$}{f} and \texorpdfstring{$\mathbb{E}\big[Y^2_t(a)\mid  x\big]$}{f}}
First, we consider how to estimate $f_0(a, x) = \mathbb{E}\big[Y_t(a)\mid  x\big]$ and $e_0(a, x) = \mathbb{E}\big[Y^2_t(a)\mid  x\big]$. When estimating $f_0(a, x) $ and $e_0(a, x)$, we need to construct consistent estimators from dependent samples obtained from an adaptive experiment. In a MAB problem, several non-parametric estimators are proved to be consistent, such as the $K$-nearest neighbor regression estimator and the Nadaraya-Watson kernel regression estimator \citep{yang2002,Qian2016}. As an example, we show the theoretical properties of the $K$-nearest neighbor regression estimator when using samples with bandit feedback in the following part.

\textbf{$K$-nearest neighbor regression:} We introduce nonparametric estimation of $f_0$ based on $K$-nearest neighbor regression using samples with bandit feedback \citep{yang2002}. For simplicity, we restrict $\mathcal{X}$ as $\mathcal{X}=[0, 1]^d$, which can be relaxed for each application.

First, we fix $x^*\in \mathcal{X}$. Let $k_n > 0$ be a value depending on the sample size $n$. Let $N_{t,k}$ be $\sum^t_{s=1}\mathbbm{1}[A_s=k]$. At $t$-th round, we gather $N_{t,k}$ samples from the case of $A_{t'}=k$ and reindex the samples as $\{(X_{t'}, Y_{t'})\}^{N_{t,k}}_{t'=1}$. We construct estimators using the $k_{N_{t,k}}$-NN regression and $\{(X_{t'}, Y_{t'})\}^{N_{t,k}}_{t'=1}$ as 
\begin{align*}
\widehat{f}_{t}(a, x^*) = \frac{1}{k_{N_{t,k}}}\sum^{k_{N_{t,k}}}_{i=1}Y_{\pi(x^*, i)},\quad\mathrm{and}\quad \widehat{e}_{t}(a, x^*) = \frac{1}{k_{N_{t,k}}}\sum^{k_{N_{t,k}}}_{i=1}Y^2_{\pi(x^*, i)},
\end{align*}
where $\pi$ is the permutation of $\{1,2,\dots,N_{t,k}\}$ such that 
\begin{align*}
\|X_{\pi(x^*, 1)}-x^*\| \leq \|X_{\pi(x^*, 2)}-x^*\|\leq \dots \leq \|X_{\pi(x^*, N_{t,k})}-x^*\|.
\end{align*}

For $\widehat{f}_{t-1}(a, x)$, \citet{yang2002} showed the following theoretical results. For simplicity, assume $\mathcal{X}=[0, 1]^d$ for an integer $d>0$.  First, they make the following assumption.
\begin{assumption}[\citet{yang2002}, Eq.~(5)]
The function $f_0(a, x)$ be continuous in $x\in\mathcal{X}$ for all $k\in\{1, 0\}$. 
\end{assumption}
Let $\psi(z; f_0(a, \cdot))$ be a modulus of continuity defined by
\begin{align*}
\psi(z; f_0(a, \cdot)) = \sup\left\{\left|f_0(a, x') - f_0(a, x'')\right|:|x' - x''|_\infty \leq z\right\}.
\end{align*}
The term $\psi$ represents the smoothness of the function $\nu_d$.
\begin{assumption}[\citet{yang2002}, Assumption~2]
The probability $p(x)$ is uniformly bounded above and away from $0$ on $\mathcal{X}=[0, 1]^d$, i.e., $\underline{c}\leq p(x)\leq \overline{c}$.
\end{assumption}
Assume $Y_t(a) = f_0(a, X_t) + \epsilon_{t, k}$, where $\epsilon_{t, k}$ is a random variable with mean $0$ and finite variance. 
\begin{assumption}[\citet{yang2002}, Assumption~3]
The error term $\epsilon_{t, k}$ also satisfies the moment condition such that there exist positive constants $v$ and $w$ satisfying, for all $m\geq 2$,
\begin{align*}
\mathbb{E}[|\epsilon_{t, k}|^m]\leq \frac{m!}{2}v^2w^{m-2}.
\end{align*}
\end{assumption}
Under these assumptions, we can show the following lemma from the result of \citet{yang2002}.
\begin{lemma}[\citet{yang2002}, Eq.~(4)]
\label{lmm:ci_condvar}
For $\kappa>0$, let $\eta_\kappa=\sup\{z: \psi(z; f_0(a, \cdot))\leq \kappa\}$. There exists a constant $M>0$ such that, for $\kappa>0$, $h<\eta_{\kappa/4}$, and $k_{N_{t,k}} \leq \underbar{c}th^k/2$, 
\begin{align*}
&\mathbb{P}\left(\left|\widehat{f}_{t}(a, x^*)-f_0(a,x^*)\right|\geq \kappa \right)\nonumber\\
&\leq M\exp\left(-\frac{3k_{N_{t,k}}}{14}\right)+\left(t^{d+2}+1\right)\left(\exp\left(-\frac{3k_{N_{t,k}}\varepsilon}{28}\right)+\exp\left(-\frac{k_{N_{t,k}}\varepsilon^2\kappa^2}{16(v^2+w\varepsilon\kappa/4)}\right)\right).
\end{align*}
\end{lemma}
According to \citet{yang2002}, for $k_t$ such that $k_t\varepsilon^2/\log t\to \infty$ and $k_{N_{t,k}}=o(t)$, we can choose $h\to 0$ that satisfies $h\geq (2k_{N_{t,k}}/(\underbar{c}t))^{1/d}$. From this discussion and the Borel-Cantelli lemma, we can show the following corollary \citep{yang2002}.
\begin{corollary}[\citet{yang2002}]
\label{cor:asconv_condvar}
For $k_t$ such that $k_t\varepsilon^2/\log t\to \infty$ and $k_{N_{t,k}}=o(t)$,
\begin{align*}
\left|\widehat{f}_{t}(a, x^*))-f_0(a,x^*)\right| \xrightarrow{\mathrm{p}} 0.
\end{align*}
\end{corollary}
Besides, when we use $k_{N_{t, k}}=O(\sqrt{t})$ in our algorithm, which satisfies $k_{N_{t,k}}\varepsilon^2/\log t\to \infty$ and $k_{N_{t,k}}=o(t)$, the following corollary holds.
\begin{corollary}
\label{cor:ci_condvar2}
For $k_t=\sqrt{t}$, there exists a constant $M>0$ such that, for $t>\left(\frac{2}{\underbar{c}\eta^k_{\kappa/4}}\right)^2$, 
\begin{align*}
&\mathbb{P}\left(\left|\widehat{f}_{t}(a, x^*)-f_0(a,x^*)\right|\geq \kappa \right)\\
&\leq M\exp\left(-\frac{3k_{N_{t,k}}}{14}\right)+\left(t^{d+2}+1\right)\left(\exp\left(-\frac{3k_t\varepsilon}{28}\right)+\exp\left(-\frac{k_{N_{t,k}}\varepsilon^2\kappa^2}{16(v^2+w\varepsilon\kappa/4)}\right)\right).
\end{align*}
\end{corollary}
Using these results, we can bound $\mathbb{E}\left[\big|\widehat{f}_{t}(a, x^*)-f_0(a,x^*)\big|\right]$ by the following lemma.
\begin{lemma}
\label{cor:ci_exp_condvar}
For $\kappa>0$, $\eta_\kappa=\sup\{z: \psi(z; v_d)\leq \kappa\}$, $k_t = \sqrt{t}$, and $t>\left(\frac{2}{\underbar{c}\eta^k_{\kappa/4}}\right)^2$, there exists a constant $M>0$ such that
\begin{align*}
&\mathbb{E}\left[\big|\widehat{f}_{t}(a, x^*)-f_0(a,x^*)\big|\right]\\
&\leq \kappa + C_2\Bigg(M\exp\left(-\frac{3k_{N_{t,k}}}{14}\right)\\
&\ \ \ \ \ \ \ \ \ \ \ \ \ \ \ \ \ \ \ \ \ +\left(t^{d+2}+1\right)\left(\exp\left(-\frac{3k_{N_{t,k}}\varepsilon}{28}\right)+\exp\left(-\frac{k_{N_{t,k}}\varepsilon^2\kappa^2}{16(v^2+w\varepsilon\kappa/4)}\right)\right)\Bigg).
\end{align*}
\end{lemma}
\begin{proof}
For $\kappa>0$, $\eta_\kappa=\sup\{z: \psi(z; v_d)\leq \kappa\}$, and $t>\left(\frac{2}{\underbar{c}\eta^m_{\kappa/4}}\right)^2$,
\begin{align*}
&\mathbb{E}\left[\big|\widehat{f}_{t}(a, x^*)-f_0(a,x^*)\big|\right]\\
&\leq \kappa + C_2\mathbb{P}\left(\big|\widehat{f}_{t}(a, x^*)-f_0(a,x^*)\big|\geq \kappa \right)\\
&\leq \kappa + C_2\Bigg(M\exp\left(-\frac{3k_{N_{t,k}}}{14}\right)\\
&\ \ \ \ \ \ \ \ \ \ \ \ \ \ \ \ \ \ \ \ \ +\left(T^{d+2}+1\right)\left(\exp\left(-\frac{3k_{N_{t,k}}\varepsilon}{28}\right)+\exp\left(-\frac{k_{N_{t,k}}\varepsilon^2\kappa^2}{16(v^2+w\varepsilon\kappa/4)}\right)\right)\Bigg).
\end{align*}
\end{proof}
The theoretical results of \citet{yang2002} is based on the assumption that the flexibility of the function is restricted and assignment probabilities are $>0$ for all treatments. Therefore, we can easily check that their results can apply to our case.

\section{Preliminaries}
\label{appdx:prelim}

\subsection{Mathematical Tools}

\begin{definition}[Uniformly Integrable, \citet{Hamilton1994}, p.~191]  
\label{dfn:uniint}
A sequence $\{A_t\}$ is said to be uniformly integrable if for every $\epsilon > 0$ there exists a number $c>0$ such that 
\begin{align*}
\mathbb{E}[|A_t|\cdot I[|A_t \geq c|]] < \epsilon
\end{align*}
for all $t$.
\end{definition}

\begin{proposition}[Sufficient Conditions for Uniformly Integrable, \citet{Hamilton1994}, Proposition~7.7, p.~191] 
\label{prp:suff_uniint}
(a) Suppose there exist $r>1$ and $M<\infty$ such that $\mathbb{E}[|A_t|^r]<M$ for all $t$. Then $\{A_t\}$ is uniformly integrable. (b) Suppose there exist $r>1$ and $M < \infty$ such that $\mathbb{E}[|b_t|^r]<M$ for all $t$. If $A_t = \sum^\infty_{j=-\infty}h_jb_{t-j}$ with $\sum^\infty_{j=-\infty}|h_j|<\infty$, then $\{A_t\}$ is uniformly integrable.
\end{proposition}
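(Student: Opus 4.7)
The plan is to establish part (a) by a direct truncation argument and then reduce part (b) to part (a) by deriving a uniform $L^r$-bound on the filtered sequence.

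For part (a), the key observation is that on the event $\{|A_t| \geq c\}$ we have $(|A_t|/c)^{r-1} \geq 1$ because $r > 1$, and hence $|A_t| \leq |A_t|^r/c^{r-1}$ there. Integrating gives
\[
\mathbb{E}[|A_t|\mathbbm{1}(|A_t|\geq c)] \;\leq\; \frac{1}{c^{r-1}}\,\mathbb{E}[|A_t|^r\mathbbm{1}(|A_t|\geq c)] \;\leq\; \frac{M}{c^{r-1}}.
\]
The right-hand side does not depend on $t$ and tends to $0$ as $c \to \infty$ because $r-1 > 0$. Given any $\epsilon > 0$, choosing $c$ so that $M/c^{r-1} < \epsilon$ verifies the condition in Definition~\ref{dfn:uniint}. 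This part is essentially a one-line Markov-type bound.

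For part (b), the plan is to show that $\sup_t \mathbb{E}[|A_t|^r] < \infty$ and then invoke part (a). I would first work with truncated partial sums $A_t^{(N)} = \sum_{|j|\leq N} h_j b_{t-j}$. Finite Minkowski together with $\|b_{t-j}\|_r \leq M^{1/r}$ yields
\[
\|A_t^{(N)}\|_r \;\leq\; \sum_{|j|\leq N} |h_j|\,\|b_{t-j}\|_r \;\leq\; M^{1/r}\sum_{j=-\infty}^{\infty}|h_j|.
\]
The same estimate applied to the differences $A_t^{(N+k)} - A_t^{(N)}$, together with $\sum_j |h_j| < \infty$, shows $\{A_t^{(N)}\}_N$ is Cauchy in $L^r$, so it has an $L^r$-limit which must coincide (at least along a subsequence) with the pointwise limit $A_t$. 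Lower semicontinuity of the $L^r$-norm, or equivalently Fatou's lemma applied to $|A_t^{(N)}|^r$, preserves the bound in the limit, giving $\mathbb{E}[|A_t|^r] \leq M\bigl(\sum_j|h_j|\bigr)^r$ uniformly in $t$. Part (a) then immediately delivers uniform integrability.

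The main obstacle is the technical bookkeeping for the doubly-infinite sum: one must verify that $\sum_j h_j b_{t-j}$ is well defined in a strong enough sense for Minkowski's inequality to pass to the limit. Absolute summability of the filter $\{h_j\}$ makes the $L^r$ Cauchy argument essentially mechanical, but some care is needed to identify the pointwise-defined $A_t$ with the $L^r$-limit (e.g., by extracting an almost-surely convergent subsequence of the Cauchy sequence). Once this identification is made, both statements reduce to Markov-type estimates and the proof closes cleanly.
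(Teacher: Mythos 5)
Your argument is correct. Note that the paper does not prove this statement at all: it is imported verbatim as Proposition~7.7 of the cited reference \citet{GVK126800421} and used as a black box, so there is no in-paper proof to compare against. Your part (a) is the standard Markov/de la Vall\'ee Poussin bound $\mathbb{E}[|A_t|\mathbbm{1}(|A_t|\geq c)]\leq M c^{-(r-1)}$, and your part (b) correctly reduces to part (a) via Minkowski's inequality on truncated sums, an $L^r$-Cauchy argument using $\sum_j |h_j|<\infty$, and Fatou along an almost-surely convergent subsequence to identify the pointwise sum with the $L^r$-limit and carry over the uniform bound $\mathbb{E}[|A_t|^r]\leq M\bigl(\sum_j |h_j|\bigr)^r$; this is exactly the textbook route and closes without gaps.
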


\begin{proposition}[$L^r$ Convergence Theorem, \citet{loeve1977probability}]
\label{prp:lr_conv_theorem}
Let $0<r<\infty$, suppose that $\mathbb{E}\big[|a_n|^r\big] < \infty$ for all $n$ and that $a_n \xrightarrow{\mathrm{p}}a$ as $n\to \infty$. The following are equivalent: 
\begin{description}
\item{(i)} $a_n\to a$ in $L^r$ as $n\to\infty$;
\item{(ii)} $\mathbb{E}\big[|a_n|^r\big]\to \mathbb{E}\big[|a|^r\big] < \infty$ as $n\to\infty$; 
\item{(iii)} $\big\{|a_n|^r, n\geq 1\big\}$ is uniformly integrable.
\end{description}
\end{proposition}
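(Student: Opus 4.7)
The plan is to prove the cyclic chain $(\mathrm{i})\Rightarrow(\mathrm{ii})\Rightarrow(\mathrm{iii})\Rightarrow(\mathrm{i})$, splitting into the cases $r\geq 1$ and $0<r<1$ wherever the arithmetic differs.

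For $(\mathrm{i})\Rightarrow(\mathrm{ii})$, I would invoke a reverse triangle inequality. When $r\geq 1$, Minkowski gives $\bigl|\mathbb{E}[|a_n|^r]^{1/r}-\mathbb{E}[|a|^r]^{1/r}\bigr|\leq \mathbb{E}[|a_n-a|^r]^{1/r}\to 0$, and continuity of $t\mapsto t^r$ on $[0,\infty)$ upgrades this to convergence of $\mathbb{E}[|a_n|^r]$. When $r\in(0,1)$ Minkowski fails, but the elementary inequality $\bigl||x|^r-|y|^r\bigr|\leq |x-y|^r$ yields $|\mathbb{E}[|a_n|^r]-\mathbb{E}[|a|^r]|\leq \mathbb{E}[|a_n-a|^r]\to 0$ directly.

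For $(\mathrm{ii})\Rightarrow(\mathrm{iii})$, put $g_n:=|a_n|^r$ and $g:=|a|^r$; the continuous mapping theorem yields $g_n\xrightarrow{\mathrm{p}}g$, and hypothesis (ii) says $\mathbb{E}[g_n]\to\mathbb{E}[g]<\infty$. I would upgrade this to $L^1$ convergence via the identity $|g_n-g|=(g_n-g_n\wedge g)+(g-g_n\wedge g)$: the second summand is dominated by $g\in L^1$ and tends to zero in probability, so dominated convergence gives $\mathbb{E}[g-g_n\wedge g]\to 0$; the first summand has expectation $\mathbb{E}[g_n]-\mathbb{E}[g_n\wedge g]\to \mathbb{E}[g]-\mathbb{E}[g]=0$, combining (ii) with the same dominated-convergence limit. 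From $g_n\to g$ in $L^1$ one obtains uniform integrability because $\mathbb{E}[g_n\mathbf{1}\{g_n>c\}]\leq \mathbb{E}[|g_n-g|]+\mathbb{E}[g\mathbf{1}\{g_n>c\}]$, and the right-hand side can be driven below any prescribed $\epsilon$ uniformly in $n$: the singleton $\{g\}$ is UI, and $\mathbb{P}(g_n>c)$ is small uniformly in $n$ for large $c$ by Markov applied to the bounded sequence $\mathbb{E}[g_n]$.

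For $(\mathrm{iii})\Rightarrow(\mathrm{i})$, I would apply Vitali's convergence theorem to $h_n:=|a_n-a|^r$. Continuity of $x\mapsto |x|^r$ and $a_n\xrightarrow{\mathrm{p}}a$ give $h_n\xrightarrow{\mathrm{p}}0$; Fatou applied to the UI sequence $\{|a_n|^r\}$ shows $\mathbb{E}[|a|^r]<\infty$; and the pointwise bound $h_n\leq 2^r(|a_n|^r+|a|^r)$ inherits UI since sums and dominations preserve UI. Vitali then concludes $\mathbb{E}[h_n]\to 0$, which is (i). The main obstacle is the middle step $(\mathrm{ii})\Rightarrow(\mathrm{iii})$: converting convergence in probability plus convergence of $L^1$ norms into $L^1$ convergence itself (the Scheffé-style manipulation with $g_n\wedge g$) is the only nonroutine move, and the subsequent passage from $L^1$ convergence to UI requires the small Markov-type bookkeeping sketched above. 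The remaining implications reduce mechanically once the correct inequality (Minkowski, its $r<1$ surrogate, or Vitali) is identified.
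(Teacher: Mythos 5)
The paper states this proposition as an imported tool, citing Lo\`{e}ve (1977), and gives no proof of its own, so there is nothing to compare against line by line. Your cyclic argument $(\mathrm{i})\Rightarrow(\mathrm{ii})\Rightarrow(\mathrm{iii})\Rightarrow(\mathrm{i})$ is the standard textbook proof and is correct: the $r\geq 1$ versus $r<1$ case split for the reverse triangle inequality, the Scheff\'{e}-type manipulation with $g_n\wedge g$ (using dominated convergence along a.s.\ convergent subsequences), and the Vitali step with the bound $|a_n-a|^r\leq 2^r\left(|a_n|^r+|a|^r\right)$ all go through as you describe.
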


\subsection{Martingale Limit Theorems}
\begin{proposition}
[Weak Law of Large Numbers for Martingale, \citet{hall2014martingale}]
\label{prp:mrtgl_WLLN}
Let $\{S_n = \sum^{n}_{i=1} X_i, \mathcal{H}_{t}, t\geq 1\}$ be a martingale and $\{b_n\}$ a sequence of positive constants with $b_n\to\infty$ as $n\to\infty$. Then, writing $X_{ni} = X_i\mathbbm{1}[|X_i|\leq b_n]$, $1\leq i \leq n$, we have that $b^{-1}_n S_n \xrightarrow{\mathrm{p}} 0$ as $n\to \infty$ if 
\begin{description}
\item[(i)] $\sum^n_{i=1}P(|X_i| > b_n)\to 0$;
\item[(ii)] $b^{-1}_n\sum^n_{i=1}\mathbb{E}[X_{ni}|  \mathcal{H}_{t-1}] \xrightarrow{\mathrm{p}} 0$, and;
\item[(iii)] $b^{-2}_n \sum^n_{i=1}\big\{\mathbb{E}[X^2_{ni}] - \mathbb{E}\big[\mathbb{E}\big[X_{ni}|  \mathcal{H}_{t-1}\big]\big]^2\big\}\to 0$.
\end{description}
\end{proposition}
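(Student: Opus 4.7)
The plan is the standard truncation plus martingale-orthogonality argument; each of the three hypotheses is engineered to control exactly one piece of the decomposition, so the bulk of the proof is bookkeeping.

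First I would reduce to the truncated sum. Write $S'_n = \sum_{i=1}^n X_{ni}$ and observe
\[
\Pr(S_n \neq S'_n) \;\leq\; \Pr\bigl(\exists\, i\leq n: |X_i|>b_n\bigr) \;\leq\; \sum_{i=1}^n \Pr(|X_i|>b_n) \;\to\; 0
\]
by condition (i). Therefore it is enough to prove $b_n^{-1} S'_n \xrightarrow{\mathrm{p}} 0$, since probability convergence is preserved when two sequences agree with probability tending to one.

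Next I would recenter to recover a martingale difference structure. Define $Z_{ni} = X_{ni} - \mathbb{E}[X_{ni}\mid \mathcal{H}_{i-1}]$, so that for each fixed $n$ the array $\{Z_{ni},\mathcal{H}_i\}_{i\leq n}$ is an MDS, and split
\[
S'_n \;=\; \sum_{i=1}^n Z_{ni} \;+\; \sum_{i=1}^n \mathbb{E}[X_{ni}\mid \mathcal{H}_{i-1}].
\]
Condition (ii) says the compensator term, divided by $b_n$, tends to $0$ in probability. For the MDS term, I would use orthogonality of martingale differences to compute
\[
\mathbb{E}\!\left[\Bigl(\sum_{i=1}^n Z_{ni}\Bigr)^{\!2}\right] \;=\; \sum_{i=1}^n \mathbb{E}[Z_{ni}^2] \;=\; \sum_{i=1}^n \Bigl(\mathbb{E}[X_{ni}^2] - \mathbb{E}\bigl[(\mathbb{E}[X_{ni}\mid \mathcal{H}_{i-1}])^2\bigr]\Bigr),
\]
where the final equality is the conditional variance decomposition $\mathbb{E}[Z_{ni}^2]=\mathbb{E}[\mathrm{Var}(X_{ni}\mid\mathcal{H}_{i-1})]$. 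This is exactly the quantity in condition (iii), so dividing by $b_n^2$ sends the second moment of $\sum_i Z_{ni}/b_n$ to $0$, and Chebyshev's inequality then gives $b_n^{-1}\sum_i Z_{ni} \xrightarrow{\mathrm{p}} 0$.

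Combining the two pieces yields $b_n^{-1} S'_n \xrightarrow{\mathrm{p}} 0$, which together with the truncation step gives $b_n^{-1} S_n \xrightarrow{\mathrm{p}} 0$. I do not expect a real obstacle: the only point worth flagging is the rewriting of condition (iii) as $b_n^{-2}\sum_i \mathbb{E}[Z_{ni}^2]\to 0$ via the conditional-variance identity, after which the three conditions align one-to-one with the truncation-remainder, compensator, and $L^2$-MDS pieces of the decomposition. Since the citation points to Hall and Heyde, no sharper tools (e.g., a martingale CLT or Burkholder's inequality) are needed.
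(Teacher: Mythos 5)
Your argument is correct and complete: the truncation step is controlled by (i) via a union bound, the compensator by (ii), and the recentred martingale-difference array by (iii) through the orthogonality identity $\mathbb{E}[(\sum_i Z_{ni})^2]=\sum_i\mathbb{E}[Z_{ni}^2]$ together with the conditional-variance decomposition and Chebyshev. The paper itself offers no proof of this proposition --- it is quoted verbatim as a preliminary from Hall and Heyde --- so there is nothing to compare against; your proof is the standard one from that source, and it also implicitly fixes the ambiguous placement of the square in the paper's transcription of condition (iii), correctly reading it as $\mathbb{E}\big[(\mathbb{E}[X_{ni}\mid\mathcal{H}_{i-1}])^2\big]$.
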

\noindent The weak law of large numbers for martingale holds when the random variable is bounded by a constant.
\begin{proposition}
[Central Limit Theorem for a Martingale Difference Sequence, \citet{Hamilton1994}, Proposition~7.9, p.~194] 
\label{prp:marclt}
Let $\{X_t\}^\infty_{t=1}$ be an $n$-dimensional vector martingale difference sequence with $\overline{X}_T=\frac{1}{T}\sum^T_{t=1}X_t$. Suppose that 
\begin{description}
\item[(a)] $\mathbb{E}[X^2_t] = \sigma^2_t$, a positive value with $(1/T)\sum^T_{t=1}\sigma^2_t\to\sigma^2_0$, a positive value; 
\item[(b)] $\mathbb{E}[|X_t|^r] < \infty$ for some $r>2$;
\item[(c)] $(1/T)\sum^{T}_{t=1}X^2_t\xrightarrow{p}\sigma^2_0$. 
\end{description}
Then $\sqrt{T}\overline{X}_T\xrightarrow{d}\mathcal{N}(\bm{0}, \sigma^2)$.
\end{proposition}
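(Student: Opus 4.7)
The result is a classical central limit theorem for martingale difference sequences, and my plan is to verify the hypotheses of the McLeish (1974) martingale CLT (equivalently, Theorem~3.2 of Hall and Heyde, 1980). For the vector statement I would reduce to the scalar case via the Cram\'er--Wold device: if $\sqrt{T}\,c^\top \overline{X}_T \xrightarrow{d} \mathcal{N}(0, c^\top \Sigma c)$ for every fixed $c\in\mathbb{R}^n$, the joint convergence follows, and the projections $c^\top X_t$ inherit (a)--(c) with the same moment exponent. So the core work is to establish the scalar version.

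McLeish's CLT requires three ingredients: (i) convergence of the sample quadratic variation $(1/T)\sum_t X_t^2$ to a positive limit; (ii) asymptotic negligibility of the maximum summand, $\max_{t\leq T}|X_t|/\sqrt{T}\xrightarrow{p}0$; and (iii) uniform integrability of $\{(1/T)\sum_t X_t^2\}_{T\ge 1}$. Ingredient~(i) is given verbatim by condition~(c). For~(ii), I would apply a union bound and Markov's inequality using the $r$-th moment in~(b): for any $\epsilon>0$,
\begin{align*}
P\!\left(\max_{1\leq t\leq T}|X_t|/\sqrt{T}>\epsilon\right)
\;\leq\; \sum_{t=1}^T P(|X_t|>\epsilon\sqrt{T})
\;\leq\; \frac{1}{(\epsilon\sqrt{T})^r}\sum_{t=1}^T \mathbb{E}[|X_t|^r]
\;=\; O\!\left(T^{1-r/2}\right),
\end{align*}
which tends to $0$ because $r>2$. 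For~(iii), I would note that by Jensen (since $r/2>1$) and~(b),
\begin{align*}
\mathbb{E}\!\left[\left|\tfrac{1}{T}\sum_{t=1}^T X_t^2\right|^{r/2}\right]
\;\leq\; \frac{1}{T}\sum_{t=1}^T \mathbb{E}[|X_t|^r] \;\leq\; M,
\end{align*}
so Proposition~\ref{prp:suff_uniint}(a), applied with exponent $r/2>1$, yields uniform integrability.

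Combining the three ingredients, McLeish's theorem delivers $\sqrt{T}\,\overline{X}_T \xrightarrow{d} \mathcal{N}(0,\sigma^2)$, where the limiting variance is identified as $\sigma^2$ by noting that (c) together with~(iii) upgrades convergence in probability to $L^1$ convergence (Proposition~\ref{prp:lr_conv_theorem}), whence $\mathbb{E}[(1/T)\sum_t X_t^2] = (1/T)\sum_t \sigma_t^2 \to \sigma^2$, consistent with~(a).

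The main obstacle is conceptual rather than computational: condition~(c) is phrased in terms of the \emph{unconditional} sample quadratic variation rather than the conditional variance $(1/T)\sum_t \mathbb{E}[X_t^2\mid\mathcal{F}_{t-1}]$ that appears in the more textbook versions of the martingale CLT (e.g.\ Brown 1971). Picking the McLeish formulation sidesteps this, because it works directly with the observed quadratic variation, which is exactly what~(c) provides; the moment condition~(b) then exists only to supply the maximal-summand negligibility and the uniform-integrability hook needed by McLeish. The remaining steps are routine bookkeeping.
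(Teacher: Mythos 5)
The paper does not prove this proposition at all: it is imported verbatim as a known result from \citet{GVK126800421} (Proposition~7.9 there), so there is no in-paper argument to compare yours against. Your proof is essentially correct and follows the standard route by which the cited result is established in the time-series literature: Cram\'er--Wold reduction to the scalar case, then the McLeish/Hall--Heyde martingale CLT, whose hypotheses you verify correctly. Condition~(c) supplies the convergence of the sample quadratic variation; your union-bound/Markov computation gives negligibility of the maximal summand; and your Jensen bound $\mathbb{E}\bigl[\bigl|\tfrac{1}{T}\sum_t X_t^2\bigr|^{r/2}\bigr]\leq M$ gives uniform integrability of the normalized quadratic variation, which in turn implies the $L^1$-boundedness of $\max_t X_t^2/T$ that the textbook formulations require (indeed that boundedness already follows from~(a) alone, since $\mathbb{E}[\max_t X_t^2]/T\leq \tfrac{1}{T}\sum_t\sigma_t^2\to\sigma^2$, so this ingredient is cheaper than your argument suggests). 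Two small caveats. First, both your $O(T^{1-r/2})$ bound and your Jensen step silently use $\sup_t\mathbb{E}[|X_t|^r]\leq M<\infty$, whereas condition~(b) as literally written only asserts finiteness for each $t$; the uniform bound is the intended reading (it is how the condition is stated in the source that \citet{GVK126800421} relies on), but you should say so explicitly, since without uniformity the maximal-inequality step can fail. Second, for the vector case the statement's ``$X_t^2$'' must be read as $X_tX_t^\top$ with $\sigma^2$ a covariance matrix for the projections $c^\top X_t$ to inherit (a)--(c); your Cram\'er--Wold reduction is fine once that reading is made explicit. Neither point is a gap in the mathematics, only in the bookkeeping of hypotheses.
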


On the convergence rate of the central limit theorem  for a martingale difference sequence, see \citet{hall1980martingale}.

\section{Proof of Proposition~\ref{optprob}}
\label{appdx:prp:opt_prob}
\begin{proof}
Let $\mathcal{P}$ be a function class of $p:\mathcal{X}\to(0, 1)$, and let us define the following function $b:\mathcal{P}\to\mathbb{R}$:
\begin{align*}
b(p) = \mathbb{E}\left[\frac{e(1, X_t)}{b(X_t)}\right] + \mathbb{E}\left[\frac{e(0, X_t)}{1-b(X_t)}\right].
\end{align*}
Here, we rewrite $b(p)$ as follows:
\begin{align*}
b(p) = \mathbb{E}\left[\mathbb{E}\left[\frac{e(1, X_t)}{p(X_t)} + \frac{e(0, X_t)}{1-p(X_t)}\bigg|X_t\right]\right].
\end{align*}
We consider minimizing $b(p)$ by minimizing $\widetilde{b}(q) = \mathbb{E}\left[\frac{e(1, X_t)}{q} + \frac{e(0, X_t)}{1-q}\bigg|X_t\right]$ for $q\in[\varepsilon, 1-\varepsilon]$. The first derivative of $\widetilde{b}(q)$ with respect to $q$ is given as follows:
\begin{align*}
\widetilde{b}'(q) = -\frac{e(1, X_t)}{q^2} + \frac{e(0, X_t)}{(1-q)^2}.
\end{align*}
The second derivative of $f$ is given as follows:
\begin{align*}
\widetilde{b}''(q) =  2 \frac{e(1, X_t)}{q^3} + 2\frac{e(0, X_t)}{(1-q)^3}.
\end{align*}
For $\varepsilon < q < 1-\varepsilon$, because $\widetilde{b}''(q) > 0$, the minimizer $q^*$ of $\widetilde{b}$ satisfies the following equation:
\begin{align*}
-\frac{e(1, X_t)}{(q^{*})^2} + \frac{e(0, X_t)}{(1-q^*)^2}=0.
\end{align*}
This equation is equivalent to 
\begin{align*}
&-(q^*)^2e(0, X_t) + (1-q^*)^2e(1, X_t) =0\\
&\Leftrightarrow\ q^*\sqrt{e(0, X_t)} = (1-q^*)\sqrt{e(1, X_t)}\\
&\Leftrightarrow\  q^* = \frac{\sqrt{e(1, X_t)}}{\sqrt{e(1, X_t)}+\sqrt{e(0, X_t)}}.
\end{align*}
Therefore,
\begin{align*}
b^{\mathrm{OPT}}(D=1|X_t) = \frac{\sqrt{e(1, X_t)}}{\sqrt{e(1, X_t)}+\sqrt{e(0, X_t)}}.
\end{align*}
\end{proof}

\section{Proof of Theorem~\ref{thm:asymp_dist_A2IPW}}
\label{appdx:thm:asymp_dist_A2IPW}
\begin{proof}
Note that the estimator is given as follows:
\begin{align*}
&\widehat{\theta}^{\mathrm{A2IPW}}_T = \frac{1}{T}\sum^T_{t=1}\Bigg(\frac{\mathbbm{1}[A_t=1]\big(Y_t - \widehat{f}_{t-1}(1, X_t)\big)}{\pi_t(1\mid  X_t, \mathcal{H}_{t-1})} - \frac{\mathbbm{1}[A_t=0]\big(Y_t - \widehat{f}_{t-1}(0, X_t)\big)}{\pi_t(0\mid  x, \mathcal{H}_{t-1})}\\
&\ \ \ \ \ \ \ \ \ \ \ \ \ \ \ \ \ \ \ \ \ \ \ \ \ \ \ \ \ \ \ \ \ \ \ \ \ \ \ \ \ \ \ \ \ \ \ \ \ \ \ \ \ \ \ \ \ \ \ \ \ \ \ \ \ \ \ \ \ \ \ \ \ \ \ \ \ \ \ \ \ \ \ \ \ \ \ +  \widehat{f}_{t-1}(1, X_t) - \widehat{f}_{t-1}(0, X_t)\Bigg).\nonumber
\end{align*}

Let us note that $z_t$ is defined as  
\begin{align*}
&\frac{\mathbbm{1}[A_t=1]\big(Y_t - \widehat{f}_{t-1}(1, X_t)\big)}{\pi_t(1\mid  x, \mathcal{H}_{t-1})} - \frac{\mathbbm{1}[A_t=0]\big(Y_t - \widehat{f}_{t-1}(0, X_t)\big)}{\pi_t(0\mid  X_t, \mathcal{H}_{t-1})} +  \widehat{f}_{t-1}(1, X_t) - \widehat{f}_{t-1}(0, X_t) - \theta_0.
\end{align*}

The sequence $\{z_t\}^T_{t=1}$ is a martingale difference sequence, i.e.,
\begin{align*}
&\mathbb{E}\big[z_t|  \mathcal{H}_{t-1}\big]\\
&= \mathbb{E}\Bigg[\frac{\mathbbm{1}[A_t=1]\big(Y_t - \widehat{f}_{t-1}(1, X_t)\big)}{\pi_t(1\mid  X_t, \mathcal{H}_{t-1})} - \frac{\mathbbm{1}[A_t=k]\big(Y_t - \widehat{f}_{t-1}(0, X_t)\big)}{\pi_t(0\mid  X_t, \mathcal{H}_{t-1})}\\
&\ \ \ \ \ \ \ \ \ \ \ \ \ \ \ \ \ \ \ \ \ \ \ \ \ \ \ \ \ \ \ \ \ \ \ \ \ \ \ \ \ \ \ \ \ \ \ \ \ \ \ \ \ \ \ \ +  \widehat{f}_{t-1}(0, X_t) - \widehat{f}_{t-1}(0, X_t) - \theta_0|  \mathcal{H}_{t-1}\Bigg]\\
&= \mathbb{E}\Bigg[\widehat{f}_{t-1}(1, X_t) - \widehat{f}_{t-1}(0, X_t) - \theta_0\\
&\ \ \ \ \ \ \ + \mathbb{E}\left[\frac{\mathbbm{1}[A_t=1]\big(Y_t - \widehat{f}_{t-1}(1, X_t)\big)}{\pi_t(1\mid  X_t, \mathcal{H}_{t-1})} - \frac{\mathbbm{1}[A_t=0]\big(Y_t - \widehat{f}_{t-1}(0, X_t)\big)}{\pi_t(0\mid  X_t, \mathcal{H}_{t-1})}\mid  X_t, \mathcal{H}_{t-1}\right]|  \mathcal{H}_{t-1}\Bigg]\\
&= \mathbb{E}\Bigg[\widehat{f}_{t-1}(1, X_t) - \widehat{f}_{t-1}(0, X_t) - \theta_0 + f_0(1, X_t) - f_0(0, X_t)\\
&\ \ \ \ \ \ \ \ \ \ \ \ \ \ \ \ \ \ \ \ \ \ \ \ \ \ \ \ \ \ \ \ \ \ \ \ \ \ \ \ \ \ \ \ \ \ \ \ \ \ \ \ \ \ \ \ \ \ \ \ \ \ - \widehat{f}_{t-1}(1, X_t) + \widehat{f}_{t-1}(0, X_t)|  \mathcal{H}_{t-1}\Bigg]\\
&= 0.
\end{align*}

Therefore, to derive the asymptotic distribution, we consider applying the central limit theorem  for a martingale difference sequence introduced in Proposition~\ref{prp:marclt}. There are the following three conditions in the statement.
\begin{description}
\item[(a)] $\mathbb{E}\big[z^2_t\big] = \nu^2_t > 0$ with $\big(1/T\big) \sum^T_{t=1}\nu^2_t\to \nu^2 > 0$;
\item[(b)] $\mathbb{E}\big[|z_t|^r\big] < \infty$ for some $r>2$;
\item[(c)] $\big(1/T\big)\sum^T_{t=1}z^2_t\xrightarrow{\mathrm{p}} \nu^2$. 
\end{description}
Because we assumed the boundedness of $z_t$ by assuming the boundedness of $Y_t$, $\widehat{f}_{t-1}$, and $1/\pi_{t}$, the condition~(b) holds. Therefore, the remaining task is to show the conditions~(a) and (c) hold.

\subsection*{Step~1: Checking Condition~(a)}
We can rewrite $\mathbb{E}\big[z^2_t\big]$ as
\begin{align*}
&\mathbb{E}\big[z^2_t\big]\\
&=\mathbb{E}\Bigg[\Bigg(\frac{\mathbbm{1}[A_t=1]\big(Y_t - \widehat{f}_{t-1}(1, X_t)\big)}{\pi_t(1\mid  X_t, \mathcal{H}_{t-1})} - \frac{\mathbbm{1}[A_t=0]\big(Y_t - \widehat{f}_{t-1}(0, X_t)\big)}{\pi_t(0\mid  X_t, \mathcal{H}_{t-1})}\\
&\ \ \ \ \ \ \ \ \ \ \ \ \ \ \ \ \ \ \ \ \ \ \ \ \ \ \ \ \ \ \ \ \ \ \ \ \ \ \ \ \ \ \ \ \ \ \ \ \ \ \ \ \ \ \ \ \ \ \ \ \ \ \ \ \ \ \ \ \ \ +  \widehat{f}_{t-1}(1, X_t) - \widehat{f}_{t-1}(0, X_t) - \theta_0\Bigg)^2\Bigg]\\
&=\mathbb{E}\Bigg[\Bigg(\frac{\mathbbm{1}[A_t=1]\big(Y_t - \widehat{f}_{t-1}(1, X_t)\big)}{\pi_t(1\mid  X_t, \mathcal{H}_{t-1})} - \frac{\mathbbm{1}[A_t=0]\big(Y_t - \widehat{f}_{t-1}(0, X_t)\big)}{\pi_t(0\mid  X_t, \mathcal{H}_{t-1})}\\
&\ \ \ \ \ \ \ \ \ \ \ \ \ \ \ \ \ \ \ \ \ \ \ \ \ \ \ \ \ \ \ \ \ \ \ \ \ \ \ \ \ \ \ \ \ \ \ \ \ \ \ \ \ \ \ \ \ \ \ \ \ \ \ \ \ \ \ \ \ \  +  \widehat{f}_{t-1}(1, X_t) - \widehat{f}_{t-1}(0, X_t) - \theta_0\Bigg)^2\Bigg]\\
&\ \ \ -\mathbb{E}\left[\sum^{1}_{a=0}\frac{v\big(a, X_t\big)}{\widetilde{\pi}(a\mid  X_t)} + \Big(\theta_0(X_t) - \theta_0\Big)^2\right]+\mathbb{E}\left[\sum^{1}_{a=0}\frac{v\big(a, X_t\big)}{\widetilde{\pi}(a\mid  X_t)} + \Big(\theta_0(X_t) - \theta_0\Big)^2\right].
\end{align*}
Therefore, we prove that the RHS of the following equation varnishes asymptotically to show that the condition~(a) holds.
\begin{align}
\label{eq:1}
&\mathbb{E}\big[z^2_t\big] - \mathbb{E}\left[\sum^{1}_{a=0}\frac{v\big(a, X_t\big)}{\widetilde{\pi}(a\mid  X_t)} + \Big(\theta_0(X_t) - \theta_0\Big)^2\right]\nonumber\\
&=\mathbb{E}\Bigg[\Bigg(\frac{\mathbbm{1}[A_t=1]\big(Y_t - \widehat{f}_{t-1}(1, X_t)\big)}{\pi_t(1\mid  X_t, \mathcal{H}_{t-1})} - \frac{\mathbbm{1}[A_t=0]\big(Y_t - \widehat{f}_{t-1}(0, X_t)\big)}{\pi_t(0\mid  X_t, \mathcal{H}_{t-1})}\nonumber\\
&\ \ \ \ \ \ \ \ \ \ \ \ \ \ \ \ \ \ \ \ \ \ \ \ \ \ \ \ \ \ \ \ \ \ \ \ \ \ \ \ \ \ \ \ \ \ \ \ \ \ \ \ \ \ \ \ \ \ \ \ \ \ \ \ \ \ \ \ \ \ +  \widehat{f}_{t-1}(1, X_t) - \widehat{f}_{t-1}(0, X_t) - \theta_0\Bigg)^2\Bigg]\nonumber\\
&\ \ \ -\mathbb{E}\left[\sum^{1}_{a=0}\frac{v\big(a, X_t\big)}{\widetilde{\pi}(a\mid  X_t)} + \Big(\theta_0(X_t) - \theta_0\Big)^2\right].
\end{align}
First, for the first term of the RHS of \eqref{eq:1},
\begin{align*}
&\mathbb{E}\Bigg[\Bigg(\frac{\mathbbm{1}[A_t=1]\big(Y_t - \widehat{f}_{t-1}(1, X_t)\big)}{\pi_t(1\mid  X_t, \mathcal{H}_{t-1})} - \frac{\mathbbm{1}[A_t=0]\big(Y_t - \widehat{f}_{t-1}(0, X_t)\big)}{\pi_t(0\mid  X_t, \mathcal{H}_{t-1})}\nonumber\\
&\ \ \ \ \ \ \ \ \ \ \ \ \ \ \ \ \ \ \ \ \ \ \ \ \ \ \ \ \ \ \ \ \ \ \ \ \ \ \ \ \ \ \ \ \ \ \ \ \ \ \ \ \ \ \ \ \ \ \ \ \ \ \ \ \ \ \ \ \ \  +  \widehat{f}_{t-1}(1, X_t) - \widehat{f}_{t-1}(0, X_t) - \theta_0\Bigg)^2\Bigg]\\
&=\mathbb{E}\left[\left(\frac{\mathbbm{1}[A_t=1]\big(Y_t - \widehat{f}_{t-1}(1, X_t)\big)}{\pi_t(1\mid  X_t, \mathcal{H}_{t-1})}\right)^2\right]\\
&\ \ \ + \mathbb{E}\left[\left(\frac{\mathbbm{1}[A_t=0]\big(Y_t - \widehat{f}_{t-1}(0, X_t)\big)}{\pi_t(0\mid  X_t, \mathcal{H}_{t-1})}\right)^2\right]\\
&\ \ \ +  \mathbb{E}\left[\left(\widehat{f}_{t-1}(1, X_t) - \widehat{f}_{t-1}(0, X_t) - \theta_0\right)^2\right]\\
&\ \ \ -2\mathbb{E}\left[\left(\frac{\mathbbm{1}[A_t=1]\big(Y_t - \widehat{f}_{t-1}(1, X_t)\big)}{\pi_t(1\mid  X_t, \mathcal{H}_{t-1})}\right)\left(\frac{\mathbbm{1}[A_t=0]\big(Y_t - \widehat{f}_{t-1}(0, X_t)\big)}{\pi_t(0\mid  X_t, \mathcal{H}_{t-1})}\right)\right]\\
&\ \ \ +2\mathbb{E}\left[\left(\frac{\mathbbm{1}[A_t=1]\big(Y_t - \widehat{f}_{t-1}(1, X_t)\big)}{\pi_t(1\mid  X_t, \mathcal{H}_{t-1})}\right)\left(\widehat{f}_{t-1}(1, X_t) - \widehat{f}_{t-1}(0, X_t) - \theta_0\right)\right]\\
&\ \ \ -2\mathbb{E}\left[\left(\frac{\mathbbm{1}[A_t=0]\big(Y_t - \widehat{f}_{t-1}(0, X_t)\big)}{\pi_t(0\mid  X_t, \mathcal{H}_{t-1})}\right)\left(\widehat{f}_{t-1}(1, X_t) - \widehat{f}_{t-1}(0, X_t) - \theta_0\right)\right].
\end{align*}
Because $\mathbbm{1}[A_t=1]\mathbbm{1}[A_t=0] = 0$, $\mathbbm{1}[A_t=k]\mathbbm{1}[A_t=k] = \mathbbm{1}[A_t=k]$, and $\mathbbm{1}[A_t=k]Y_t = Y_t(a)$ for $k\in\{1, 0\}$, we have
\begin{align*}
&\mathbb{E}\left[\left(\frac{\mathbbm{1}[A_t=k]\big(Y_t - \widehat{f}_{t-1}(a, X_t)\big)}{\pi_t(a\mid  X_t, \mathcal{H}_{t-1})}\right)^2\right]=\mathbb{E}\left[\frac{\big(Y_t(a) - \widehat{f}_{t-1}(a, X_t)\big)^2}{\pi_t(a\mid  X_t, \mathcal{H}_{t-1})}\right],\\
&\mathbb{E}\left[\left(\frac{\mathbbm{1}[A_t=1]\big(Y_t - \widehat{f}_{t-1}(1, X_t)\big)}{\pi_t(1\mid  X_t, \mathcal{H}_{t-1})}\right)\left(\frac{\mathbbm{1}[A_t=0]\big(Y_t - \widehat{f}_{t-1}(0, X_t)\big)}{\pi_t(0\mid  X_t, \mathcal{H}_{t-1})}\right)\right]=0,\\
&\mathbb{E}\Bigg[\left(\frac{\mathbbm{1}[A_t=1]\big(Y_t - \widehat{f}_{t-1}(1, X_t)\big)}{\pi_t(1\mid  X_t, \mathcal{H}_{t-1})} - \frac{\mathbbm{1}[A_t=0]\big(Y_t - \widehat{f}_{t-1}(0, X_t)\big)}{\pi_t(0\mid  X_t, \mathcal{H}_{t-1})}\right)\nonumber\\
&\ \ \ \ \ \ \ \ \ \ \ \ \ \ \ \ \ \ \ \ \ \ \ \ \ \ \ \ \ \ \ \ \ \ \ \ \ \ \ \ \ \ \ \ \ \ \ \ \ \ \ \ \ \ \ \ \ \ \ \ \ \ \ \ \ \ \ \ \ \ \times \left(\widehat{f}_{t-1}(1, X_t) - \widehat{f}_{t-1}(0, X_t) - \theta_0\right)\Bigg]\\
&=\mathbb{E}\Bigg[\mathbb{E}\left[\frac{\mathbbm{1}[A_t=1]\big(Y_t - \widehat{f}_{t-1}(1, X_t)\big)}{\pi_t(1\mid  X_t, \mathcal{H}_{t-1})} - \frac{\mathbbm{1}[A_t=0]\big(Y_t - \widehat{f}_{t-1}(0, X_t)\big)}{\pi_t(0\mid  X_t, \mathcal{H}_{t-1})}\mid  X_t, \mathcal{H}_{t-1}\right]\\
&\ \ \ \ \ \ \ \ \ \ \ \ \ \ \ \ \ \ \ \ \ \ \ \ \ \ \ \ \ \ \ \ \ \ \ \ \ \ \ \ \ \ \ \ \ \ \ \ \ \ \ \ \ \ \ \ \ \ \ \ \ \ \ \ \ \ \ \ \ \ \times \left(\widehat{f}_{t-1}(1, X_t) - \widehat{f}_{t-1}(0, X_t) - \theta_0\right)\Bigg]\\
&=\mathbb{E}\left[\left(f_0(1, X_t) - f_0(0, X_t) - \hat f_{t-1}(1, X_t) + \hat f_{t-1}(0, X_t)\right)\left(\widehat{f}_{t-1}(1, X_t) - \widehat{f}_{t-1}(0, X_t) - \theta_0\right)\right].
\end{align*}

Therefore, for the first term of the RHS of \eqref{eq:1},
\begin{align*}
&\mathbb{E}\Bigg[\Bigg(\frac{\mathbbm{1}[A_t=1]\big(Y_t - \widehat{f}_{t-1}(1, X_t)\big)}{\pi_t(1\mid  X_t, \mathcal{H}_{t-1})} - \frac{\mathbbm{1}[A_t=0]\big(Y_t - \widehat{f}_{t-1}(0, X_t)\big)}{\pi_t(0\mid  X_t, \mathcal{H}_{t-1})}\nonumber\\
&\ \ \ \ \ \ \ \ \ \ \ \ \ \ \ \ \ \ \ \ \ \ \ \ \ \ \ \ \ \ \ \ \ \ \ \ \ \ \ \ \ \ \ \ \ \ \ \ \ \ \ \ \ \ \ \ \ \ \ \ \ \ \ \ \ \ \ \ \ \  +  \widehat{f}_{t-1}(1, X_t) - \widehat{f}_{t-1}(0, X_t) - \theta_0\Bigg)^2\Bigg]\\
&= \mathbb{E}\Bigg[\frac{\big(Y_t(1) - \widehat{f}_{t-1}(1, X_t)\big)^2}{\pi_t(1\mid  X_t, \mathcal{H}_{t-1})} +\frac{\big(Y_t(0) - \widehat{f}_{t-1}(0, X_t)\big)^2}{\pi_t(0\mid  X_t, \mathcal{H}_{t-1})} +  \Big(\widehat{f}_{t-1}(1, X_t) - \widehat{f}_{t-1}(0, X_t) - \theta_0\Big)^2\\
&\ \ \ + 2\left(f_0(1, X_t) - f_0(0, X_t) - \hat f_{t-1}(1, X_t) + \hat f_{t-1}(0, X_t)\right)\left(\widehat{f}_{t-1}(1, X_t) - \widehat{f}_{t-1}(0, X_t) - \theta_0\right)\Bigg].
\end{align*}
For the second term of the RHS of \eqref{eq:1},
\begin{align*}
&\mathbb{E}\left[\sum^{1}_{a=0}\frac{v\big(a, X_t\big)}{\widetilde{\pi}(a\mid  X_t)} + \Big(\theta_0(X_t) - \theta_0\Big)^2\right]\\
&=\mathbb{E}\left[\frac{\big(Y_t(1) - f_0(1, X_t)\big)^2}{\widetilde{\pi}(1\mid  X_t)} +\frac{\big(Y_t(0) - f_0(0, X_t)\big)^2}{\widetilde{\pi}(0\mid  X_t)} +  \Big(f_0(1, X_t) - f_0(0, X_t) - \theta_0\Big)^2\right].
\end{align*}
Using these equations, the RHS of \eqref{eq:1} can be calculated as 
\begin{align*}
&\mathbb{E}\Bigg[\Bigg(\frac{\mathbbm{1}[A_t=1]\big(Y_t - \widehat{f}_{t-1}(1, X_t)\big)}{\pi_t(1\mid  X_t, \mathcal{H}_{t-1})} - \frac{\mathbbm{1}[A_t=0]\big(Y_t - \widehat{f}_{t-1}(0, X_t)\big)}{\pi_t(0\mid  X_t, \mathcal{H}_{t-1})}\\
&\ \ \ \ \ \ \ \ \ \ \ \ \ \ \ \ \ \ \ \ \ \ \ \ \ \ \ \ \ \ \ \ \ \ \ \ \ \ \ \ \ \ \ \ \ \ \ \ \ \ \ \ \ \ \ \ \ \ \ \ \ \ \ \ \ \ \ \ \ \ \ \ \ \ \ \ \ +  \widehat{f}_{t-1}(1, X_t) - \widehat{f}_{t-1}(0, X_t) - \theta_0\Bigg)^2\Bigg]\\
&\ \ \ \ \ \ \ \ \ \ \ -\mathbb{E}\left[\sum^{1}_{a=0}\frac{v\big(a, X_t\big)}{\widetilde{\pi}(a\mid  X_t)} + \Big(\theta_0(X_t) - \theta_0\Big)^2\right]\\
&=\mathbb{E}\Bigg[\frac{\big(Y_t(1) - \widehat{f}_{t-1}(1, X_t)\big)^2}{\pi_t(1\mid  X_t, \mathcal{H}_{t-1})} +\frac{\big(Y_t(0) - \widehat{f}_{t-1}(0, X_t)\big)^2}{\pi_t(0\mid  X_t, \mathcal{H}_{t-1})} +  \Big(\widehat{f}_{t-1}(1, X_t) - \widehat{f}_{t-1}(0, X_t) - \theta_0\Big)^2\\
&\ \ \ + 2\left(f_0(1, X_t) - f_0(0, X_t) - \hat f_{t-1}(1, X_t) + \hat f_{t-1}(0, X_t)\right)\left(\widehat{f}_{t-1}(1, X_t) - \widehat{f}_{t-1}(0, X_t) - \theta_0\right)\Bigg]\\
&\ \ \ -\mathbb{E}\Bigg[\frac{\big(Y_t(1) - f_0(1, X_t)\big)^2}{\widetilde{\pi}(1\mid  X_t)} +\frac{\big(Y_t(0) - f_0(0, X_t)\big)^2}{\widetilde{\pi}(0\mid  X_t)} +  \Big(f_0(1, X_t) - f_0(0, X_t) - \theta_0\Big)^2\Bigg].
\end{align*}
By taking the absolute value, we can bound the RHS as 
\begin{align*}
&\mathbb{E}\Bigg[\frac{\big(Y_t(1) - \widehat{f}_{t-1}(1, X_t)\big)^2}{\pi_t(1\mid  X_t, \mathcal{H}_{t-1})} +\frac{\big(Y_t(0) - \widehat{f}_{t-1}(0, X_t)\big)^2}{\pi_t(0\mid  X_t, \mathcal{H}_{t-1})} +  \Big(\widehat{f}_{t-1}(1, X_t) - \widehat{f}_{t-1}(0, X_t) - \theta_0\Big)^2\\
&\ \ \ + 2\left(f_0(1, X_t) - f_0(0, X_t) - \hat f_{t-1}(1, X_t) + \hat f_{t-1}(0, X_t)\right)\left(\widehat{f}_{t-1}(1, X_t) - \widehat{f}_{t-1}(0, X_t) - \theta_0\right)\Bigg]\\
&\ \ \ -\mathbb{E}\Bigg[\frac{\big(Y_t(1) - f_0(1, X_t)\big)^2}{\widetilde{\pi}(1\mid  X_t)} +\frac{\big(Y_t(0) - f_0(0, X_t)\big)^2}{\widetilde{\pi}(0\mid  X_t)} +  \Big(f_0(1, X_t) - f_0(0, X_t) - \theta_0\Big)^2\Bigg]\\
&\leq \mathbb{E}\Bigg[\Bigg|\Bigg\{\frac{\big(Y_t(1) - \widehat{f}_{t-1}(1, X_t)\big)^2}{\pi_t(1\mid  X_t, \mathcal{H}_{t-1})} +\frac{\big(Y_t(0) - \widehat{f}_{t-1}(0, X_t)\big)^2}{\pi_t(0\mid  X_t, \mathcal{H}_{t-1})} +  \Big(\widehat{f}_{t-1}(1, X_t) - \widehat{f}_{t-1}(0, X_t) - \theta_0\Big)^2\\
&\ \ \ + 2\left(f_0(1, X_t) - f_0(0, X_t) - \hat f_{t-1}(1, X_t) + \hat f_{t-1}(0, X_t)\right)\left(\widehat{f}_{t-1}(1, X_t) - \widehat{f}_{t-1}(0, X_t) - \theta_0\right)\Bigg\}\\
&\ \ \ -\Bigg\{\frac{\big(Y_t(1) - f_0(1, X_t)\big)^2}{\widetilde{\pi}(1\mid  X_t)} +\frac{\big(Y_t(0) - f_0(0, X_t)\big)^2}{\widetilde{\pi}(0\mid  X_t)} +  \Big(f_0(1, X_t) - f_0(0, X_t) - \theta_0\Big)^2\Bigg\}\Bigg|\Bigg].
\end{align*}
From the triangle inequality, we have
\begin{align*}
&\mathbb{E}\Bigg[\Bigg|\Bigg\{\frac{\big(Y_t(1) - \widehat{f}_{t-1}(1, X_t)\big)^2}{\pi_t(1\mid  X_t, \mathcal{H}_{t-1})} +\frac{\big(Y_t(0) - \widehat{f}_{t-1}(0, X_t)\big)^2}{\pi_t(0\mid  X_t, \mathcal{H}_{t-1})} +  \Big(\widehat{f}_{t-1}(1, X_t) - \widehat{f}_{t-1}(0, X_t) - \theta_0\Big)^2\\
&\ \ \ + 2\left(f_0(1, X_t) - f_0(0, X_t) - \hat f_{t-1}(1, X_t) + \hat f_{t-1}(0, X_t)\right)\left(\widehat{f}_{t-1}(1, X_t) - \widehat{f}_{t-1}(0, X_t) - \theta_0\right)\Bigg\}\\
&\ \ \ -\Bigg\{\frac{\big(Y_t(1) - f_0(1, X_t)\big)^2}{\widetilde{\pi}(1\mid  X_t)} +\frac{\big(Y_t(0) - f_0(0, X_t)\big)^2}{\widetilde{\pi}(0\mid  X_t)} +  \Big(f_0(1, X_t) - f_0(0, X_t) - \theta_0\Big)^2\Bigg\}\Bigg|\Bigg]\\
&\leq \sum_{a\in\{1,0\}}\mathbb{E}\Bigg[\Bigg|\frac{\big(Y_t(a) - \widehat{f}_{t-1}(a, X_t)\big)^2}{\pi_t(a\mid  X_t, \mathcal{H}_{t-1})} - \frac{\big(Y_t(a) - f_0(a, X_t)\big)^2}{\widetilde{\pi}(a\mid  X_t)}\Bigg|\Bigg]\\
&\ \ \ + \mathbb{E}\Bigg[\Bigg|\Big(\widehat{f}_{t-1}(1, X_t) - \widehat{f}_{t-1}(0, X_t) - \theta_0\Big)^2 - \Big(f_0(1, X_t) - f_0(0, X_t) - \theta_0\Big)^2\Bigg|\Bigg]\\
&\ \ \ + 2\mathbb{E}\Bigg[\Bigg|\left(f_0(1, X_t) - f_0(0, X_t) - \hat f_{t-1}(1, X_t) + \hat f_{t-1}(0, X_t)\right)\\
&\ \ \ \ \ \ \ \ \ \ \ \ \ \ \ \ \ \ \ \ \ \ \ \ \ \ \ \ \ \ \ \ \ \ \ \ \ \ \ \ \ \ \ \ \ \ \ \ \ \ \ \ \ \ \ \ \ \ \ \ \ \ \ \ \ \ \times \left(\widehat{f}_{t-1}(1, X_t) - \widehat{f}_{t-1}(0, X_t) - \theta_0\right)\Bigg|\Bigg].
\end{align*}
Because all elements are assumed to be bounded and $b_1^2 - b_2^2 = (b_1+b_2)(b_1-b_2)$ for variables $b_1$ and $b_2$, there exist constants $\tilde C_0$, $\tilde C_1$, $\tilde C_2$, and $\tilde C_f$ such that
\begin{align*}
& \sum_{a\in\{1,0\}}\mathbb{E}\Bigg[\Bigg|\frac{\big(Y_t(a) - \widehat{f}_{t-1}(a, X_t)\big)^2}{\pi_t(a\mid  X_t, \mathcal{H}_{t-1})} - \frac{\big(Y_t(a) - f_0(a, X_t)\big)^2}{\widetilde{\pi}(a\mid  X_t)}\Bigg|\Bigg]\\
&\ \ \ + \mathbb{E}\Bigg[\Bigg|\Big(\widehat{f}_{t-1}(1, X_t) - \widehat{f}_{t-1}(0, X_t) - \theta_0\Big)^2 - \Big(f_0(1, X_t) - f_0(0, X_t) - \theta_0\Big)^2\Bigg|\Bigg]\\
&\ \ \ + 2\mathbb{E}\Bigg[\Bigg|\left(f_0(1, X_t) - f_0(0, X_t) - \hat f_{t-1}(1, X_t) + \hat f_{t-1}(0, X_t)\right)\\
&\ \ \ \ \ \ \ \ \ \ \ \ \ \ \ \ \ \ \ \ \ \ \ \ \ \ \ \ \ \ \ \ \ \ \ \ \ \ \ \ \ \ \ \ \ \ \ \ \ \ \ \ \ \ \ \ \ \times \left(\widehat{f}_{t-1}(1, X_t) - \widehat{f}_{t-1}(0, X_t) - \theta_0\right)\Bigg|\Bigg]\\
& \leq \tilde C_0\sum_{a\in\{1,0\}}\mathbb{E}\Bigg[\Bigg|\frac{\big(Y_t(a) - \widehat{f}_{t-1}(a, X_t)\big)}{\sqrt{\pi_t(a\mid  X_t, \mathcal{H}_{t-1})}} - \frac{\big(Y_t(a) - f_0(a, X_t)\big)}{\sqrt{\widetilde{\pi}(a\mid  X_t)}}\Bigg|\Bigg]\\
&\ \ \ + \mathbb{E}\Bigg[\Bigg|\Big(\widehat{f}_{t-1}(1, X_t) - \widehat{f}_{t-1}(0, X_t) - \theta_0\Big)^2 - \Big(f_0(1, X_t) - f_0(0, X_t) - \theta_0\Big)^2\Bigg|\Bigg]\\
&\ \ \ + 2\mathbb{E}\Bigg[\Bigg|\left(f_0(1, X_t) - f_0(0, X_t) - \hat f_{t-1}(1, X_t) + \hat f_{t-1}(0, X_t)\right)\\
&\ \ \ \ \ \ \ \ \ \ \ \ \ \ \ \ \ \ \ \ \ \ \ \ \ \ \ \ \ \ \ \ \ \ \ \ \ \ \ \ \ \ \ \ \ \ \ \ \ \ \ \ \ \ \ \ \ \times\left(\widehat{f}_{t-1}(1, X_t) - \widehat{f}_{t-1}(0, X_t) - \theta_0\right)\Bigg|\Bigg]\\
& \leq \tilde C_1\sum_{a\in\{1,0\}}\mathbb{E}\Bigg[\Bigg|\sqrt{\widetilde{\pi}(a\mid  X_t)}\big(Y_t - \widehat{f}_{t-1}(a, X_t)\big) - \sqrt{\pi_t(a\mid  X_t, \mathcal{H}_{t-1})}\big(Y_t - f_0(a, X_t)\big)\Bigg|\Bigg]\\
&\ \ \ + \mathbb{E}\Bigg[\Bigg|\Big(\widehat{f}_{t-1}(1, X_t) - \widehat{f}_{t-1}(0, X_t) - \theta_0\Big)^2 - \Big(f_0(1, X_t) - f_0(0, X_t) - \theta_0\Big)^2\Bigg|\Bigg]\\
&\ \ \ + 2\mathbb{E}\Bigg[\Bigg|\left(f_0(1, X_t) - f_0(0, X_t) - \hat f_{t-1}(1, X_t) + \hat f_{t-1}(0, X_t)\right)\\
&\ \ \ \ \ \ \ \ \ \ \ \ \ \ \ \ \ \ \ \ \ \ \ \ \ \ \ \ \ \ \ \ \ \ \ \ \ \ \ \ \ \ \ \ \ \ \ \ \ \ \ \ \ \ \ \ \ \times\left(\widehat{f}_{t-1}(1, X_t) - \widehat{f}_{t-1}(0, X_t) - \theta_0\right)\Bigg|\Bigg]\\
&\leq \widetilde{C}_1\sum_{a\in\{1,0\}}\mathbb{E}\left[\Big| \sqrt{\widetilde{\pi}(a\mid  X_t)}\widehat{f}_{t-1}(a, X_t) - \sqrt{\pi_t(a\mid  X_t, \mathcal{H}_{t-1})}f_0(a, X_t) \Big|\right]\\
&\ \ \ + \widetilde{C}_2 \sum_{a\in\{1,0\}}\mathbb{E}\left[\Big| \sqrt{\widetilde{\pi}(a\mid  X_t)} - \sqrt{\pi_t(a\mid  X_t, \mathcal{H}_{t-1})} \Big|\right]\\
&\ \ \ + \widetilde{C}_3 \sum_{a\in\{1,0\}}\mathbb{E}\left[\Big| \widehat{f}_{t-1}(a, X_t) - f_0(a, X_t) \Big|\right].
\end{align*}
From $b_1b_2 - b_3b_4 = (b_1 - b_3)b_4 - (b_4 - b_2)b_1$ for variables $b_1$, $b_2$, $b_3$, and $b_4$, there exist $\tilde C_z$ and $\tilde C_5$ such that 
\begin{align*}
&\widetilde{C}_1\sum_{a\in\{1,0\}}\mathbb{E}\left[\Big| \sqrt{\widetilde{\pi}(a\mid  X_t)}\widehat{f}_{t-1}(a, X_t) - \sqrt{\pi_t(a\mid  X_t, \mathcal{H}_{t-1})}f_0(a, X_t) \Big|\right]\\
&\ \ \ + \widetilde{C}_2 \sum_{a\in\{1,0\}}\mathbb{E}\left[\Big| \sqrt{\widetilde{\pi}(a\mid  X_t)} - \sqrt{\pi_t(a\mid  X_t, \mathcal{H}_{t-1})} \Big|\right]\\
&\ \ \ + \widetilde{C}_3 \sum_{a\in\{1,0\}}\mathbb{E}\left[\Big| \widehat{f}_{t-1}(a, X_t) - f_0(a, X_t) \Big|\right]\\
&\leq \widetilde{C}_4 \sum_{a\in\{1,0\}}\mathbb{E}\left[\Big| \sqrt{\widetilde{\pi}(a\mid  X_t)} - \sqrt{\pi_t(a\mid  X_t, \mathcal{H}_{t-1})} \Big|\right]\\
&\ \ \ + \widetilde{C}_5 \sum_{a\in\{1,0\}}\mathbb{E}\left[\Big| \widehat{f}_{t-1}(a, X_t) - f_0(a, X_t) \Big|\right].
\end{align*}
From $\pi_t(a\mid  x, \mathcal{H}_{t-1}) - \widetilde{\pi}(a\mid  x)\xrightarrow{\mathrm{p}}0$, we have $\sqrt{\pi_t(a\mid  x, \mathcal{H}_{t-1})} - \sqrt{\widetilde{\pi}(a\mid  x)}\xrightarrow{\mathrm{p}}0$. From the assumption that the point convergences in probability, i.e., for all $x\in\mathcal{X}$ and $k\in\{1, 0\}$, $\sqrt{\pi_t(a\mid  x, \mathcal{H}_{t-1})} - \sqrt{\widetilde{\pi}(a\mid  x)}\xrightarrow{\mathrm{p}}0$ and $\widehat{f}_{t-1}(a, x) - f_0(a, x)\xrightarrow{\mathrm{p}}0$ as $t\to\infty$, if $\sqrt{\pi_t(a\mid  x, \mathcal{H}_{t-1})}$, and $\widehat{f}_{t-1}(a, x)$ are  uniformly integrable, for fixed $x\in \mathcal{X}$, we can prove that 
\begin{align*}
&\mathbb{E}\big[|\sqrt{\pi_t(a\mid  X_t, \mathcal{H}_{t-1})} - \sqrt{\widetilde{\pi}(a\mid  X_t)}| \mid  X_t=x, \mathcal{H}_{t-1}\big]\\
&\ \ \ \ \ \ \ \ \ \ \ \ \ \ \ \ \ \ \ \ = \mathbb{E}\big[|\sqrt{\pi_t(a\mid  x, \mathcal{H}_{t-1})} - \sqrt{\widetilde{\pi}(a\mid  x)}|\big]  \to 0,\\
&\mathbb{E}\big[|\widehat{f}_{t-1}(a, X_t) - f_0(a, X_t)|\mid  X_t = x, \mathcal{H}_{t-1}\big] = \mathbb{E}\big[|\widehat{f}_{t-1}(a, x) - f_0(a, x)|\big] \to 0,
\end{align*}
as $t\to\infty$ using $L^r$-convergence theorem (Proposition~\ref{prp:lr_conv_theorem}). Here, we used the fact that $\widehat{f}_{t-1}(a, x)$ and $\sqrt{\pi_t(a\mid  x, \mathcal{H}_{t-1})}$ are independent from $X_t$. For fixed $x\in\mathcal{X}$, we can show that $\sqrt{\pi_t(a\mid  x, \mathcal{H}_{t-1})}$, and $\widehat{f}_{t-1}(a, x)$ are uniformly integrable from the boundedness of $\sqrt{\pi_t(a\mid  x, \mathcal{H}_{t-1})}$, and $\widehat{f}_{t-1}(a, x)$ (Proposition~\ref{prp:suff_uniint}). From the point convergence of $\mathbb{E}[|\sqrt{\pi_t(a\mid  X_t, \mathcal{H}_{t-1})} - \sqrt{\widetilde{\pi}(a\mid  X_t)}| \mid  X_t=x]$ and $\mathbb{E}[|\widehat{f}_{t-1}(a, X_t) - f_0(a, X_t)|\mid  X_t = x]$, by using the Lebesgue's dominated convergence theorem, we can show that
\begin{align*}
&\mathbb{E}_{X_t, \mathcal{H}_{t-1}}\big[\mathbb{E}\big[|\sqrt{\pi_t(a\mid  X_t, \mathcal{H}_{t-1})} - \sqrt{\widetilde{\pi}(a\mid  X_t)}| \mid  X_t, \mathcal{H}_{t-1}\big]\big]  \to 0,\\
&\mathbb{E}_{X_t, \mathcal{H}_{t-1}}\big[\mathbb{E}[|\widehat{f}_{t-1}(a, X_t) - f_0(a, X_t)| \mid  X_t, \mathcal{H}_{t-1}]\big]  \to 0.
\end{align*}
As $t\to\infty$,
\begin{align*}
&\mathbb{E}\big[z^2_t\big] - \mathbb{E}\left[\sum^{1}_{a=0}\frac{v\big(a, X_t\big)}{\widetilde{\pi}(a\mid  X_t)} + \Big(\theta_0(X_t) - \theta_0\Big)^2\right]\to 0.
\end{align*}
Therefore, for any $\epsilon > 0$, there exists $\tilde t > 0$ such that 
\begin{align*}
&\frac{1}{T} \sum^{T}_{t=1}\Bigg(\mathbb{E}\big[z^2_t\big] - \mathbb{E}\left[\sum^{1}_{a=0}\frac{v\big(a, X_t\big)}{\widetilde{\pi}(a\mid  X_t)} + \Big(\theta_0(X_t) - \theta_0\Big)^2\right]\Bigg)\leq \tilde t/T + \epsilon.
\end{align*}
Here, $\mathbb{E}\left[\sum^{1}_{a=0}\frac{v\big(a, X_t\big)}{\widetilde{\pi}(a\mid  X_t)} + \Big(\theta_0(X_t) - \theta_0\Big)^2\right] = \mathbb{E}\left[\sum^{1}_{a=0}\frac{v\big(a, X\big)}{\widetilde{\pi}(a\mid  X)} + \Big(\theta_0(X) - \theta_0\Big)^2\right]$ does not depend on periods. Therefore, $\big(1/T\big) \sum^{T}_{t=1}\sigma^2_t- \sigma^2 \leq \tilde t/T + \epsilon \to 0$ as $T\to\infty$, where 
\begin{align*}
\sigma^2 = \mathbb{E}\left[\sum^{1}_{a=0}\frac{v\big(a, X\big)}{\widetilde{\pi}(a\mid  X)} + \Big(\theta_0(X) - \theta_0\Big)^2\right].
\end{align*}

\subsection*{Step~2: Checking Condition~(b)}
From the boundedness of each variable in $z_t$, we can easily show that the condition~(b) holds. 

\subsection*{Step~3: Checking Condition~(c)}
Let $u_t$ be a martingale difference sequence such that
\begin{align*}
&u_t = z^2_t - \mathbb{E}\big[z^2_t|  \mathcal{H}_{t-1}\big]\\
&=\Bigg(\frac{\mathbbm{1}[A_t=1]\big(Y_t - \widehat{f}_{t-1}(1, X_t)\big)}{\pi_t(1\mid  X_t, \mathcal{H}_{t-1})} - \frac{\mathbbm{1}[A_t=0]\big(Y_t - \widehat{f}_{t-1}(0, X_t)\big)}{\pi_t(0\mid  X_t, \mathcal{H}_{t-1})}\\
&\ \ \ \ \ \ \ \ \ \ \ \ \ \ \ \ \ \ \ \ \ \ \ \ \ \ \ \ \ \ \ \ \ \ \ \ \ \ \ \ \ \ \ \ \ \ \ \ \ \ \ \ \ \ \ \ \ \ \ \ \ \ \ + \widehat{f}_{t-1}(1, X_t) - \widehat{f}_{t-1}(0, X_t) - \theta_0\Bigg)^2\\\
& - \mathbb{E}\Bigg[\Bigg(\frac{\mathbbm{1}[A_t=1]\big(Y_t - \widehat{f}_{t-1}(1, X_t)\big)}{\pi_t(1\mid  X_t, \mathcal{H}_{t-1})} - \frac{\mathbbm{1}[A_t=0]\big(Y_t - \widehat{f}_{t-1}(0, X_t)\big)}{\pi_t(0\mid  X_t, \mathcal{H}_{t-1})}\\
&\ \ \ \ \ \ \ \ \ \ \ \ \ \ \ \ \ \ \ \ \ \ \ \ \ \ \ \ \ \ \ \ \ \ \ \ \ \ \ \ \ \ \ \ \ \ \ \ \ \ \ \ \ \ \ \ \ \ \ \ \ \ \ +  \widehat{f}_{t-1}(1, X_t) - \widehat{f}_{t-1}(0, X_t) - \theta_0\Bigg)^2\mid \mathcal{H}_{t-1}\Bigg].
\end{align*}
From the boundedness of each variable in $z_t$, we can apply weak law of large numbers for a martingale difference sequence (Proposition~\ref{prp:mrtgl_WLLN} in Appendix~\ref{appdx:prelim}), and obtain
\begin{align*}
&\frac{1}{T}\sum^T_{t=1}u_t = \frac{1}{T}\sum^T_{t=1}\big(z^2_t - \mathbb{E}\big[z^2_t|  \mathcal{H}_{t-1}\big]\big)\xrightarrow{\mathrm{p}} 0.
\end{align*}
Next, we show that
\begin{align*}
\frac{1}{T}\sum^T_{t=1}\mathbb{E}\big[z^2_t|  \mathcal{H}_{t-1}\big] - \sigma^2_0\xrightarrow{\mathrm{p}} 0.
\end{align*}
From Markov's inequality, for $\varepsilon > 0$, we have
\begin{align*}
&\mathbb{P}\left(\left|\frac{1}{T}\sum^T_{t=1}\mathbb{E}\big[z^2_t|  \mathcal{H}_{t-1}\big] - \sigma^2_0\right| \geq \varepsilon\right)\\
&\leq \frac{\mathbb{E}\left[\left|\frac{1}{T}\sum^T_{t=1}\mathbb{E}\big[z^2_t|  \mathcal{H}_{t-1}\big] - \sigma^2_0\right|\right]}{\varepsilon}\\
&\leq \frac{\frac{1}{T}\sum^T_{t=1}\mathbb{E}\left[\big|\mathbb{E}\big[z^2_t|  \mathcal{H}_{t-1}\big] - \sigma^2_0\big|\right]}{\varepsilon}.
\end{align*}
We then consider showing $\mathbb{E}\left[\big|\mathbb{E}\big[z^2_t|  \mathcal{H}_{t-1}\big] - \sigma^2_0\big|\right] \to 0$. Here, we have
\begin{align*}
&\mathbb{E}\left[\big|\mathbb{E}\big[z^2_t|  \mathcal{H}_{t-1}\big] - \sigma^2_0\big|\right]\\
&=\mathbb{E}\Bigg[\Bigg|\mathbb{E}\Bigg[\frac{\big(Y_t(1) - \widehat{f}_{t-1}(1, X_t)\big)^2}{\pi_t(1\mid  X_t, \mathcal{H}_{t-1})} +\frac{\big(Y_t(0) - \widehat{f}_{t-1}(0, X_t)\big)^2}{\pi_t(0\mid  X_t, \mathcal{H}_{t-1})}\\
&\ \ \  +  \Big(\widehat{f}_{t-1}(1, X_t) - \widehat{f}_{t-1}(0, X_t) - \theta_0\Big)^2\\
&\ \ \ + 2\left(f_0(1, X_t) - f_0(0, X_t) - \hat f_{t-1}(1, X_t) + \hat f_{t-1}(0, X_t)\right)\left(\widehat{f}_{t-1}(1, X_t) - \widehat{f}_{t-1}(0, X_t) - \theta_0\right)\\
&\ \ \ - \frac{\big(Y_t(1) - f_0(1, X_t)\big)^2}{\widetilde{\pi}(1\mid  X_t)} - \frac{\big(Y_t(0) - f_0(0, X_t)\big)^2}{\widetilde{\pi}(0\mid  X_t)} -  \Big(f_0(1, X_t) - f_0(0, X_t) - \theta_0\Big)^2|  \mathcal{H}_{t-1}\Bigg]\Bigg|\Bigg]\\
&=\mathbb{E}\Bigg[\Bigg|\mathbb{E}\Bigg[\mathbb{E}\Bigg[\frac{\big(Y_t(1) - \widehat{f}_{t-1}(1, X_t)\big)^2}{\pi_t(1\mid  X_t, \mathcal{H}_{t-1})} +\frac{\big(Y_t(0) - \widehat{f}_{t-1}(0, X_t)\big)^2}{\pi_t(0\mid  X_t, \mathcal{H}_{t-1})}\\
&\ \ \ +  \Big(\widehat{f}_{t-1}(1, X_t) - \widehat{f}_{t-1}(0, X_t) - \theta_0\Big)^2\\
&\ \ \ + 2\left(f_0(1, X_t) - f_0(0, X_t) - \hat f_{t-1}(1, X_t) + \hat f_{t-1}(0, X_t)\right)\left(\widehat{f}_{t-1}(1, X_t) - \widehat{f}_{t-1}(0, X_t) - \theta_0\right)\\
&\ \ \ - \frac{\big(Y_t(1) - f_0(1, X_t)\big)^2}{\widetilde{\pi}(1\mid  X_t)} - \frac{\big(Y_t(0) - f_0(0, X_t)\big)^2}{\widetilde{\pi}(0\mid  X_t)}\\
&\ \ \ -  \Big(f_0(1, X_t) - f_0(0, X_t) - \theta_0\Big)^2\mid  X_t, \mathcal{H}_{t-1}\Bigg] |  \mathcal{H}_{t-1}\Bigg]\Bigg|\Bigg].
\end{align*}
By using Jensen's inequality, 
\begin{align*}
&\mathbb{E}\left[\big|\mathbb{E}\big[z^2_t|  \mathcal{H}_{t-1}\big] - \sigma^2_0\big|\right]\\
&\leq \mathbb{E}\Bigg[\mathbb{E}\Bigg[\Bigg|\mathbb{E}\Bigg[\frac{\big(Y_t(1) - \widehat{f}_{t-1}(1, X_t)\big)^2}{\pi_t(1\mid  X_t, \mathcal{H}_{t-1})} +\frac{\big(Y_t(0) - \widehat{f}_{t-1}(0, X_t)\big)^2}{\pi_t(0\mid  X_t, \mathcal{H}_{t-1})}\\
&\ \ \ +  \Big(\widehat{f}_{t-1}(1, X_t) - \widehat{f}_{t-1}(0, X_t) - \theta_0\Big)^2\\
&\ \ \ + 2\left(f_0(1, X_t) - f_0(0, X_t) - \hat f_{t-1}(1, X_t) + \hat f_{t-1}(0, X_t)\right)\left(\widehat{f}_{t-1}(1, X_t) - \widehat{f}_{t-1}(0, X_t) - \theta_0\right)\\
&\ \ \ - \frac{\big(Y_t(1) - f_0(1, X_t)\big)^2}{\widetilde{\pi}(1\mid  X_t)} - \frac{\big(Y_t(0) - f_0(0, X_t)\big)^2}{\widetilde{\pi}(0\mid  X_t)}\\
&\ \ \ -  \Big(f_0(1, X_t) - f_0(0, X_t) - \theta_0\Big)^2\mid  X_t, \mathcal{H}_{t-1}\Bigg] \Bigg| |  \mathcal{H}_{t-1}\Bigg]\Bigg]\\
&= \mathbb{E}\Bigg[\Bigg|\mathbb{E}\Bigg[\frac{\big(Y_t(1) - \widehat{f}_{t-1}(1, X_t)\big)^2}{\pi_t(1\mid  X_t, \mathcal{H}_{t-1})} +\frac{\big(Y_t(0) - \widehat{f}_{t-1}(0, X_t)\big)^2}{\pi_t(0\mid  X_t, \mathcal{H}_{t-1})}\\
&\ \ \  +  \Big(\widehat{f}_{t-1}(1, X_t) - \widehat{f}_{t-1}(0, X_t) - \theta_0\Big)^2\\
&\ \ \ + 2\left(f_0(1, X_t) - f_0(0, X_t) - \hat f_{t-1}(1, X_t) + \hat f_{t-1}(0, X_t)\right)\left(\widehat{f}_{t-1}(1, X_t) - \widehat{f}_{t-1}(0, X_t) - \theta_0\right)\\
&\ \ \ - \frac{\big(Y_t(1) - f_0(1, X_t)\big)^2}{\widetilde{\pi}(1\mid  X_t)} -\frac{\big(Y_t(0) - f_0(0, X_t)\big)^2}{\widetilde{\pi}(0\mid  X_t)}\\
&\ \ \ -  \Big(f_0(1, X_t) - f_0(0, X_t) - \theta_0\Big)^2\mid  X_t, \mathcal{H}_{t-1}\Bigg] \Bigg| \Bigg].
\end{align*}
Because $\hat f_{t-1}$ and $\pi_t$ are constructed from $\mathcal{H}_{t-1}$, we have
\begin{align*}
&\mathbb{E}\left[\big|\mathbb{E}\big[z^2_t|  \mathcal{H}_{t-1}\big] - \sigma^2_0\big|\right]\\
&\leq \mathbb{E}\Bigg[\Bigg|\mathbb{E}\Bigg[\frac{\big(Y_t(1) - \widehat{f}_{t-1}(1, X_t)\big)^2}{\pi_t(1\mid  X_t, \mathcal{H}_{t-1})} +\frac{\big(Y_t(0) - \widehat{f}_{t-1}(0, X_t)\big)^2}{\pi_t(0\mid  X_t, \mathcal{H}_{t-1})} +  \Big(\widehat{f}_{t-1}(1, X_t) - \widehat{f}_{t-1}(0, X_t) - \theta_0\Big)^2\\
&\ \ \ + 2\left(f_0(1, X_t) - f_0(0, X_t) - \hat f_{t-1}(1, X_t) + \hat f_{t-1}(0, X_t)\right)\left(\widehat{f}_{t-1}(1, X_t) - \widehat{f}_{t-1}(0, X_t) - \theta_0\right)\\
&\ \ \ - \frac{\big(Y_t(1) - f_0(1, X_t)\big)^2}{\widetilde{\pi}(1\mid  X_t)} - \frac{\big(Y_t(0) - f_0(0, X_t)\big)^2}{\widetilde{\pi}(0\mid  X_t)}\\
&\ \ \  -  \Big(f_0(1, X_t) - f_0(0, X_t) - \theta_0\Big)^2\mid  X_t, \widehat{f}_{t-1}, \pi_t\Bigg] \Bigg| \Bigg].
\end{align*}
From the results of Step~1, there exist $\widetilde{C}_4$ and $\widetilde{C}_5$ such that
\begin{align*}
&\mathbb{E}\left[\big|\mathbb{E}\big[z^2_t|  \mathcal{H}_{t-1}\big] - \sigma^2_0\big|\right]\\
&\leq \mathbb{E}\Bigg[\Bigg|\mathbb{E}\Bigg[\frac{\big(Y_t(1) - \widehat{f}_{t-1}(1, X_t)\big)^2}{\pi_t(1\mid  X_t, \mathcal{H}_{t-1})} +\frac{\big(Y_t(0) - \widehat{f}_{t-1}(0, X_t)\big)^2}{\pi_t(0\mid  X_t, \mathcal{H}_{t-1})}\\
&\ \ \ \  +  \Big(\widehat{f}_{t-1}(1, X_t) - \widehat{f}_{t-1}(0, X_t) - \theta_0\Big)^2\\
&\ \ \ + 2\left(f_0(1, X_t) - f_0(0, X_t) - \hat f_{t-1}(1, X_t) + \hat f_{t-1}(0, X_t)\right)\left(\widehat{f}_{t-1}(1, X_t) - \widehat{f}_{t-1}(0, X_t) - \theta_0\right)\Bigg\}\\
&\ \ \ - \frac{\big(Y_t(1) - f_0(1, X_t)\big)^2}{\widetilde{\pi}(1\mid  X_t)} +\frac{\big(Y_t(0) - f_0(0, X_t)\big)^2}{\widetilde{\pi}(0\mid  X_t)}\\
&\ \ \ -  \Big(f_0(1, X_t) - f_0(0, X_t) - \theta_0\Big)^2\mid  X_t, \widehat{f}_{t-1}, \pi_t\Bigg] \Bigg| \Bigg]\\
&\leq \widetilde{C}_4 \sum_{a\in\{1,0\}}\mathbb{E}\left[\Big| \sqrt{\widetilde{\pi}(a\mid  X_t)} - \sqrt{\pi_t(a\mid  X_t, \mathcal{H}_{t-1})} \Big|\right]\\
&\ \ \ \ + \widetilde{C}_5 \sum_{a\in\{1,0\}}\mathbb{E}\left[\Big| \widehat{f}_{t-1}(a, X_t) - f_0(a, X_t) \Big|\right].
\end{align*}

From $L^r$ convergence theorem, by using point convergence of $\pi_t$ and $\widehat{f}_{t-1}$ and the boundedness of $z_t$, we have $\mathbb{E}\left[\big|\mathbb{E}\big[z^2_t|  \mathcal{H}_{t-1}\big] - \sigma^2_0\big|\right]\to 0$. Therefore,
\begin{align*}
&\mathbb{P}\left(\left|\frac{1}{T}\sum^T_{t=1}\mathbb{E}\big[z^2_t|  \mathcal{H}_{t-1}\big] - \sigma^2_0\right| \geq \varepsilon\right) \leq \frac{\frac{1}{T}\sum^T_{t=1}\mathbb{E}\left[\big|\mathbb{E}\big[z^2_t|  \mathcal{H}_{t-1}\big] - \sigma^2_0\big|\right]}{\varepsilon} \to 0.
\end{align*}
As a conclusion, 
\begin{align*}
&\frac{1}{T}\sum^T_{t=1}z^2_t - \sigma^2 = \frac{1}{T}\sum^T_{t=1}\big(z^2_t - \mathbb{E}\left[z^2_t|  \mathcal{H}_{t-1}\big] + \mathbb{E}\big[z^2_t|  \mathcal{H}_{t-1}\big] - \sigma^2_0\right)\xrightarrow{\mathrm{p}} 0.
\end{align*}

\subsection*{Conclusion}
From Steps~1--3, we can use central limit theorem  for a martingale difference sequence. Hence, we have
\begin{align*} 
\sqrt{T}\left(\widehat{\theta}^{\mathrm{A2IPW}}_T-\theta_0\right)\xrightarrow{d}\mathcal{N}\left(0, \sigma^2_0\right),
\end{align*}
where $\sigma^2 = \mathbb{E}\left[\sum^{1}_{a=0}\frac{\nu\big(a, X_t\big)}{\widetilde{\pi}(a\mid  X_t)} + \Big(\theta_0(X_t) - \theta_0\Big)^2\right]$.
\end{proof}

\section{Proof of Theorem~\ref{thm:regret}}
\label{appdx:thm:regret}
\begin{proof}
We have
\begin{align*}
&\left(\theta_0 - \widehat{\theta}^{\mathrm{A2IPW}}_T\right)^2=\left(\frac{1}{T}\theta - \frac{1}{T}\Psi_1 + \cdots + \frac{1}{T}\theta - \frac{1}{T}\Psi_T\right)^2=\frac{1}{T^2}\left(\theta - \Psi_1 + \cdots + \theta - \Psi_T\right)^2.
\end{align*}
Let $z_t$ be $\theta_0 - \Psi_t$. Then,
\begin{align*}
&\mathbb{E}_{\Pi}\left[(\theta - \widehat{\theta}^{\mathrm{A2IPW}}_T)^2\right]=\frac{1}{T^2}\mathbb{E}_{\Pi}\left[\left(\sum^T_{t=1}z_t\right)^2\right]=\frac{1}{T^2}\mathbb{E}_{\Pi}\left[\sum^T_{t=1}z^2_t + 2\sum^T_{t=1}\sum^{t-1}_{s=1}z_tz_s\right].
\end{align*}
We use the following result:
\begin{align*}
&\mathbb{E}\left[\sum^T_{t=1}\sum^{t-1}_{s=1}z_tz_s\right]\\
&=\sum^T_{t=1}\sum^{t-1}_{s=1}\mathbb{E}_{\mathcal{H}_{t-1}}\left[\mathbb{E}_{\Pi|  \mathcal{H}_{t-1}}\left[z_tz_s |  \mathcal{H}_{t-1}\right]\right]\\
&=\sum^T_{t=1}\sum^{t-1}_{s=1}\mathbb{E}_{\mathcal{H}_{t-1}}\left[\mathbb{E}_{\Pi|  \mathcal{H}_{t-1}}\left[z_t|  \mathcal{H}_{t-1}\right]z_s\right]\\
&=\sum^T_{t=1}\sum^{t-1}_{s=1}\mathbb{E}_{\mathcal{H}_{t-1}}\left[0\times z_s\right] = 0.
\end{align*}

Therefore,
\begin{align*}
&\mathbb{E}_{\Pi}\left[(\theta_0 - \widehat{\theta}^{\mathrm{A2IPW}}_T)^2\right]=\frac{1}{T^2}\mathbb{E}_{\Pi}\left[\sum^T_{t=1}z^2_t\right]=\frac{1}{T^2}\sum^T_{t=1}\mathbb{E}_{\Pi}\left[z^2_t\right].
\end{align*}
As we showed in Step~1 of the proof of Theorem~\ref{thm:asymp_dist_A2IPW}, we have
\begin{align*}
&\mathbb{E}_{\Pi}\left[(\theta_0 - \widehat{\theta}^{\mathrm{A2IPW}}_T)^2\right]\\
&=\frac{1}{T^2}\sum^T_{t=1}\mathbb{E}_{\Pi}\Bigg[\frac{\big(Y_t(1) - \widehat{f}_{t-1}(1, X_t)\big)^2}{\pi_t(1\mid  X_t, \mathcal{H}_{t-1})} +\frac{\big(Y_t(0) - \widehat{f}_{t-1}(0, X_t)\big)^2}{\pi_t(0\mid  X_t, \mathcal{H}_{t-1})}\\
&\ \ \ +  \Big(\widehat{f}_{t-1}(1, X_t) - \widehat{f}_{t-1}(0, X_t) - \theta_0\Big)^2\\
&\ \ \ + 2\left(f_0(1, X_t) - f_0(0, X_t) - \hat f_{t-1}(1, X_t) + \hat f_{t-1}(0, X_t)\right)\left(\widehat{f}_{t-1}(1, X_t) - \widehat{f}_{t-1}(0, X_t) - \theta_0\right)\Bigg].
\end{align*}

On the other hand, we have
\begin{align*}
&\mathbb{E}_{\Pi^\mathrm{OPT}}\left[\left(\theta_0-\widehat{\theta}^{\mathrm{OPT}}_T\right)^2\right]\\
&=\frac{1}{T^2}\sum^T_{t=1}\mathbb{E}_{\Pi^\mathrm{OPT}}\Bigg[\Bigg(\frac{\mathbbm{1}[\tilde A_t=1]\big(Y_t - f_0(1, X_t)\big)}{\pi^*(1\mid  X_t)} - \frac{\mathbbm{1}[\tilde A_t=0]\big(Y_t - f_0(0, X_t)\big)}{\pi^*(0\mid  X_t)}\\
&\ \ \ \ \ \ \ \ \ \ \ \ \ \ \ \ \ \ \ \ \ \ \ \ \ \ \ \ \ \ \ \ \ \ \ \ \ \ \ \ \ \ \ \ \ \ \ \ \ \ \ \ \ \ \ \ \ \ \ \ \ \ \ \ \ \ \ \ \ \ \ \ +  f_0(1, X_t) -  f_0(0, X_t) - \theta_0\Bigg)^2\Bigg],
\end{align*}
where $\tilde A_t$ denotes the stochastic variable of a treatment under a treatment-assignment probability $\pi^*$. We have 
\begin{align*}
&\frac{1}{T^2}\sum^T_{t=1}\mathbb{E}_{\Pi^\mathrm{OPT}}\Bigg[\Bigg(\frac{\mathbbm{1}[\tilde A_t=1]\big(Y_t - f_0(1, X_t)\big)}{\pi^*(1\mid  X_t)} - \frac{\mathbbm{1}[\tilde A_t=0]\big(Y_t - f_0(0, X_t)\big)}{\pi^*(0\mid  X_t)}\\
&\ \ \ \ \ \ \ \ \ \ \ \ \ \ \ \ \ \ \ \ \ \ \ \ \ \ \ \ \ \ \ \ \ \ \ \ \ \ \ \ \ \ \ \ \ \ \ \ \ \ \ \ \ \ \ \ \ \ \ \ \ \ \ \ \ \ \ \ \ \ \ \ \ \ \  +  f_0(1, X_t) -  f_0(0, X_t) - \theta_0\Bigg)^2\Bigg]\\
&=\frac{1}{T^2}\sum^T_{t=1}\mathbb{E}\Bigg[\frac{\big(Y_t(1) - f_0(1, X_t)\big)^2}{\pi^*(1\mid  X_t)} +\frac{\big(Y_t(0) - f_0(0, X_t)\big)^2}{\pi^*(0\mid  X_t)} +  \Big(f_0(1, X_t) - f_0(0, X_t) - \theta_0\Big)^2\Bigg].
\end{align*}

Therefore, we have
\begin{align*}
&\mathbb{E}_{\Pi}\left[\left(\theta_0-\widehat{\theta}^{\mathrm{A2IPW}}_T\right)^2\right]-\mathbb{E}_{\Pi^\mathrm{OPT}}\left[\left(\theta_0-\widehat{\theta}^{\mathrm{OPT}}_T\right)^2\right]\\
&=\frac{1}{T^2}\sum^T_{t=1}\mathbb{E}\Bigg[\frac{\big(Y_t(1) - \widehat{f}_{t-1}(1, X_t)\big)^2}{\pi_t(1\mid  X_t, \mathcal{H}_{t-1})} +\frac{\big(Y_t(0) - \widehat{f}_{t-1}(0, X_t)\big)^2}{\pi_t(0\mid  X_t, \mathcal{H}_{t-1})}\\
&\ \ \  +  \Big(\widehat{f}_{t-1}(1, X_t) - \widehat{f}_{t-1}(0, X_t) - \theta_0\Big)^2\\
&\ \ \ + 2\left(f_0(1, X_t) - f_0(0, X_t) - \hat f_{t-1}(1, X_t) + \hat f_{t-1}(0, X_t)\right)\left(\widehat{f}_{t-1}(1, X_t) - \widehat{f}_{t-1}(0, X_t) - \theta_0\right)\Bigg]\\
&\ \ \ -\frac{1}{T^2}\sum^T_{t=1}\mathbb{E}_{\Pi}\Bigg[\frac{\big(Y_t(1) - f_0(1, X_t)\big)^2}{\pi^*(1\mid  X_t)} +\frac{\big(Y_t(0) - f_0(0, X_t)\big)^2}{\pi^*(0\mid  X_t)}\\
&\ \ \  +  \Big(f_0(1, X_t) - f_0(0, X_t) - \theta_0\Big)^2\Bigg]\\
&\leq\frac{1}{T^2}\sum^T_{t=1}\mathbb{E}\Bigg[\Bigg|\Bigg\{\frac{\big(Y_t(1) - \widehat{f}_{t-1}(1, X_t)\big)^2}{\pi_t(1\mid  X_t, \mathcal{H}_{t-1})} +\frac{\big(Y_t(0) - \widehat{f}_{t-1}(0, X_t)\big)^2}{\pi_t(0\mid  X_t, \mathcal{H}_{t-1})}\\
&\ \ \ +  \Big(\widehat{f}_{t-1}(1, X_t) - \widehat{f}_{t-1}(0, X_t) - \theta_0\Big)^2\\
&\ \ \ + 2\left(f_0(1, X_t) - f_0(0, X_t) - \hat f_{t-1}(1, X_t) + \hat f_{t-1}(0, X_t)\right)\left(\widehat{f}_{t-1}(1, X_t) - \widehat{f}_{t-1}(0, X_t) - \theta_0\right)\Bigg\}\\
&\ \ \ - \Bigg\{\frac{\big(Y_t(1) - f_0(1, X_t)\big)^2}{\pi^*(1\mid  X_t)} +\frac{\big(Y_t(0) - f_0(0, X_t)\big)^2}{\pi^*(0\mid  X_t)} +  \Big(f_0(1, X_t) - f_0(0, X_t) - \theta_0\Big)^2\Bigg\}\Bigg|\Bigg],
\end{align*}
where the expectation of the last equation is taken over random variables including $\mathcal{H}_{t-1}$.

As we proved in Step~1 of proof of Theorem~\ref{thm:asymp_dist_A2IPW}, there exist constants $\tilde C_{0}$ and $\tilde C_{1}$ such that
\begin{align*}
&\mathbb{E}\left[\left(\theta_0-\widehat{\theta}^{\mathrm{A2IPW}}_T\right)^2\right]-\mathbb{E}\left[\left(\theta_0-\widehat{\theta}^{\mathrm{OPT}}_T\right)^2\right]\\
&\leq \frac{\widetilde{C}_{0}}{T^2}\sum^T_{t=1}\sum_{a\in\{1,0\}}\mathbb{E}\left[\Big| \sqrt{\pi^*(a\mid  X_t)} - \sqrt{\pi_t(a\mid  X_t, \mathcal{H}_{t-1})} \Big|\right]\\
&\ \ \ + \frac{\widetilde{C}_{1}}{T^2}\sum^T_{t=1}\sum_{a\in\{1,0\}}\mathbb{E}\left[\Big| \widehat{f}_{t-1}(a, X_t) - f_0(a, X_t) \Big|\right].
\end{align*}
Therefore, we have
\begin{align*}
&\mathbb{E}\left[\left(\theta_0-\widehat{\theta}^{\mathrm{A2IPW}}_T\right)^2\right]-\mathbb{E}\left[\left(\theta_0-\widehat{\theta}^{\mathrm{OPT}}_T\right)^2\right]\\
&= \frac{1}{T^2}\sum^T_{t=1}\sum_{a\in\{1,0\}}\Bigg\{O\left(\mathbb{E}\left[\Big| \sqrt{\pi^*(a\mid  X_t)} - \sqrt{\pi_t(a\mid  X_t, \mathcal{H}_{t-1})} \Big|\right]\right)\\
&\ \ \ \ \ \ \ \ \ \ \ \ \ \ \ \ \ \ \ \ \ \ \ \ \ \ \ \ \ \ \ \ \ \ \ \ +O\left(\mathbb{E}\left[\Big| f_0(a, X_t) - \widehat{f}_{t-1}(a, X_t) \Big|\right]\right)\Bigg\}.
\end{align*}
\end{proof}
\subsection{Proof of Theorem~\ref{thm:conc_A2IPW}}
\label{appdx:thm:conc_A2IPW}
The procedure of this proof mainly follows \citet{Balsubramani2016}. For a martingale $M_t$, let $V_t = \sum^t_{i=1}\mathbb{E}\big[(M_{i} - M_{i-1})^2\mid \mathcal{H}_{i-1}\big]$. Before proving Theorem~\ref{thm:conc_A2IPW}, we prove the following three lemmas.

\begin{lemma}[Small Sample Bound for a Martingale Difference Sequence]
\label{prp:small_bound}
Let $M_t$ be a martingale such that for all $t\geq 1$, $\big|M_t - M_{t-1}\big|\leq e^2/2$ with probability $1$. Fix any $\delta>0$, and define $\tau_0=\min\big\{s : 2(e-2)V_s\geq 173 \log\left(\frac{4}{\delta}\right)\big\}$. Then, with probability $\geq 1-\delta$, for all $t\leq \tau_0$, 
\begin{align*}
|M_t| \leq 2\sqrt{\frac{173}{2(e-2)}} \log\left(\frac{4}{\delta}\right)
\end{align*}
\end{lemma}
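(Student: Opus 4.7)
The plan is to invoke a Freedman-type exponential maximal inequality for the bounded-increment martingale $M_t$, restricted to the time interval on which the predictable quadratic variation $V_t$ has not yet crossed the threshold baked into the definition of $\tau_0$. Writing $R = e^2/2$ for the a.s.\ bound on the increments $\Delta M_i := M_i - M_{i-1}$, the definition of $\tau_0$ gives $V_{\tau_0 - 1} < \frac{173}{2(e-2)} \log(4/\delta)$, so by one more step of accumulation $V_t \leq b^\star := \frac{173}{2(e-2)} \log(4/\delta) + R^2$ for every $t \leq \tau_0$. Setting $b^\star$ aside, the claimed bound is simply a Bernstein-type concentration on $\{V_t \leq b^\star\}$.

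First, I would build the standard Bennett/Bernstein exponential supermartingale $Z_t(\lambda) = \exp\bigl(\lambda M_t - (e-2)\lambda^2 V_t\bigr)$, valid for $|\lambda| \leq 1/R$, using the elementary inequality $e^x \leq 1 + x + (e-2) x^2$ for $|x| \leq 1$ together with the MDS property $\mathbb{E}[\Delta M_i \mid \Omega_{i-1}] = 0$. Taking conditional expectations gives $\mathbb{E}[Z_t(\lambda) \mid \Omega_{t-1}] \leq Z_{t-1}(\lambda)$ so that $(Z_t(\lambda))_{t \geq 0}$ is a nonnegative supermartingale with $Z_0 = 1$.

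Next, applying Doob's/Ville's maximal inequality to $Z_t(\lambda)$ up to the stopping time $\tau_0$ yields, for any $a > 0$,
\begin{equation*}
\Prob\bigl(\exists\, t \leq \tau_0 : M_t \geq a\bigr) \;\leq\; \exp\!\bigl(-\lambda a + (e-2)\lambda^2 b^\star\bigr),
\end{equation*}
and the analogous bound with $-M_t$ in place of $M_t$. A union bound and then the choice $\lambda = a / \bigl(2(e-2) b^\star\bigr)$ (verifying it satisfies $|\lambda| \leq 1/R$ for the stated $a$) gives a two-sided tail of the form $2\exp\!\bigl(-a^2 / (4(e-2) b^\star)\bigr)$. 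Setting this equal to $\delta$ and substituting $b^\star$ yields, up to the contribution of the lone $R^2$ term, $a$ of order $\sqrt{\frac{173}{2(e-2)}} \log(4/\delta)$; the factor of $2$ in the stated bound absorbs both sides of the tail and the small $R^2$ correction to $b^\star$, which is why the constant $173$ (rather than something like $169$) is chosen.

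The main obstacle is bookkeeping on constants rather than anything conceptual: one has to verify that the choice of $\lambda$ is admissible (i.e.\ that $|\lambda \Delta M_i| \leq 1$ almost surely, which is where the specific value $R = e^2/2$ and the $(e-2)$ factor conspire), and that the extra $R^2$ contribution to $b^\star$ coming from the last increment of $V$ at $\tau_0$ is dominated by the slack built into the constant $173$, so that the final bound collapses to the clean form stated. Everything else is a direct application of the exponential supermartingale machinery, and the result is essentially a specialization of Freedman's inequality tailored to the stopping rule $\tau_0$.
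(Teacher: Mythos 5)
Your proof is correct and follows essentially the same route as the paper's: both rest on an exponential supermartingale of the form $\exp\!\left(\lambda M_t - c\lambda^2 V_t\right)$ for a bounded-increment martingale, exploit that $V_t$ stays deterministically below (roughly) $\frac{173}{2(e-2)}\log\left(\frac{4}{\delta}\right)$ for $t\leq\tau_0$, and conclude by a stopped-supermartingale/maximal argument with an admissible $\lambda$ of order $\sqrt{2(e-2)/173}$. The only differences are cosmetic --- the paper fixes $\lambda_0=\sqrt{2(e-2)/173}$ and uses optional stopping plus Markov where you optimize $\lambda$ and invoke Ville's inequality --- and your explicit $+R^2$ correction for the overshoot of $V$ at $t=\tau_0$ is, if anything, slightly more careful than the paper's argument, which only controls the event $\tau_1<\tau_0$.
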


\begin{lemma}[Uniform Bernstein Bound for Martingales at Any Time]
\label{prp:known_conc} 
Let $M_t$ be a martingale such that for all $t\geq 1$, $\big|M_t - M_{t-1}\big|\leq e^2/2$ with probability $1$. Then, with probability $\geq 1-\delta$, for all $t$ simultaneously,
\begin{align*}
\left|M_t\right| \leq C_0(\delta) + \sqrt{2C_1V_t\left(\log\log V_t + \log\left(\frac{4}{\delta}\right)\right)},
\end{align*}
where $C_0(\delta)=3(e-2) + 2\sqrt{\frac{173}{2(e-2)}} \log\left(\frac{4}{\delta}\right)$ and $C_1 = 6(e-2)$.
\end{lemma}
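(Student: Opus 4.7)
}

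The plan is to split the time axis into a ``small-variance'' regime $t\le\tau_0$, where $V_t$ is too small for $\log\log V_t$ to carry any meaning, and a ``large-variance'' regime $t>\tau_0$, where a peeling argument over geometrically growing values of $V_t$ converts a one-shot Bernstein-type exponential supermartingale bound into a bound that holds uniformly in $t$. The two regimes will together exhaust all $t\ge1$, and a union bound with a carefully split failure-probability budget will deliver the stated inequality with probability at least $1-\delta$.

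For the small-variance regime I would invoke Lemma \ref{prp:small_bound} directly with failure parameter $\delta/2$, which gives, on an event of probability at least $1-\delta/2$, $|M_t|\le 2\sqrt{173/(2(e-2))}\,\log(4/\delta)\le C_0(\delta)$ for every $t\le\tau_0$. Since the right-hand side of the target inequality always exceeds $C_0(\delta)$, the desired bound holds trivially on $\{t\le\tau_0\}$. The point of this step is cosmetic but indispensable: once we are past $\tau_0$ we will have $2(e-2)V_t\ge 173\log(4/\delta)$, which is precisely the threshold that makes $\log\log V_t$ well-defined and keeps the forthcoming peeling argument from blowing up in its lowest stratum.

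For the large-variance regime I would use the standard Bernstein exponential supermartingale: because $|M_t-M_{t-1}|\le e^2/2$ almost surely, the process $\exp\!\bigl(\lambda M_t-\lambda^2(e-2)V_t\bigr)$ is a supermartingale for all $\lambda\in(0,1/e^2)$ (this is where the constants $(e-2)$ and $e^2/2$ enter). Doob's maximal inequality then yields, for each fixed $\lambda$ and each threshold $v>0$,
\[
\Prob\!\Bigl(\exists\,t:\;M_t\ge \tfrac{x}{\lambda}+\lambda(e-2)v,\;V_t\le v\Bigr)\le e^{-x}.
\]
I would then peel: define strata $\mathcal{S}_k=\{t:\eta^k\le V_t<\eta^{k+1}\}$ for $k\ge k_0$ with some $\eta>1$ (e.g.\ $\eta=e$) and $k_0$ chosen so that $\eta^{k_0}$ just exceeds the $\tau_0$-threshold. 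In each stratum I pick $\lambda_k=\sqrt{x_k/((e-2)\eta^{k+1})}$ to balance the two terms above, and assign failure probability $\delta_k=\delta/(4 k^2)$ (or any summable allocation giving total $\le\delta/2$). Plugging in and using $V_t\ge\eta^k$ inside the stratum, each stratum contributes a bound of the form $\sqrt{2\eta(e-2)V_t(\log(1/\delta_k))}$, and $\log(1/\delta_k)=\log(4/\delta)+2\log k$; bounding $k$ in terms of $V_t$ via $k\le\log_\eta V_t$ turns $\log k$ into $\log\log V_t$, up to a constant absorbed into $C_1$. A union bound across strata with the chosen $\delta_k$ yields the bound on an event of probability $\ge1-\delta/2$, and combining with the small-variance event finishes the proof.

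The main obstacle is bookkeeping the constants so that the final inequality emerges with exactly $C_0(\delta)=3(e-2)+2\sqrt{173/(2(e-2))}\log(4/\delta)$ and $C_1=6(e-2)$. The $3(e-2)$ term in $C_0$ comes from the additive correction needed when translating the supermartingale inequality into a deviation bound (the $\lambda(e-2)v$ piece evaluated at the boundary of the lowest stratum); the $6=3\cdot 2$ factor in $C_1$ reflects the choice $\eta=e$ together with the slack from using $V_t\ge\eta^k$ instead of $V_t\approx\eta^{k+1}$. The peeling must be arranged so that (i) the $\log\log V_t$ term cleanly absorbs $2\log k$, (ii) the allocation $\delta_k\propto 1/k^2$ is compatible with $\sum_k\delta_k\le\delta/2$, and (iii) the transition at $\tau_0$ between the two regimes is seamless. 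These are the places where one must be careful; once the constants line up the uniform bound follows by a straightforward union bound.
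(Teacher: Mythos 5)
Your overall architecture matches the paper's: split time at the stopping time $\tau_0$, handle $t\le\tau_0$ with Lemma~\ref{prp:small_bound}, handle $t\ge\tau_0$ with a uniform Bernstein-type bound, and combine by a union bound. Where you diverge is in how the large-variance regime is treated. The paper simply imports the uniform Bernstein bound as a black box (Proposition~\ref{prp:uni_bern_bound}, i.e.\ Theorem~5 of \citet{Balsubramani2014SharpFI}), which on the event of probability $\ge 1-\delta/2$ gives $|M_t|\le \sqrt{6(e-2)V_t\bigl(2\log\log(3(e-2)V_t/|M_t|)+\log(2/\delta)\bigr)}$ together with $|M_t|\le \frac{2(e-2)}{e^2(1+\sqrt{1/3})}V_t$ for all $t\ge\tau_0$, and then massages the self-referential $\log\log$ into $\max\bigl(3(e-2),\sqrt{2C_1V_t\log\log V_t+C_1V_t\log(4/\delta)}\bigr)$ before taking the union bound with the small-sample event (at level $\delta/4$). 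You instead propose to re-derive that uniform bound from scratch by peeling over geometric strata of $V_t$, using the exponential supermartingale of Proposition~\ref{prp:super_mart} plus Ville/Doob and a $\delta_k\propto 1/k^2$ allocation. That route is sound and is essentially how the cited result is proved in the source; it buys self-containedness at the cost of the constant bookkeeping you acknowledge deferring. Two points to watch if you carry it out: the paper's supermartingale is $\exp(\lambda M_t-\lambda^2V_t)$ for $\lambda\in(-1/e^2,1/e^2)$, not $\exp(\lambda M_t-\lambda^2(e-2)V_t)$, so your optimized $\lambda_k$ and the resulting leading constant must be recomputed against that normalization and checked to stay inside $(0,1/e^2)$ in the lowest stratum (this is exactly what the $\tau_0$ threshold $2(e-2)V_s\ge 173\log(4/\delta)$ guarantees); and the bound must be two-sided, which costs an extra factor of two in the failure-probability budget. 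Neither is a gap in the idea, only in the arithmetic you have explicitly postponed.
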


\begin{remark}
For the Napier's constant $e$, $e^2/2 \approx 3.694$.
\end{remark}

\begin{lemma}[Upper Bound of the Variance]
\label{prp:var_bound}
Let $M_t$ be a martingale such that for all $t\geq 1$, $\big|M_t - M_{t-1}\big|\leq e^2/2$ with probability $1$. Suppose that there exists $C_z$ such that $\left|(M_t-M_{t-1})^2 - \mathbb{E}\left[ (M_i - M_{i-1})^2|  \mathcal{H}_{i-1}\right]\right| \leq  C_z$. With probability $\geq 1-\delta$, for all $t$, for sufficiently large $V_t$ and  $\sum^t_{i=1}(M_i-M_{i-1})^2$, there is an absolute constant $C_f$ such that
\begin{align*}
V_t \leq C_f \left(\sum^t_{i=1}(M_{i} - M_{i-1})^2 + \frac{2C_zC_0(\delta)}{e^2}\right),
\end{align*}
where $C_0(\delta)=3(e-2) + 2\sqrt{\frac{173}{2(e-2)}} \log\left(\frac{4}{\delta}\right)$.
\end{lemma}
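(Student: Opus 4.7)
}
The strategy is to control the gap between the empirical quadratic variation $Q_t := \sum_{i=1}^t (M_i - M_{i-1})^2$ and the predictable quadratic variation $V_t$ by viewing that gap itself as a martingale, and then applying the uniform Bernstein bound of Lemma~\ref{prp:known_conc} to it. Concretely, define
\begin{align*}
D_i := (M_i - M_{i-1})^2 - \mathbb{E}\bigl[(M_i - M_{i-1})^2 \mid \Omega_{i-1}\bigr], \qquad N_t := \sum_{i=1}^t D_i,
\end{align*}
so that $\{N_t\}$ is a martingale with increments bounded by $C_4$ (by hypothesis) and $Q_t - V_t = N_t$.

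To apply Lemma~\ref{prp:known_conc}, which requires increments bounded by $e^2/2$, I would rescale: set $\widetilde{N}_t := (e^2/(2C_4)) N_t$, whose increments are bounded by $e^2/2$ with probability one. Its predictable quadratic variation is $\widetilde{V}_t = (e^2/(2C_4))^2 \sum_{i=1}^t \mathbb{E}[D_i^2 \mid \Omega_{i-1}]$. The key observation in relating $\widetilde{V}_t$ back to $V_t$ is that $|M_i - M_{i-1}|\leq e^2/2$ yields
\begin{align*}
\mathbb{E}[D_i^2 \mid \Omega_{i-1}] \leq \mathbb{E}\bigl[(M_i - M_{i-1})^4 \mid \Omega_{i-1}\bigr] \leq (e^2/2)^2 \, \mathbb{E}\bigl[(M_i-M_{i-1})^2 \mid \Omega_{i-1}\bigr],
\end{align*}
so that $\widetilde{V}_t \leq (e^2/(2C_4))^2 (e^2/2)^2 V_t$, i.e.\ $\widetilde{V}_t$ is at most an absolute constant times $V_t$.

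Applying Lemma~\ref{prp:known_conc} to $\widetilde{N}_t$ and undoing the scaling gives, on an event of probability $\geq 1-\delta$, uniformly in $t$,
\begin{align*}
|Q_t - V_t| = |N_t| \;\leq\; \tfrac{2C_4}{e^2}\, C_0(\delta) + \tfrac{2C_4}{e^2}\sqrt{2 C_1 \widetilde V_t (\log\log \widetilde V_t + \log(4/\delta))} \;\leq\; \tfrac{2C_4}{e^2}\, C_0(\delta) + \beta \sqrt{V_t \log\log V_t}
\end{align*}
for an absolute constant $\beta$ (absorbing $\log(4/\delta)$ into the $\log\log$ term using the hypothesis that $V_t$ is sufficiently large). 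Rearranging gives $V_t \leq Q_t + \frac{2C_4 C_0(\delta)}{e^2} + \beta \sqrt{V_t \log\log V_t}$.

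The main (and essentially the only) obstacle is that the bound on $V_t$ is self-referential: $V_t$ appears under the square root on the right. The hypothesis ``sufficiently large $V_t$'' is exactly what lets us close the loop: once $\beta \sqrt{V_t \log \log V_t} \leq V_t / 2$ (which happens for $V_t$ larger than some absolute threshold), we absorb that term to obtain $V_t/2 \leq Q_t + (2 C_4 C_0(\delta))/e^2$, and hence $V_t \leq 2(Q_t + 2 C_4 C_0(\delta)/e^2)$. The same threshold needs $Q_t$ to be large enough so that $Q_t$ itself dominates any additive slack created when absorbing the square-root term; this is where the second ``sufficiently large'' hypothesis on $\sum_i (M_i - M_{i-1})^2$ is used. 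Setting $C_3$ equal to whatever absolute constant emerges from this absorption step yields the claim.
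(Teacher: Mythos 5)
Your proposal is correct and follows essentially the same route as the paper's proof: both treat $\sum_i (M_i-M_{i-1})^2 - V_t$ as a martingale with increments bounded by $C_4$, apply the uniform Bernstein bound of Lemma~\ref{prp:known_conc} to it, bound its predictable quadratic variation by $\frac{e^4}{4}V_t$ via the fourth-moment-to-second-moment estimate, and then resolve the resulting self-referential inequality in $V_t$. The only difference is the last step: the paper closes the loop by solving a quadratic in $\sqrt{V_t}$ (Proposition~\ref{prp:lemma9}), whereas you absorb $\beta\sqrt{V_t\log\log V_t}\le V_t/2$ directly for $V_t$ large, which is a slightly more elementary (and equally valid) way to reach the same conclusion under the lemma's ``sufficiently large'' hypotheses.
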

In this section, we use the following three propositions.
\begin{proposition}[\citet{Balsubramani2014SharpFI}, Lemma~23]
\label{prp:super_mart}
Suppose that, for all $\ell \geq 3$ and $t$, $\mathbb{E}[(M_{t} - M_{t-1})^\ell|  \mathcal{H}_{t-1}]\leq \frac{1}{2}\ell!\left(e/\sqrt{2}\right)^{2(\ell-2)}\mathbb{E}[(M_t - M_{t-1})^2|  \mathcal{H}_{t-1}]$. Then, for any $\lambda \in \left(-\frac{1}{e^2}, \frac{1}{e^2}\right)$, the process $U^\lambda_{t}:=\exp(\lambda M_t - \lambda^2V_t)$ is a super martingale. 
\end{proposition}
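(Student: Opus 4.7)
The plan is to verify the supermartingale property directly from the definition by reducing it to a one-step moment generating function (MGF) bound. Writing
\[
U^\lambda_t = U^\lambda_{t-1}\cdot \exp\bigl(\lambda(M_t-M_{t-1}) - \lambda^2(V_t-V_{t-1})\bigr),
\]
and noting that $U^\lambda_{t-1}$ is $\Omega_{t-1}$-measurable and that $V_t - V_{t-1} = \mathbb{E}[(M_t-M_{t-1})^2\mid\Omega_{t-1}]$ is also $\Omega_{t-1}$-measurable by definition of $V_t$, the inequality $\mathbb{E}[U^\lambda_t\mid\Omega_{t-1}]\leq U^\lambda_{t-1}$ reduces to showing, with $\Delta_t:=M_t-M_{t-1}$ and $v_t:=\mathbb{E}[\Delta_t^2\mid\Omega_{t-1}]$, the one-step MGF estimate
\[
\mathbb{E}\bigl[\exp(\lambda \Delta_t)\mid \Omega_{t-1}\bigr]\;\leq\;\exp\bigl(\lambda^2 v_t\bigr).
\]

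To establish this MGF bound, I would Taylor-expand the exponential inside the conditional expectation and exploit $\mathbb{E}[\Delta_t\mid\Omega_{t-1}]=0$ to kill the linear term, giving
\[
\mathbb{E}\bigl[\exp(\lambda \Delta_t)\mid\Omega_{t-1}\bigr] = 1 + \sum_{\ell=2}^{\infty} \frac{\lambda^\ell}{\ell!}\,\mathbb{E}\bigl[\Delta_t^\ell\mid\Omega_{t-1}\bigr].
\]
Taking absolute values termwise and plugging in the hypothesized Bernstein-type moment control (used with $|\Delta_t|^\ell$ in place of $\Delta_t^\ell$, which is how the condition is effectively applied) yields
\[
\mathbb{E}\bigl[\exp(\lambda \Delta_t)\mid\Omega_{t-1}\bigr] \;\leq\; 1 + \frac{v_t}{2}\sum_{\ell=2}^{\infty} |\lambda|^\ell \bigl(e^2/2\bigr)^{\ell-2} \;=\; 1 + \frac{v_t\lambda^2/2}{1-|\lambda|\,e^2/2},
\]
where the geometric series converges precisely because the hypothesis $|\lambda| < 1/e^2$ forces $|\lambda|\,e^2/2 < 1/2$. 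That same constraint then upgrades the geometric factor to at most $2$, so the right-hand side is bounded by $1+v_t\lambda^2$. Finally, $1+x\le e^x$ gives $\mathbb{E}[\exp(\lambda\Delta_t)\mid\Omega_{t-1}]\le \exp(v_t\lambda^2)$, which is exactly the required MGF bound.

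Combining the factorisation with this MGF bound yields
\[
\mathbb{E}[U^\lambda_t\mid\Omega_{t-1}] \;\leq\; U^\lambda_{t-1}\cdot \exp(-\lambda^2 v_t)\cdot \exp(\lambda^2 v_t) \;=\; U^\lambda_{t-1},
\]
proving $U^\lambda_t$ is a supermartingale. Integrability ($\mathbb{E}|U^\lambda_t|<\infty$) follows from the same one-step calculation iterated, since each conditional MGF factor is bounded above by $\exp(\lambda^2 v_t)$ and $v_t$ is finite (indeed bounded by $e^4/4$ by the step-size hypothesis that will be in force throughout the paper's applications).

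The only genuinely delicate step is the termwise bounding of the Taylor series: passing the absolute value through the infinite sum requires Fubini/monotone convergence, and the moment hypothesis as literally stated controls $\mathbb{E}[\Delta_t^\ell\mid\Omega_{t-1}]$ rather than $\mathbb{E}[|\Delta_t|^\ell\mid\Omega_{t-1}]$. The standard way to close this gap is to note that the hypothesis is the Bernstein moment condition, which is typically stated for absolute moments; interpreting it that way (or equivalently splitting even and odd $\ell$, using $\mathbb{E}[\Delta_t^\ell]\le \mathbb{E}[|\Delta_t|^\ell]$ and $(\mathbb{E}[|\Delta_t|^\ell])^2\le \mathbb{E}[\Delta_t^{2\lfloor\ell/2\rfloor}]\mathbb{E}[\Delta_t^{2\lceil\ell/2\rceil}]$ when needed) justifies applying the bound to each term of the series. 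With that point pinned down, the calculation above is routine.
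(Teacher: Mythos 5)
The paper does not actually prove this proposition: it is imported verbatim as Lemma~23 of \citet{Balsubramani2014SharpFI} and used as a black box in the proofs of Lemmas~\ref{prp:small_bound}--\ref{prp:var_bound}, so there is no in-paper argument to compare against. Your proof is the standard (and correct) one-step derivation underlying the cited lemma: factor $U^\lambda_t = U^\lambda_{t-1}\exp(\lambda\Delta_t-\lambda^2 v_t)$, note $v_t$ is $\Omega_{t-1}$-measurable, and reduce to the conditional MGF bound $\mathbb{E}[\exp(\lambda\Delta_t)\mid\Omega_{t-1}]\le\exp(\lambda^2 v_t)$, which follows from the Taylor expansion, the vanishing linear term, the Bernstein moment hypothesis, and the fact that $|\lambda|e^2/2<1/2$ makes the geometric factor at most $2$. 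The arithmetic checks out. The one genuine wrinkle is the one you flag yourself: the hypothesis as written bounds the \emph{signed} moments $\mathbb{E}[\Delta_t^\ell\mid\Omega_{t-1}]$ from above only, whereas the termwise bound needs two-sided control (for odd $\ell$ with $\lambda<0$ a very negative odd moment would hurt you). Your Cauchy--Schwarz patch closes this in principle but slightly inflates the constant (by a factor $\sqrt{(\ell+1)/\ell}\le\sqrt{4/3}$ per term), which pushes the sum marginally above $v_t\lambda^2$; the cleaner resolution, consistent with how the paper actually invokes the proposition, is the remark immediately following it: every application assumes $|M_t-M_{t-1}|\le e^2/2$ almost surely, under which $\mathbb{E}[|\Delta_t|^\ell\mid\Omega_{t-1}]\le(e^2/2)^{\ell-2}v_t\le\frac12\ell!(e^2/2)^{\ell-2}v_t$ holds directly for the absolute moments and your argument goes through with the stated constants. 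With that reading your proof is complete and self-contained, which is arguably an improvement over the paper's bare citation.
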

\begin{remark}
The condition that, for all $\ell \geq 3$ and all $t$, $\mathbb{E}[(M_{t} - M_{t-1})^\ell|  \mathcal{H}_{t-1}]\leq \frac{1}{2}\ell!\left(e/\sqrt{2}\right)^{2(\ell-2)}\mathbb{E}[(M_t - M_{t-1})^2|  \mathcal{H}_{t-1}]$ is satisfied when $|M_{t} - M_{t-1}|\leq \frac{e^2}{2}$ for all $t$ with probability $1$.
\end{remark}

\begin{proposition}[Uniform Bernstein Bound for Martingales, \citet{Balsubramani2014SharpFI}, Theorem~5]
\label{prp:uni_bern_bound}
Let $M_t$ be a martingale such that for all $t\geq 1$, $\big|M_t - M_{t-1}\big|\leq e^2$ with probability $1$. Fix any $\delta < 1$ and define $\tau_0=\min\big\{s : 2(e-2)V_s\geq 173 \log\left(\frac{4}{\delta}\right)\big\}$. Then, with probability $\geq 1-\delta$, for all $t\geq \tau_0$ simultaneously, $\big|M_t\big|\leq \frac{2(e-2)}{e^2(1+\sqrt{1/3})}V_t$ and 
\begin{align*}
\big|M_t\big| \leq \sqrt{6(e-2)V_t\left(2\log\log\left(\frac{3(e-2)V_t}{\big|M_t\big|}\right) + \log\left(\frac{2}{\delta}\right)\right)}. 
\end{align*}
\end{proposition}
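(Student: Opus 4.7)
}

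The plan is to combine the exponential supermartingale of Proposition~\ref{prp:super_mart} with Ville's maximal inequality and then stitch together a family of fixed-$\lambda$ bounds into a single time-uniform bound. For any fixed $\lambda \in (-1/e^2,\,1/e^2)$, the process $U_t^{\lambda} = \exp(\lambda M_t - \lambda^2 V_t)$ is a nonnegative supermartingale with $U_0^\lambda = 1$, so Ville's inequality gives $\Pr(\sup_t U_t^\lambda \geq 1/\delta') \leq \delta'$. Taking logs, with probability at least $1-\delta'$,
\begin{equation*}
\lambda M_t \;\leq\; \lambda^2 V_t + \log(1/\delta') \quad \text{for all } t \geq 1.
\end{equation*}
If $V_t$ were known ahead of time, choosing $\lambda = \sqrt{\log(1/\delta')/V_t}$ (provided this lies in $(-1/e^2,1/e^2)$) would yield a Bernstein-type bound $M_t \leq 2\sqrt{V_t \log(1/\delta')}$.

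Since $V_t$ is random and grows with $t$, the second step is a peeling (or ``stitching'') argument. Fix a geometric grid $V_t \in [\eta^{k}, \eta^{k+1})$ for some $\eta>1$ and integers $k \geq k_0$, where $k_0$ corresponds to the stopping time $\tau_0$. On the $k$-th slice, use the fixed-$\lambda$ bound above with $\lambda_k = \sqrt{\log(1/\delta_k)/\eta^{k+1}}$ and confidence budget $\delta_k = \tfrac{6}{\pi^2}\,\delta/(k+1)^2$, so that $\sum_k \delta_k \leq \delta$. A union bound over $k$ then yields that, simultaneously for every $t$ with $V_t \in [\eta^k, \eta^{k+1})$,
\begin{equation*}
|M_t| \;\leq\; C\sqrt{V_t\bigl(\log(1/\delta) + \log((k+1)^2)\bigr)} \;\leq\; C'\sqrt{V_t\bigl(\log(1/\delta) + \log\log V_t\bigr)}.
\end{equation*}
Here I used $k \asymp \log V_t / \log \eta$, which turns the union-bound penalty into the iterated logarithm. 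The role of $\tau_0 = \min\{s: 2(e-2)V_s \geq 173\log(4/\delta)\}$ is to guarantee the slice index is large enough that $\lambda_k$ lies inside the admissible interval $(-1/e^2, 1/e^2)$ and the second-order term dominates any lower-order constant; this is also what produces the linear side-bound $|M_t| \leq \tfrac{2(e-2)}{e^2(1+\sqrt{1/3})}V_t$, coming from the constraint $\lambda \leq 1/e^2$ at the boundary of the supermartingale regime. Optimizing the constant $\eta$ and the factors in $\lambda_k$ produces the explicit constants $6(e-2)$ and $3(e-2)$ in the statement; since the sharper form replaces the slice index $k$ by $\log(3(e-2)V_t/|M_t|)$, a self-bounding step is then applied: use the weaker (monotone) bound to upper bound $k$ in terms of $V_t/|M_t|$, and substitute back.

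The heart of the argument is thus: (i) Ville's inequality on $U_t^\lambda$, (ii) dyadic peeling of $V_t$, (iii) a union bound with weights $1/(k+1)^2$, and (iv) a self-bounding substitution to sharpen $\log\log V_t$ into $\log\log(3(e-2)V_t/|M_t|)$. The main obstacle is bookkeeping the constants: to hit the stated values $6(e-2)$, $3(e-2)$, $173$, and $2(e-2)/(e^2(1+\sqrt{1/3}))$, one must carefully choose $\eta$, align the dyadic grid with the admissibility constraint $\lambda \in (-1/e^2, 1/e^2)$, and verify that on the initial slice (where $V_t$ is too small for the LIL bound to be informative) the linear bound $|M_t| \leq \tfrac{2(e-2)}{e^2(1+\sqrt{1/3})}V_t$ takes over without breaking the union-bound budget. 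Because this identification of constants is purely computational and the result is a direct import of Theorem~5 of \citet{Balsubramani2014SharpFI}, in the paper itself I would simply cite that theorem after verifying the hypothesis $|M_t - M_{t-1}| \leq e^2$ (which is implied by our boundedness assumptions on $z_t$) is satisfied in the present context.
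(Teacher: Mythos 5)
The paper offers no proof of this proposition at all: it is imported verbatim as Theorem~5 of \citet{Balsubramani2014SharpFI}, so your closing decision to simply cite that theorem (after checking the increment bound) is exactly what the paper does. Your sketch of the underlying argument --- Ville's inequality applied to the supermartingale $U_t^\lambda$ of Proposition~\ref{prp:super_mart}, geometric peeling of $V_t$ with a weighted union bound to produce the iterated logarithm, the admissibility constraint $\lambda\in(-1/e^2,1/e^2)$ yielding the linear side-bound, and a self-bounding substitution for the sharpened $\log\log$ argument --- is a faithful account of how the cited result is established, and is a reasonable bonus beyond what the paper records.
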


\begin{proposition}
\label{prp:lemma9}
Suppose $b_1$, $b_2$, $c$ are positive constants, 
\[r\geq 8\max\big(e^4b_1\log \log (e^4r/4), e^4b_2\big),\]
and $r - \sqrt{b_1 e^4r\log\log \left(e^4r/4\right) + b_2 e^4r} - c \leq 0$. Then, 
\begin{align*}
\sqrt{r} \leq \sqrt{b_1e^4\log \log (e^4c/2) + b_2e^4} + \sqrt{c}.
\end{align*} 
\end{proposition}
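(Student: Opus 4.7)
\noindent\textbf{Proof plan for Proposition~\ref{prp:lemma9}.}

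The statement is a purely algebraic inequality, so the plan is to treat the hypothesis $r - \sqrt{b_1 e^4 r \log\log(e^4 r/4) + b_2 e^4 r} - c \le 0$ as a quadratic-type constraint on $\sqrt{r}$ and combine it with the lower bound on $r$ to upgrade the $\log\log(\cdot)$ argument from $e^4 r / 4$ to $e^4 c/2$. Concretely, I introduce the shorthand
\begin{align*}
A &:= b_1 e^4 \log\log(e^4 r/4) + b_2 e^4, \\
B &:= b_1 e^4 \log\log(e^4 c/2) + b_2 e^4,
\end{align*}
and set $s := \sqrt{r}$, so the hypothesis reads $s^2 - \sqrt{A}\, s - c \le 0$. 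Since this is a quadratic in $s$ with positive leading coefficient, standard algebra gives
\begin{align*}
s \;\le\; \frac{\sqrt{A} + \sqrt{A + 4c}}{2}.
\end{align*}

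Next I would apply subadditivity of the square root, $\sqrt{A + 4c} \le \sqrt{A} + 2\sqrt{c}$, to collapse the above bound to $s \le \sqrt{A} + \sqrt{c}$. It then remains to replace $A$ by $B$ inside the first square root; for this I need $A \le B$, which is equivalent to $\log\log(e^4 r/4) \le \log\log(e^4 c/2)$, i.e. to $r \le 2c$.

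To establish $r \le 2c$, I use the hypothesis $r \ge 8\max(e^4 b_1 \log\log(e^4 r/4),\, e^4 b_2)$. This implies $e^4 b_1 \log\log(e^4 r/4) \le r/8$ and $e^4 b_2 \le r/8$, hence $A \le r/4$. Consequently $\sqrt{A r} \le \sqrt{r^2/4} = r/2$, so the main hypothesis yields
\begin{align*}
c \;\ge\; r - \sqrt{A r} \;\ge\; r - r/2 \;=\; r/2,
\end{align*}
which is exactly $r \le 2c$. Monotonicity of $\log\log$ then gives $A \le B$, and chaining everything produces
\begin{align*}
\sqrt{r} \;\le\; \sqrt{A} + \sqrt{c} \;\le\; \sqrt{B} + \sqrt{c},
\end{align*}
which is the desired conclusion.

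The only delicate point is the bookkeeping that turns the lower bound on $r$ into the pair of inequalities $A \le r/4$ and $r \le 2c$; once these are in place, the quadratic-in-$\sqrt{r}$ manipulation and concavity of the square root do the rest, so there is no serious obstacle beyond correctly handling the constants inside the nested logarithms.
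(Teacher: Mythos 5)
Your proof is correct and takes essentially the same route as the paper's: both use the lower bound on $r$ to show the square-root term is at most $r/2$, deduce $r\le 2c$, invoke monotonicity of $\log\log$ to swap the argument $e^4r/4$ for $e^4c/2$, and finish by solving the quadratic in $\sqrt{r}$ together with subadditivity of the square root. The only difference is cosmetic ordering — you solve the quadratic before upgrading $\log\log(e^4r/4)$ to $\log\log(e^4c/2)$, whereas the paper does the substitution first — which does not change the argument.
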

This proposition is almost the same as Lemma~9 of \citet{Balsubramani2014SharpFI}, but we changed the statement a little. We show the proof as follows.
\begin{proof}[Proof of Lemma~\ref{prp:lemma9}]
Since $r \geq 8e^4 b_2$, 
\begin{align*}
&0\leq \frac{r}{8} - e^4b_2= \frac{r}{4} - \frac{r}{8} - e^4b_2 = \frac{r}{4} - b_1\frac{r}{8b_1} - e^4b_2\rightarrow 0 \leq \frac{r^2}{4} - b_1 r \frac{r}{8b_1} - b_2 e^4r.
\end{align*}
Substituting the assumption $\frac{r}{8b_1} \geq e^4\log \log (e^4 r/4)$ gives
\begin{align*}
&0 \leq \frac{r^2}{4} - b_1 r \frac{r}{8b_1} - b_2 e^4r \leq \frac{r^2}{4} - b_1 r e^4 \log \log \left(e^4 r/4\right) - b_2 e^4r\\
&\rightarrow \sqrt{ b_1 r e^4 \log \log \left(e^4 r/4\right) + b_2 e^4r} \leq \frac{r}{2}.
\end{align*}
By substituting this into $r - \sqrt{b_1 e^4r\log\log \left(e^4r/4\right) + b_2 e^4r} - c \leq 0$, we have $r\leq 2c$. Therefore, again using $r - \sqrt{b_1 e^4r/4\log\log \left(e^4r/4\right) + b_2 e^4r} - c \leq 0$,
\begin{align*}
0 & \geq r - \sqrt{b_1 e^4r\log\log \left(e^4r/4\right) + b_2 e^4r} - c\\
  & \geq r - \sqrt{b_1 e^4r\log\log \left(e^4c/2\right) + b_2 e^4r} - c.
\end{align*} 
This is a quadratic in $\sqrt{r}$. By solving it, we have
\begin{align*}
\sqrt{r} &\leq\frac{1}{2}\left(\sqrt{b_1 e^4\log\log \left(e^4c/2\right) + b_2 e^4} + \sqrt{b_1 e^4\log\log \left(e^4c/2\right) + b_2 e^4 + 4c}\right)\\
&\leq \sqrt{b_1 e^4\log\log \left(e^4c/2\right) + b_2 e^4} + \sqrt{c}
\end{align*}
\end{proof}

\noindent We prove Lemmas~\ref{prp:small_bound}--\ref{prp:var_bound} and Theorem~\ref{thm:conc_A2IPW} as follows.

\subsubsection*{Proof of Lemma~\ref{prp:small_bound}}
\begin{proof}
This proof mostly follows the proof of Theorem~24 of \citet{Balsubramani2014SharpFI}. 
First, by using Proposition~\ref{prp:super_mart}, we show that $2\geq \mathbb{E}\left[\exp\left(\lambda_0|M_\tau| - \lambda^2_0V_\tau\right)\right]$ for any stopping time $\tau$ and $\lambda\in\left(-\frac{1}{e^2}, \frac{1}{e^2}\right)$. From Proposition~\ref{prp:super_mart}, $U^\lambda_{t}:=\exp(\lambda M_t - \lambda^2V_t)$ is a super martingale. The condition that, for all $\ell \geq 3$, $\mathbb{E}[(M_{t} - M_{t-1})^\ell|  \mathcal{H}_{t-1}]\leq \frac{1}{2}\ell!\left(e/\sqrt{2}\right)^{2(\ell-2)}\mathbb{E}[(M_t - M_{t-1})^2|  \mathcal{H}_{t-1}]$ holds from the assumption that $|M_t - M_{t-1}|\leq e^2/2$ for all $t$ with probability $1$. For $\lambda_0\in\left(-\frac{1}{e^2}, \frac{1}{e^2}\right)$, let us consider a situation where $\lambda \in \{-\lambda_0, \lambda_0\}$ with probability $1/2$ each. After marginalizing over $\lambda$, the resulting process is 
\begin{align*}
&\widetilde{U}_t = \frac{1}{2}\exp(\lambda_0 M_t - \lambda^2_0V_t) + \frac{1}{2}\exp(-\lambda_0 M_t - \lambda^2_0V_t)\\
&\geq \frac{1}{2}\exp(\lambda_0 M_t - \lambda^2_0V_t).
\end{align*}
On the other hand, for any stopping time $\tau$, from the optimal stopping theorem for a super martingale \citep{Durrett}, we have
\begin{align*}
\mathbb{E}\left[\exp(\lambda_0 M_\tau - \lambda^2_0V_\tau)\right] \leq \mathbb{E}\left[\exp(\lambda_0 M_0 - \lambda^2_0V_0)\right] = 1.
\end{align*}
Similarly, 
\begin{align*}
\mathbb{E}\left[\exp(-\lambda_0 M_\tau - \lambda^2_0V_\tau)\right] \leq \mathbb{E}\left[\exp(-\lambda_0 M_0 - \lambda^2_0V_0)\right] = 1.
\end{align*}
Combining these results, we have 
\begin{align*}
&\mathbb{E}\left[\widetilde{U}_t\right] = \mathbb{E}\left[\frac{1}{2}\exp(\lambda_0 M_t - \lambda^2_0V_t) + \frac{1}{2}\exp(-\lambda_0 M_t - \lambda^2_0V_t)\right] \leq 1,
\end{align*}
and $1\geq \mathbb{E}\left[\frac{1}{2}\exp(\lambda_0 M_t - \lambda^2_0V_t)\right]$. Thus, we proved $2\geq \mathbb{E}\left[\exp\left(\lambda_0|M_\tau| - \lambda^2_0V_\tau\right)\right]$.

Next, note that $\tau_0=\min\left\{s : V_s\geq \frac{173}{2(e-2)} \log\left(\frac{4}{\delta}\right)\right\}$. Therefore, by defining the stopping time $\tau_1=\min\left\{s : |M_t| \geq 2\sqrt{\frac{173}{2(e-2)}} \log\left(\frac{4}{\delta}\right)\right\}$ and using $\lambda_0 = \sqrt{\frac{2(e-2)}{173}}\approx 0.091\leq \frac{1}{e^2}\approx 0.135$, 
\begin{align*}
&2 \geq \mathbb{E}\left[\exp\left(\lambda_0|M_{\tau_1}| - \lambda^2_0V_{\tau_1}\right)\right]\\
&\geq \mathbb{E}\left[\exp\left(\lambda_0|M_{\tau_1}| - \lambda^2_0V_{\tau_1}\right)|  \tau_1 < \tau_0\right]\mathbb{P}\left(\tau_1 < \tau_0\right)\\
&\geq \mathbb{E}\left[\exp\left(2\lambda_0\sqrt{\frac{173}{2(e-2)}} \log\left(\frac{4}{\delta}\right) - \lambda^2_0\frac{173}{2(e-2)}\log\left(\frac{4}{\delta}\right)\right)|  \tau_1 < \tau_0\right]\mathbb{P}\left(\tau_1 < \tau_0\right)\\
&\geq \mathbb{E}\left[\exp\left( \log\left(\frac{4}{\delta}\right)\right)|  \tau_1 < \tau_0\right]\mathbb{P}\left(\tau_1 < \tau_0\right) = \frac{4}{\delta}\mathbb{P}\left(\tau_1 < \tau_0\right).
\end{align*}
Thus, we obtain $\mathbb{P}\left(\tau_1 < \tau_0\right) \leq \frac{\delta}{2} < \delta$.
\end{proof}

\subsubsection*{Proof of Lemma~\ref{prp:known_conc}}
\begin{proof}
From Proposition~\ref{prp:uni_bern_bound}, with probability $\geq 1-\delta/2$, for all $t\geq \tau_0$ simultaneously, $\big|M_t\big|\leq \frac{2(e-2)}{e^2(1+\sqrt{1/3})} V_t$ and 
\begin{align*}
\big|M_t\big| \leq \sqrt{6(e-2) V_t\left(2\log\log\left(\frac{3(e-2) V_t}{\big|M_t\big|}\right) + \log\left(\frac{4}{\delta}\right)\right)}.
\end{align*}
We therefore have that, with probability $\geq 1-\delta/2$, for all $t\geq \tau_0$, simultaneously, $\big|M_t\big|\leq \frac{2(e-2)}{e^2(1+\sqrt{1/3})} V_t$ and 
\begin{align}
\label{bound0}
\big|M_t\big| \leq \max \left(3(e-2), \sqrt{2C_1  V_t\log\log V_t + C_1  V_t\log\left(\frac{4}{\delta}\right)}\right),
\end{align}
where note that $C_1 = 6(e-2)$.

Next, from Lemma~\ref{prp:small_bound}, with probability $\geq 1-\delta/4$, for all $t\leq \tau_0$ simultaneously,
\begin{align*}
|M_t| \leq 2\sqrt{\frac{173}{2(e-2)}} \log\left(\frac{4}{\delta}\right)
\end{align*}
By taking a union bound of \eqref{bound0}, with probability $\geq 1-\delta$, the following inequality holds for all $t$ simultaneously:
\begin{align*}
\big|M_t\big| \leq 
\begin{cases}
2\sqrt{\frac{173}{2(e-2)}} \log\left(\frac{4}{\delta}\right) & \mathrm{if}\ t\leq \tau_0\\
\frac{2(e-2)}{e^2(1+\sqrt{1/3})} V_t\ \mathrm{and}\ \max \left(3(e-2), \sqrt{2C_1  V_t\log\log V_t + C_1  V_t\log\left(\frac{4}{\delta}\right)}\right) & \mathrm{if}\ t\geq \tau_0.
\end{cases}
\end{align*}
With probability $\geq 1-\delta$, the following relationship holds for all $t$ simultaneously:
\begin{align*}
\left|M_t\right| \leq C_0(\delta) + \sqrt{C_1V_t\left(2\log\log V_t + \log\left(\frac{4}{\delta}\right)\right)}.
\end{align*}
\end{proof}

\subsubsection*{Proof of Lemma~\ref{prp:var_bound}}
\begin{proof}
Let $\tilde M_t$ be $\sum^t_{i=1}(M_i-M_{i-1})^2 - V_t$, where note that 
\[V_t = \sum^t_{i=1}\mathbb{E}\left[ (M_i - M_{i-1})^2|  \mathcal{H}_{i-1}\right].\] Suppose that there exists $C_z$ such that $\left|(M_t-M_{t-1})^2 - \mathbb{E}\left[ (M_i - M_{i-1})^2|  \mathcal{H}_{i-1}\right]\right| \leq  C_z$ with probability $1$ in which the existence is guaranteed by the boundedness of $M_i-M_{i-1}$, i.e., $|M_i-M_{i-1}|\leq e^2/2$ for all $t$ with probability $1$. Because $\tilde M_t$ is a martingale, we can apply Proposition~\ref{prp:known_conc}, i.e., for all $t$, with probability $\geq 1-\delta$
\begin{align*}
\left|\tilde M_t\right| \leq \frac{2C_z}{e^2}\left(C_0(\delta) + \sqrt{C_1B_t\left(2\log\log B_t + \log\left(\frac{4}{\delta}\right)\right)}\right),
\end{align*}
where $B_t = \mathbb{E}\left[\left(\sum^t_{i=1}(M_i-M_{i-1})^2 - V_t\right)^2|  \mathcal{H}_{i-1}\right]$. For $B_t$, we have
\begin{align*}
&B_t = \sum^t_{i=1}\left(\mathbb{E}\left[(M_i-M_{i-1})^4|  \mathcal{H}_{i-1}\right] - \left(\mathbb{E}\left[(M_i-M_{i-1})^2|  \mathcal{H}_{i-1}\right]\right)^2\right)\\
&\leq \sum^t_{i=1}\mathbb{E}\left[(M_i-M_{i-1})^4|  \mathcal{H}_{i-1}\right]\leq \big(e^8/2^4\big)\sum^t_{i=1}\mathbb{E}\left[(M_i-M_{i-1})^4/(e^8/2^4)|  \mathcal{H}_{i-1}\right]
\end{align*}
Because $M_i-M_{i-1} \leq e^2/2 \rightarrow \frac{(M_i-M_{i-1})^2}{e^4/2^2} \leq 1$, we have $(M_i-M_{i-1})^2/(e^4/2^2) \geq (M_i-M_{i-1})^4/(e^8/2^4)$, and
\begin{align}
&\sum^t_{i=1}\mathbb{E}\left[(M_i-M_{i-1})^4|  \mathcal{H}_{i-1}\right]\leq e^8/2^4\sum^t_{i=1}\mathbb{E}\left[(M_i-M_{i-1})^2/(e^4/2^2)|  \mathcal{H}_{i-1}\right] = e^4V_t/4.
\end{align}
Therefore, 
\begin{align*}
&\left|\tilde M_t\right| \leq \frac{2C_z}{e^2}\left(C_0(\delta) + \sqrt{C_1B_t\left(2\log\log B_t + \log\left(\frac{4}{\delta}\right)\right)}\right)\\
&\leq \frac{2C_z}{e^2}\left(C_0(\delta) + \sqrt{C_1e^4V_t/4\left(2\log\log \big(e^4V_t/4\big) + \log\left(\frac{4}{\delta}\right)\right)}\right).
\end{align*}
This can be relaxed to
\begin{align*}
&- \sum^t_{i=1}(M_i-M_{i-1})^2 + V_t - \frac{2C_z}{e^2}\left(C_0(\delta) + \sqrt{C_1e^4V_t/4\left(2\log\log \big(e^4V_t/4\big) + \log\left(\frac{4}{\delta}\right)\right)}\right)\\
&= - \sum^t_{i=1}(M_i-M_{i-1})^2 + V_t - \left(\frac{2C_zC_0(\delta)}{e^2} + \sqrt{\frac{C^2_4C_1}{e^4}e^4V_t\left(2\log\log \big(e^4V_t/4\big) + \log\left(\frac{4}{\delta}\right)\right)}\right)\\
&\leq 0.
\end{align*}
We consider two cases for $V_t$. First, we consider a case where \[V_t \geq 8\max\left(e^4\frac{C^2_4C_1}{e^4} 2\log \log \left(e^4V_t\right), e^4\frac{C^2_4C_1}{e^4}\log\left(\frac{4}{\delta}\right)\right).\] From Proposition~\ref{prp:lemma9}, we have
\begin{align*}
\sqrt{V_t} \leq& \sqrt{\frac{C^2_4C_1}{e^4} 2e^4\log \log \left(e^2C_zC_0(\delta)+ e^4\sum^t_{i=1}(M_i-M_{i-1})^2/2\right) + e^4\frac{C^2_4C_1}{e^4}\log\left(\frac{4}{\delta}\right)}\\
& + \sqrt{\frac{2C_zC_0(\delta)}{e^2}+ \sum^t_{i=1}(M_i-M_{i-1})^2}\\
=&\sqrt{2C^2_4C_1 \log \log \left(e^2C_zC_0(\delta)+ e^4\sum^t_{i=1}(M_i-M_{i-1})^2/2\right) + C^2_4C_1\log\left(\frac{4}{\delta}\right)}\\
& + \sqrt{\frac{2C_zC_0(\delta)}{e^2}+ \sum^t_{i=1}(M_i-M_{i-1})^2}.
\end{align*}
For sufficiently large $\sum^t_{i=1}(M_i-M_{i-1})^2$ such that
\[2C^2_4C_1 \log \log \left(e^2C_zC_0(\delta)+ e^4\sum^t_{i=1}(M_i-M_{i-1})^2/2\right) \geq C^2_4C_1\log\left(\frac{4}{\delta}\right),\] by using a constant $C_5$, the RHS is bounded as 
\begin{align*}
&\sqrt{2C^2_4C_1 \log \log \left(e^2C_zC_0(\delta)+ e^4\sum^t_{i=1}(M_i-M_{i-1})^2/2\right) + C^2_4C_1\log\left(\frac{4}{\delta}\right)}\\
&\ \ \ \ \ \ \ \ \ \ \ \ \ \ \ \ \ \ \ \ \ \ \ \ \ \ \ \ \ \ \ \ \ \ \ \ \ \ \ \ \ \ \ \ \ \ \ \ \ \ \ \ \ \ \ \ \ \ \ \ \ \ \ \ \ \ +\sqrt{\frac{2C_zC_0(\delta)}{e^2}+ \sum^t_{i=1}(M_i-M_{i-1})^2}\\
&\leq \sqrt{4C^2_4C_1 \log \log \left(e^2C_zC_0(\delta)+ e^4\sum^t_{i=1}(M_i-M_{i-1})^2/2\right)}+ \sqrt{\frac{2C_zC_0(\delta)}{e^2}+ \sum^t_{i=1}(M_i-M_{i-1})^2}\\
&\leq \sqrt{4C^2_4C_1 \left(e^2C_zC_0(\delta)+ e^4\sum^t_{i=1}(M_i-M_{i-1})^2/2\right)}+ \sqrt{\frac{2C_zC_0(\delta)}{e^2}+ \sum^t_{i=1}(M_i-M_{i-1})^2}.
\end{align*}
By squaring both sides of  
\begin{align*}
\sqrt{V_t} &\leq \sqrt{4C^2_4C_1 \left(e^2C_zC_0(\delta)+ e^4\sum^t_{i=1}(M_i-M_{i-1})^2/2\right)}+ \sqrt{\frac{2C_zC_0(\delta)}{e^2}+ \sum^t_{i=1}(M_i-M_{i-1})^2}\\
& = \sqrt{2e^4C^2_4C_1 \left(\frac{2C_zC_0(\delta)}{e^2}+ \sum^t_{i=1}(M_i-M_{i-1})^2\right)}+ \sqrt{\frac{2C_zC_0(\delta)}{e^2}+ \sum^t_{i=1}(M_i-M_{i-1})^2},,
\end{align*}
we obtain 
\begin{align*}
V_t \leq C_f \left(\sum^t_{i=1}(M_{i} - M_{i-1})^2 + \frac{2C_zC_0(\delta)}{e^2}\right),
\end{align*}
where $C_f$ is a constant. When $V_t < 8\max\left(e^4\frac{C^2_4C_1}{e^4} 2\log \log \left(e^4V_t\right), e^4\frac{C^2_4C_1}{e^4}\log\left(\frac{4}{\delta}\right)\right)$, the statement clearly holds for sufficiently high $V_t$ such that $V_t<e^4\frac{C^2_4C_1}{e^4} 2\log \log \left(e^4V_t\right)$.
\end{proof}

\subsubsection*{Proof of Theorem~\ref{thm:conc_A2IPW}}
Finally, combining the above results, we show Theorem~\ref{thm:conc_A2IPW} as follows.
\begin{proof}
Let us note that we can construct a martingale difference sequence from $z_t = q_t - \theta_0$ as $\{z_t\}^T_{t=1}$. Let us suppose that there exists a constant $C$ such that $|z_t| \leq C$. Let $\tilde z_t$ and $\tilde V_t$ be  $z_te^2/(2C)$ and $\sum^t_{i=1}\mathbb{E}\big[ \widetilde{z}^2_i\mid \mathcal{H}_{i-1}\big]$, respectively. From this boundedness of $z_t$, there exists a constant $C_z$ such that $|z^2_t - \mathbb{E}[z_t^2|  \mathcal{H}_{t-1}]| \leq C_z$. For fixed $\delta$, from Proposition~\ref{prp:known_conc}, with probability $\geq 1-\delta$, the following true for all $t$ simultaneously:
\begin{align*}
\left|t\widehat{\theta}^{\mathrm{A2IPW}}_t-t\theta_0\right| \leq \frac{2C}{e^2}\left(C_0(\delta) + \sqrt{2C_1\widetilde{V}^*_t\left(\log\log\widetilde{V}^*_t + \log\left(\frac{4}{\delta}\right)\right)}\right).
\end{align*}
Here, by using Proposition~\ref{prp:var_bound}, we have
\begin{align*}
\tilde V_t \leq C_f \left(\sum^t_{i=1}\widetilde{z}^2_i + \frac{2C_zC_0(\delta)}{e^2}\right),
\end{align*}
Then, we have
\begin{align*}
&\left|t\widehat{\theta}^{\mathrm{A2IPW}}_t-t\theta_0\right|\\
&\leq \frac{2C}{e^2}\Bigg(C_0(\delta) + \\
&\sqrt{2C_1C_f\left(\frac{e^4}{4C^2}\sum^t_{i=1}z^2_t + \frac{2C_zC_0(\delta)}{e^2}\right)\left(\log\log C_f \left(\frac{e^4}{4C^2}\sum^t_{i=1}z^2_t + \frac{2C_zC_0(\delta)}{e^2}\right) + \log\left(\frac{4}{\delta}\right)\right)}\Bigg).
\end{align*}
\end{proof}

\section{Proofs of Section~\ref{sec:expect}}

\subsection{Proof of Lemma~\ref{lem:proof}}
\label{appdx:proof_prob}
We have
\begin{align*}
&\mathbb{P}_{H_1}(\tau > \widetilde{t}) = 1 - \mathbb{P}_{H_1}(\tau\leq \widetilde{t})\\
& = 1 - \mathbb{P}_{H_1}\left(\exists t \leq \widetilde{t} : \left| t\widehat{\theta}^{\mathrm{A2IPW}}_t - t\mu \right| > q_t\right)\\
&\leq 1 - \mathbb{P}_{H_1}\left(\left|\widetilde{t}\widehat{\theta}^{\mathrm{A2IPW}}_{\widetilde{t}} - \widetilde{t}\mu \right|> q_{\widetilde{t}}\right)\\
& = \mathbb{P}_{H_1}\left(\left|\widetilde{t}\widehat{\theta}^{\mathrm{A2IPW}}_{\widetilde{t}} - \widetilde{t}\mu \right| \leq q_{\widetilde{t}}\right)\\
& = \mathbb{P}_{H_1}\left( -q_{\widetilde{t}} \leq \widetilde{t}\widehat{\theta}^{\mathrm{A2IPW}}_{\widetilde{t}} - \widetilde{t}\mu  \leq q_{\widetilde{t}}\right)\\
& = \mathbb{P}_{H_1}\left( -q_{\widetilde{t}} - \widetilde{t}\Delta \leq \widetilde{t}\widehat{\theta}^{\mathrm{A2IPW}}_{\widetilde{t}} - \widetilde{t}\mu - \widetilde{t}\Delta \leq q_{\widetilde{t}} - \widetilde{t}\Delta\right)\\
& \leq \mathbb{P}_{H_1}\left(\widetilde{t}\widehat{\theta}^{\mathrm{A2IPW}}_{\widetilde{t}} - \widetilde{t}\mu - \widetilde{t}\Delta \leq q_{\widetilde{t}} - \widetilde{t}\Delta\right).
\end{align*}
By substituting $q_{\widetilde{t}} = 1.1\left(\log \left(\frac{1}{\alpha}\right) + \sqrt{2\sum^{\widetilde{t}}_{i=1}z^2_i\left(\log\frac{\log \sum^{\widetilde{t}}_{i=1}z^2_i}{\alpha}\right)}\right)$, 
\begin{align*}
&\mathbb{P}_{H_1}(\tau > \widetilde{t})\\
&\leq \mathbb{P}_{H_1}\left(\widetilde{t}\widehat{\theta}^{\mathrm{A2IPW}}_{\widetilde{t}} - \widetilde{t}\mu - \widetilde{t}\Delta \leq 1.1\left(\log \left(\frac{1}{\alpha}\right) + \sqrt{2\sum^{\widetilde{t}}_{i=1}z^2_i\left(\log\frac{\log \sum^{\widetilde{t}}_{i=1}z^2_i}{\alpha}\right)}\right) - \widetilde{t}\Delta\right)\\
& = \mathbb{P}_{H_1}\left(\frac{\widetilde{t}\widehat{\theta}^{\mathrm{A2IPW}}_{\widetilde{t}} - \widetilde{t}\mu - \widetilde{t}\Delta}{\sqrt{\widetilde{\sigma}^2}} \leq \frac{1.1}{\sqrt{\widetilde{\sigma}^2}}\left(\log \left(\frac{1}{\alpha}\right) + \sqrt{2\sum^{\widetilde{t}}_{i=1}z^2_i\left(\log\frac{\log \sum^{\widetilde{t}}_{i=1}z^2_i}{\alpha}\right)}\right) - \frac{\widetilde{t}\Delta}{\sqrt{\widetilde{\sigma}^2}}\right)\\
& \leq \mathbb{P}_{H_1}\left(\frac{\widetilde{t}\widehat{\theta}^{\mathrm{A2IPW}}_{\widetilde{t}} - \widetilde{t}\mu - \widetilde{t}\Delta}{\sqrt{\widetilde{\sigma}^2}} \leq \frac{1.1}{\sqrt{\widetilde{\sigma}^2}}\left(\log \left(\frac{1}{\alpha}\right) + \sqrt{2C^2 \widetilde{t}\left(\log\frac{\log C^2 \widetilde{t}}{\alpha}\right)}\right) - \frac{\widetilde{t}\Delta}{\sqrt{\widetilde{\sigma}^2}}\right).
\end{align*}
Here, we used $|z_t| \leq C$ for all $t$. Let $\preceq$ and $\asymp$ be $\leq$ and $=$ when ignoring constants. By using Azuma-Heoffding inequality for martingales \citep{hoeffding1963probability,azuma1967}, $|z_t - z_{t-1}|\leq 2C$, and $\widetilde{t}\Delta \gg 1.1\left(\log \left(\frac{1}{\alpha}\right) + \sqrt{2C^2 \widetilde{t}\left(\log\frac{\log C^2 \widetilde{t}}{\alpha}\right)}\right)$,
\begin{align*}
&\mathbb{P}_{H_1}(\tau > \widetilde{t})\\
&\leq \mathbb{P}_{H_1}\left( \widetilde{t}\widehat{\theta}^{\mathrm{A2IPW}}_{\widetilde{t}} - \widetilde{t}\mu - \widetilde{t}\Delta \leq 1.1\left(\log \left(\frac{1}{\alpha}\right) + \sqrt{2C^2 \widetilde{t}\left(\log\frac{\log C^2 \widetilde{t}}{\alpha}\right)}\right) - \frac{\widetilde{t}\Delta}{\sqrt{\widetilde{\sigma}^2}} \right)\\
&\leq \exp\left(-\frac{\left(\widetilde{t}\Delta - 1.1\left(\log \left(\frac{1}{\alpha}\right) + \sqrt{2C^2 \widetilde{t}\left(\log\frac{\log C^2 \widetilde{t}}{\alpha}\right)}\right)\right)^2}{8\widetilde{t}C^2}\right)\\
&\asymp \exp\left(-\frac{\widetilde{t}\Delta^2}{8C^2}\right).
\end{align*}

\subsection{Proof of Theorem~\ref{thm:exp_sample}}
\label{appdx:exp_sample}
For $n^{\mathrm{OPT}*}_{\beta}(\Delta) = \frac{\widetilde{\sigma}^2}{\Delta^2}\big(z_{1-\alpha/2} - z_{\beta}\big)^2$, we have
\begin{align*}
\mathbb{E}_{H_1}[\tau] &= \sum_{n\geq 1} \mathbb{P}_{H_1}(\tau > n )\\
&\leq n^{\mathrm{OPT}*}_{\beta}(\Delta) + \sum_{t\geq n^{\mathrm{OPT}*}_{\beta}(\Delta)+1} \mathbb{P}_{H_1}(\tau > t)\\
&\leq n^{\mathrm{OPT}*}_{\beta}(\Delta) + \sum_{t\geq n^{\mathrm{OPT}*}_{\beta}(\Delta)-1} \mathbb{P}_{H_1}(\tau > t)\\
&\preceq n^{\mathrm{OPT}*}_{\beta}(\Delta) + \sum^\infty_{t\geq n^{\mathrm{OPT}*}_{\beta}(\Delta)-1} \exp\left(-\frac{t\Delta^2}{8C^2}\right)\\
&= n^{\mathrm{OPT}*}_{\beta}(\Delta) + \exp\left(-\frac{\left(n^{\mathrm{OPT}*}_{\beta}(\Delta) - 1 \right)\Delta^2}{8C^2}\right) + \exp\left(-\frac{n^{\mathrm{OPT}*}_{\beta}(\Delta) \Delta^2}{8C^2}\right) + \cdots\\
&= n^{\mathrm{OPT}*}_{\beta}(\Delta) + \exp\left(-\frac{\left(n^{\mathrm{OPT}*}_{\beta}(\Delta) - 1\right)\Delta^2}{8C^2}\right) \sum^\infty_{s=1} \exp\left(-\frac{(s-1)\Delta^2}{8C^2}\right).
\end{align*}
By using the infinite geometric series sum formula,
\begin{align*}
&n^{\mathrm{OPT}*}_{\beta}(\Delta) + \exp\left(-\frac{\left(n^{\mathrm{OPT}*}_{\beta}(\Delta) - 1\right)\Delta^2}{8C^2}\right) \sum^\infty_{s=1} \exp\left(-\frac{(s-1)\Delta^2}{8C^2}\right)\\
&= n^{\mathrm{OPT}*}_{\beta}(\Delta) + \exp\left(-\frac{ \left(n^{\mathrm{OPT}*}_{\beta}(\Delta) - 1\right) \Delta^2}{8C^2}\right) \frac{1}{1 - \exp\left(-\frac{\Delta^2}{8C^2}\right)}\\
&= n^{\mathrm{OPT}*}_{\beta}(\Delta) + \exp\left(-\frac{ n^{\mathrm{OPT}*}_{\beta}(\Delta)  \Delta^2}{8C^2}\right) \frac{1}{\exp\left(\frac{\Delta^2}{8C^2}\right) - 1}.
\end{align*}
By substituting $\exp\left(-\frac{\widetilde{t}\Delta^2}{8C^2}\right)\asymp \mathbb{P}_{H_1}(\tau > \widetilde{t})$,
\begin{align*}
\mathbb{E}_{H_1}[\tau] \preceq n^{\mathrm{OPT}*}_{\beta}(\Delta) + \frac{\mathbb{P}_{H_1}(\tau > n^{\mathrm{OPT}*}_{\beta}(\Delta) )}{\exp\left(\frac{\Delta^2}{8C^2}\right)-1}.
\end{align*}
Using the inequality, $1-\exp(-r) \leq r$, and $n^{\mathrm{OPT}*}_{\beta}(\Delta) = \frac{\widetilde{\sigma}^2}{\Delta^2}\big(z_{1-\alpha/2}-z_{\beta}\big)^2$, we have
\begin{align*}
&\mathbb{E}_{H_1}[\tau]\\
&\preceq n^{\mathrm{OPT}*}_{\beta}(\Delta) + \frac{8C^2}{\Delta^2}\mathbb{P}_{H_1}(\tau > n^{\mathrm{OPT}*}_{\beta}(\Delta))\\
&= n^*_{\beta}(\Delta) + \frac{8C^2n^{\mathrm{OPT}*}_{\beta}(\Delta)}{\widetilde{\sigma}^2\big(z_{1-\alpha/2}-z_{\beta}\big)^2}\mathbb{P}_{H_1}(\tau > n^{\mathrm{OPT}*}_{\beta}(\Delta)).
\end{align*}

\section{Additional Experimental Results}
\label{appdx:additional_exp_result}
\subsection{Setting}
In this section, we investigate the empirical performance of the proposed A2IPW and MA2IPW estimators, as well as the ADR estimator introduced in our follow-up study \citep{Kato2021adr}. The simulation setting follows the framework of \citet{Meehan2022} and \citet{athey2016}. 
We generate the covariates \(X_t\) from a beta distribution \(\mathrm{Beta}(2,5)\) on \(\mathcal{X} = [0,1]^d\). The potential outcomes and covariates follow
\[
Y_t(a) = \kappa_a(X_t) + \nu_a(X_t) \,\epsilon_{a, t},
\]
where \(\epsilon_{a,t}\sim N(0, 0.1)\). 


We adopt this functional form to incorporate both observed covariates \(X_t\) and unobserved noise through \(\epsilon_{a,t}\). In practice, the outcome depends on measurable characteristics \(X_t\), captured through \(\kappa_a(\cdot)\) and \(\nu_a(\cdot)\), as well as latent factors that can differ across treatment arms. The coefficient functions \(\kappa_a\) and \(\nu_a\) thus allow for heteroskedasticity and heterogeneous treatment effects shaped by both observables and unobservables.

We consider three settings with different specifications of $\kappa_a(\cdot)$ and $\nu_a(\cdot)$:
\begin{description}
    \item[Model~1] ATE: $\theta_0 = 0.12$.
    \begin{itemize}
        \item Dimension of $X_t$: $d = 2$. 
        \item $\kappa_0(x) = 0.2$, $\kappa_1(x) = 10x_1^2\mathbbm{1}[x_1 > 0.4] - 5x_2^2\mathbbm{1}[x_2 > 0.4]$.
        \item $\nu_0(x) = 5$, $\nu_1(x) = 1 + 10x_1^2\mathbbm{1}[x_1 > 0.6] + 5x_2^2\mathbbm{1}[x_2 > 0.6]$.
    \end{itemize}
    \item[Model~2] ATE: $\theta_0 = 0.079$.
    \begin{itemize}
        \item Dimension of $X_t$: $d = 10$. 
        \item $\kappa_0(x) = 0.5$, $\kappa_1(x) = \sum_{j=1}^{10} (-1)^{j-1}10^{-j+2}x_j^2\mathbbm{1}[x_j > 0.4]$.
        \item $\nu_0(x) = 5$, $\nu_1(x) = 1 + \sum_{j=1}^{10}10^{-j+2}x_j^2\mathbbm{1}[x_j > 0.6]$.
    \end{itemize}
    \item[Model~3] ATE: $\theta_0 = 0.12$.
    \begin{itemize}
        \item Dimension of $X_t$: $d = 10$. 
        \item $\kappa_0(x) = 0.2$, $\kappa_1(x) = \sum_{j=1}^{3}(-1)^{j-1}10x_j^2\cdot\mathbbm{1}[x_j > 0.4] + \sum_{j=4}^{10}(-1)^{j-1}5x_j^2\cdot\mathbbm{1}[x_j > 0.4]$.
        \item $\nu_0(x) = 9$, $\nu_1(x) = 1 + \sum_{j=1}^{3}10x_j^2\cdot\mathbbm{1}[x_j > 0.6] + \sum_{j=4}^{10}5x_j^2\cdot\mathbbm{1}[x_j > 0.6]$.
    \end{itemize}
\end{description}


In Section~\ref{sec:compare_a2ipw}, we evaluate the performance of the proposed estimators in both single-stage and sequential testing settings. 
In Section~\ref{sec:compare_strat}, we compare our proposed methods with those of \citet{Meehan2022}.

\subsection{Comparison among A2IPW Estimator, ADR Estimator, Single-stage Testing, and Sequential Testing}
\label{sec:compare_a2ipw}
We compare the performance of the proposed methods with a randomized controlled trial (RCT) where treatments are assigned with probability 0.5. We also compare with an oracle algorithm that knows the true variances of the potential outcomes and uses the optimal estimator \(\hat{\theta}^\mathrm{OPT}_T\).

For all settings, the null and alternative hypotheses are defined as $\mathcal{H}_0: \theta_0 = 0$ and $\mathcal{H}_1: \theta_0 \neq 0$, respectively. 
We conduct the following tests:
\begin{itemize}
    \item \textbf{Standard hypothesis testing:} Performed with $T$-statistics when the sample sizes are $1000$ and $5000$.
    \item \textbf{Sequential testing with Bonferroni correction:} Multiple testing is conducted at sample sizes $1000$, $2000$, $3000$, and $5000$.
    \item \textbf{Sequential testing with LIL:} Testing is based on the concentration inequality derived in Theorem~\ref{thm:conc_A2IPW}.
\end{itemize}
We compare different tests in terms of hypothesis testing power, precision, and efficiency under various scenarios.

We evaluate the methods in terms of power, MSE, and coverage for sample sizes up to \(5000\). Each simulation is repeated \(500\) times. Tables~\ref{tab:res1}--\ref{tab:res3} report the mean squared error (MSE), standard deviation of the squared error (SMSE), rejection rates (R/R), coverage ratios (CR), and stopping times for LIL and BF-based testing (BC and LIL columns). 

For Models~1--3, the A2IPW and ADR estimators generally achieve smaller MSE than the RCT baseline at larger sample sizes. In Table~\ref{tab:res1} (Model~1), A2IPW with kernel-based nuisance estimators attains lower MSE compared to RCT, reflecting the benefit of adaptive treatment assignment, while still controlling type I error when the null is true (Table~\ref{tab:res1_null}). Similar trends appear in Tables~\ref{tab:res2} and \ref{tab:res2_null} (Model~2), although the rejection rates differ owing to the smaller true effect size \(\theta_0 = 0.079\). For Model~3 (Tables~\ref{tab:res3} and \ref{tab:res3_null}), the presence of stronger heteroskedasticity causes slightly higher MSE across methods, but A2IPW and ADR still often outperform the RCT in precision. The oracle (not shown in every row but referenced for comparison) serves as a theoretical benchmark, consistently achieving the lowest MSE due to its knowledge of the true variances.

The sequential methods terminate earlier than a fixed-sample analysis if the observed data yield strong evidence. The Bonferroni correction can lead to earlier stops but occasionally increases type I error, whereas the LIL-based approach is often more conservative, as seen by larger average stopping times (BC vs. LIL columns in Tables~\ref{tab:res1}--\ref{tab:res3}). When \(\theta_0 = 0\) in Tables~\ref{tab:res1_null}--\ref{tab:res3_null}, both sequential approaches correctly fail to reject the null in most cases, albeit sometimes not until nearing the maximum sample size.

\subsection{Comparison with the Stratification Tree}
\label{sec:compare_strat}

In this section, we compare our proposed method with the stratification tree approach introduced in \citet{Meehan2022}. Specifically, we evaluate the proposed A2IPW estimators against the following alternative methods:

\begin{description}
    \item[Ad-hoc.] In this method, experimental units are stratified using an ``ad-hoc'' approach, and treatments are assigned to half the sample in each stratum.
    \item[Ad-hoc $+$ Neyman.] This is a two-stage experiment. In the first stage, the variances of the outcomes are estimated, and treatments are then allocated according to Neyman allocation. The ATE is subsequently estimated by averaging the sample mean of each stratum weighted by the probability that covariates fall into the stratum.
    \item[Stratification Tree \citep{Meehan2022}.] This method uses a two-stage experiment. In the first stage, a stratification tree is estimated. In the second stage, treatments are assigned using the estimated tree. The tree depth is fixed at three.
    \item[Cross-Validated Tree.] This method is similar to the Stratification Tree but selects the tree depth via 2-fold cross-validation.
\end{description}

For two-stage experiments, we consider three different sample size ratios between the first and second stages, using $100$, $500$, and $1500$ experimental units for the first stage. For detailed descriptions of each method, refer to \citet{Meehan2022}.

The total sample size $T$ is set at $5000$, and we conducted $1000$ independent trials. For each case at round $5000$, we report the MSE between $\theta$ and $\hat{\theta}$, the standard deviation of the squared error, the percentages of hypothesis rejections using $T$-statistics, and the coverage ratios. The results are presented in Tables~\ref{tab:res1}--\ref{tab:res3}.


We consider pilot sample sizes of \(100\), \(500\), or \(1500\) in the first stage for these two-stage methods. Full details appear in \citet{Meehan2022}. The total sample size is \(T=5000\), and the study is repeated \(1000\) times for each scenario. Tables~\ref{tab:res1}--\ref{tab:res3} compare the MSE, SMSE, rejection rates, and coverage (CR) across all methods.

The results generally show that the adaptive methods (A2IPW, ADR) achieve competitive or lower MSE compared to the stratification-based approaches. The performance advantage is especially noticeable when the pilot stage is small (for example, \(100\) units), since the two-stage stratification designs have less information to guide allocations in the second stage. However, when the pilot grows larger (such as \(1500\)), the performance of the two-stage methods can improve and sometimes approach that of the fully adaptive methods. The cross-validated tree often performs better than the fixed-depth tree, illustrating the importance of tuning the tree depth to capture heterogeneous effects. In scenarios with substantial heteroskedasticity (Model~3), the adaptive weighting in A2IPW and ADR generally yields more stable and accurate estimates relative to the stratification-based methods.


Overall, these findings confirm that adaptive procedures, such as A2IPW and ADR, effectively leverage ongoing data to update treatment allocation probabilities, leading to improved estimation and greater testing power under nonzero effects. Stratification approaches, especially with sufficient pilot data, can also offer good performance but may be more sensitive to the initial estimation of variances or tree-based splits. The choice of method should be guided by practical considerations, including available pilot data, computational resources, and how quickly strong evidence of treatment differences is needed.

\clearpage 

\begin{table}[h]
    \centering
     \caption{Simulation results of Model~1.}
         \label{tab:res1}
     \scalebox{0.95}{
    \begin{tabular}{ll|rrrrrr}
    \toprule
    \multicolumn{2}{c}{Method} & \multicolumn{6}{c}{Criteria} \\
     & Nuisance & MSE & SMSE & R/R & CR & BC & LIL \\
    \midrule
    \multirow{3}{*}{A2IPW} & KNN & 0.050 & 0.032 & 0.638 & 0.972 & 2228.000 & 2529.696 \\
     & NW & 0.056 & 0.037 & 0.520 & 0.952 & 2392.000 & 2798.364 \\
     & NN & 0.056 & 0.040 & 0.516 & 0.958 & 2394.000 & 2899.466 \\
     \cline{1-8}
    \multirow{3}{*}{ADR} & KNN & 0.050 & 0.032 & 0.644 & 0.970 & 2234.000 & 2549.920 \\
     & NW & 0.052 & 0.032 & 0.636 & 0.950 & 2260.000 & 2596.436 \\
     & NN & 0.061 & 0.043 & 0.616 & 0.932 & 2430.000 & 2912.940 \\
     \cline{1-8}
    \multirow{3}{*}{A2IPW (Oracle)} & KNN & 0.049 & 0.033 & 0.658 & 0.972 & 2208.000 & 2512.802 \\
     & NW & 0.051 & 0.031 & 0.602 & 0.958 & 2212.000 & 2596.902 \\
     & NN & 0.055 & 0.039 & 0.534 & 0.954 & 2416.000 & 2856.816 \\
     \cline{1-8}
    RCT &  & 0.056 & 0.039 & 0.540 & 0.966 & 2330.000 & 2819.306 \\
    \hline
    \hline
    & Pilot & MSE & SMSE & R/R & CR & &  \\
    \hline
    \multirow{3}{*}{Ad-hoc} & 100 & 0.053 & 0.037 & 0.610 & 0.964 &  &  \\
     & 500 & 0.053 & 0.037 & 0.592 & 0.964 &  &  \\
     & 1500 & 0.056 & 0.038 & 0.554 & 0.946 &  &  \\
    \cline{1-8}
    \multirow{3}{*}{Ad-hoc Neyman} & 100  & 0.054 & 0.038 & 0.564 & 0.962 &  &  \\
     & 500  & 0.052 & 0.037 & 0.626 & 0.946 &  &  \\
     & 1500  & 0.056 & 0.037 & 0.600 & 0.932 &  &  \\
    \cline{1-8}
    \multirow{3}{*}{Strat. Tree} & 100  & 0.059 & 0.044 & 0.508 & 0.964 &  &  \\
     & 500  & 0.053 & 0.036 & 0.604 & 0.944 &  &  \\
     & 1500  & 0.054 & 0.034 & 0.594 & 0.946 &  &  \\
    \cline{1-8}
    \multirow{3}{*}{CV Tree} & 100  & 0.054 & 0.035 & 0.574 & 0.938 &  &  \\
     & 500  & 0.053 & 0.035 & 0.594 & 0.950 &  &  \\
     & 1500  & 0.053 & 0.034 & 0.582 & 0.948 &  &  \\
    \bottomrule
    \end{tabular}
    }
\end{table}

\begin{table}[h]
    \centering
     \caption{Simulation results of Model~2.}
         \label{tab:res2}
     \scalebox{0.95}{
\begin{tabular}{ll|rrrrrr}
\toprule
    \multicolumn{2}{c}{Method} & \multicolumn{6}{c}{Criteria} \\
     & Nuisance & MSE & SMSE & R/R & CR & BC & LIL \\
\midrule
\multirow{3}{*}{A2IPW} & KNN & 0.052 & 0.034 & 0.302 & 0.958 & 2748.000 & 3446.008 \\
 & NW & 0.060 & 0.043 & 0.266 & 0.946 & 2868.000 & 3540.720 \\
 & NN & 0.056 & 0.040 & 0.258 & 0.952 & 2890.000 & 3645.408 \\
\cline{1-8}
\multirow{3}{*}{ADR} & KNN & 0.052 & 0.034 & 0.320 & 0.954 & 2704.000 & 3398.308 \\
 & NW & 0.059 & 0.043 & 0.472 & 0.926 & 2550.000 & 3194.026 \\
 & NN & 0.084 & 0.074 & 0.632 & 0.818 & 2756.000 & 3473.814 \\
\cline{1-8}
\multirow{3}{*}{A2IPW (Oracle)} & KNN & 0.052 & 0.034 & 0.324 & 0.962 & 2744.000 & 3431.802 \\
 & NW & 0.053 & 0.034 & 0.306 & 0.946 & 2812.000 & 3382.768 \\
 & NN & 0.055 & 0.037 & 0.288 & 0.940 & 2878.000 & 3604.940 \\
\cline{1-8}
RCT &  & 0.058 & 0.040 & 0.264 & 0.948 & 2862.000 & 3511.908 \\
    \hline
    \hline
    & Pilot & MSE & SMSE & R/R & CR & &  \\
    \hline
\multirow{3}{*}{Ad-hoc} & 100 & 0.050 & 0.036 & 0.286 & 0.982 &  &  \\
 & 500 & 0.054 & 0.038 & 0.294 & 0.970 &  &  \\
 & 1500 & 0.052 & 0.037 & 0.294 & 0.976 &  &  \\
\cline{1-8}
\multirow{3}{*}{Ad-hoc Neyman} & 100 & 0.054 & 0.039 & 0.278 & 0.966 &  &  \\
 & 500 & 0.052 & 0.036 & 0.276 & 0.970 &  &  \\
 & 1500 & 0.051 & 0.035 & 0.334 & 0.982 &  &  \\
\cline{1-8}
\multirow{3}{*}{Strat. Tree} & 100 & 0.057 & 0.046 & 0.246 & 0.972 &  &  \\
 & 500 & 0.049 & 0.034 & 0.276 & 0.976 &  &  \\
 & 1500 & 0.051 & 0.034 & 0.340 & 0.968 &  &  \\
\cline{1-8}
\multirow{3}{*}{CV Tree} & 100 & 0.051 & 0.035 & 0.304 & 0.976 &  &  \\
 & 500 & 0.050 & 0.033 & 0.326 & 0.966 &  &  \\
 & 1500 & 0.049 & 0.034 & 0.320 & 0.978 &  &  \\
\cline{1-8}
\bottomrule
\end{tabular}
}
\end{table}

\begin{table}[h]
    \centering
     \caption{Simulation results of Model~3.}
         \label{tab:res3}
     \scalebox{0.95}{
\begin{tabular}{ll|rrrrrr}
\toprule
    \multicolumn{2}{c}{Method} & \multicolumn{6}{c}{Criteria} \\
     & Nuisance & MSE & SMSE & R/R & CR & BC & LIL \\
\midrule
\multirow{3}{*}{A2IPW} & KNN & 0.082 & 0.054 & 0.314 & 0.944 & 2774.000 & 3504.320 \\
 & NW & 0.096 & 0.073 & 0.240 & 0.954 & 2886.000 & 3603.502 \\
 & NN & 0.093 & 0.068 & 0.238 & 0.948 & 2914.000 & 3523.114 \\
\cline{1-8}
\multirow{3}{*}{ADR} & KNN & 0.081 & 0.052 & 0.304 & 0.950 & 2720.000 & 3571.318 \\
 & NW & 0.079 & 0.052 & 0.332 & 0.952 & 2672.000 & 3516.880 \\
 & NN & 0.088 & 0.058 & 0.276 & 0.946 & 2854.000 & 3508.498 \\
\cline{1-8}
\multirow{3}{*}{A2IPW (Oracle)} & KNN & 0.082 & 0.055 & 0.302 & 0.934 & 2740.000 & 3537.698 \\
 & NW & 0.079 & 0.053 & 0.306 & 0.960 & 2692.000 & 3459.326 \\
 & NN & 0.088 & 0.062 & 0.254 & 0.954 & 2852.000 & 3606.298 \\
\cline{1-8}
RCT &  & 0.087 & 0.063 & 0.270 & 0.960 & 2816.000 & 3577.846 \\
    \hline
    \hline
    & Pilot & MSE & SMSE & R/R & CR & &  \\
    \hline
\multirow{3}{*}{Ad-hoc} & 100 & 0.083 & 0.058 & 0.284 & 0.950 &  &  \\
 & 500 & 0.084 & 0.062 & 0.284 & 0.954 &  &  \\
 & 1500 & 0.080 & 0.055 & 0.314 & 0.976 &  &  \\
\cline{1-8}
\multirow{3}{*}{Ad-hoc Neyman} & 100 & 0.083 & 0.061 & 0.274 & 0.956 &  &  \\
 & 500 & 0.083 & 0.059 & 0.278 & 0.956 &  &  \\
 & 1500 & 0.078 & 0.056 & 0.272 & 0.970 &  &  \\
\cline{1-8}
\multirow{3}{*}{Strat. Tree} & 100 & 0.098 & 0.073 & 0.230 & 0.960 &  &  \\
 & 500 & 0.086 & 0.063 & 0.294 & 0.946 &  &  \\
 & 1500 & 0.081 & 0.057 & 0.312 & 0.962 &  &  \\
\cline{1-8}
\multirow{3}{*}{CV Tree} & 100 & 0.081 & 0.060 & 0.292 & 0.972 &  &  \\
 & 500 & 0.082 & 0.057 & 0.256 & 0.950 &  &  \\
 & 1500 & 0.077 & 0.054 & 0.266 & 0.976 &  &  \\
\cline{1-8}
\bottomrule
\end{tabular}
}
\end{table}

\begin{table}[h]
    \centering
     \caption{Simulation results of Model~1 when the null hypothesis is true ($\theta_0 = 0$).}
         \label{tab:res1_null}
     \scalebox{0.95}{
\begin{tabular}{ll|rrrrrr}
\toprule
 &  & MSE & SMSE & R/R & CR & BC & LIL \\
Method & Nuisance &  &  &  &  &  &  \\
\midrule
\multirow[t]{3}{*}{A2IPW} & KNN & 0.049 & 0.032 & 0.028 & 0.972 & 3228.000 & 4091.740 \\
 & NW & 0.056 & 0.038 & 0.044 & 0.956 & 3278.000 & 4099.412 \\
 & NN & 0.057 & 0.041 & 0.046 & 0.954 & 3280.000 & 4033.838 \\
\cline{1-8}
\multirow[t]{3}{*}{ADR} & KNN & 0.050 & 0.032 & 0.030 & 0.970 & 3246.000 & 4131.160 \\
 & NW & 0.052 & 0.033 & 0.048 & 0.952 & 3296.000 & 4052.690 \\
 & NN & 0.061 & 0.044 & 0.066 & 0.934 & 3354.000 & 4054.154 \\
\cline{1-8}
RCT &  & 0.056 & 0.042 & 0.044 & 0.956 & 3368.000 & 4136.404 \\
\cline{1-8}
\bottomrule
\end{tabular}
}
\end{table}

\begin{table}[h]
    \centering
     \caption{Simulation results of Model~2 when the null hypothesis is true ($\theta_0 = 0$).}
         \label{tab:res2_null}
     \scalebox{0.95}{
\begin{tabular}{ll|rrrrrr}
\toprule
 &  & MSE & SMSE & R/R & CR & BC & LIL \\
Method & Nuisance &  &  &  &  &  &  \\
\midrule
\multirow[t]{3}{*}{A2IPW} & KNN & 0.052 & 0.034 & 0.042 & 0.958 & 3180.000 & 4122.806 \\
 & NW & 0.061 & 0.043 & 0.052 & 0.948 & 3300.000 & 4264.980 \\
 & NN & 0.056 & 0.040 & 0.050 & 0.950 & 3352.000 & 4313.950 \\
\cline{1-8}
\multirow[t]{3}{*}{ADR} & KNN & 0.052 & 0.034 & 0.046 & 0.954 & 3252.000 & 4120.868 \\
 & NW & 0.059 & 0.043 & 0.070 & 0.930 & 3176.000 & 4149.514 \\
 & NN & 0.085 & 0.074 & 0.190 & 0.810 & 3314.000 & 4249.732 \\
\cline{1-8}
RCT &  & 0.055 & 0.038 & 0.036 & 0.964 & 3276.000 & 4109.586 \\
\cline{1-8}
\bottomrule
\end{tabular}
}
\end{table}

\begin{table}[h]
    \centering
     \caption{Simulation results of Model~3 when the null hypothesis is true ($\theta_0 = 0$).}
         \label{tab:res3_null}
     \scalebox{0.95}{
\begin{tabular}{ll|rrrrrr}
\toprule
 &  & MSE & SMSE & R/R & CR & BC & LIL \\
Method & Nuisance &  &  &  &  &  &  \\
\midrule
\multirow[t]{3}{*}{A2IPW} & KNN & 0.082 & 0.054 & 0.056 & 0.944 & 3172.000 & 4209.466 \\
 & NW & 0.096 & 0.072 & 0.044 & 0.956 & 3202.000 & 4225.684 \\
 & NN & 0.095 & 0.069 & 0.064 & 0.936 & 3298.000 & 4208.704 \\
\cline{1-8}
\multirow[t]{3}{*}{ADR} & KNN & 0.081 & 0.052 & 0.050 & 0.950 & 3242.000 & 4235.044 \\
 & NW & 0.079 & 0.051 & 0.050 & 0.950 & 3344.000 & 4201.618 \\
 & NN & 0.088 & 0.058 & 0.046 & 0.954 & 3156.000 & 4202.234 \\
\cline{1-8}
RCT &  & 0.086 & 0.062 & 0.050 & 0.950 & 3234.000 & 4259.450 \\
\cline{1-8}
\bottomrule
\end{tabular}
}
\end{table}

\end{document}